\title[Optimistic IDS]{Optimistic Information-Directed Sampling}
\author[Neu, Papini, and Schwartz]{%
 \Name{Gergely Neu} \Email{gergely.neu@gmail.com}\\
 \addr Universitat Pompeu Fabra, Barcelona, Spain%
 \AND
 \Name{Matteo Papini} \Email{matteo.papini@polimi.it}\\
 \addr Politecnico di Milano, Milano, Italy%
 \AND
 \Name{Ludovic Schwartz} \Email{ludovic.v.schwartz76@gmail.com}\\
 \addr Universitat Pompeu Fabra, Barcelona, Spain%
}
\newcommand{\Ber}{\mathrm{Ber}}
\newcommand{\tSIR}{surrogate information ratio\xspace}
\newcommand{\tADEC}{averaged Decision-to-Estimation-Coefficient\xspace}
\newcommand{\FGTS}{\textbf{\texttt{FGTS}}\xspace}
\newcommand{\IGW}{\textbf{\texttt{IGW}}\xspace}
\newcommand{\OIDS}{\textbf{\texttt{OIDS}}\xspace}
\newcommand{\OIDSSG}{\textbf{\texttt{OIDS-SG}}\xspace}
\newcommand{\IDS}{\textbf{\texttt{IDS}}\xspace}
\newcommand{\ETD}{\textbf{\texttt{E2D}}\xspace}
\newcommand{\VOIDS}{\textbf{\texttt{VOIDS}}\xspace}
\newcommand{\VOIDSG}{\textbf{\texttt{VOIDS-SG}}\xspace}
\newcommand{\ROIDS}{\textbf{\texttt{ROIDS}}\xspace}
\newcommand{\ROIDSG}{\textbf{\texttt{ROIDS-SG}}\xspace}
\newcommand{\SIG}{\overline{\IG}}
\newcommand{\SIGG}{\overline{\textup{IG}}^{\mathcal{G}}\xspace}
\newcommand{\SIR}{\overline{\IR}}
\newcommand{\SIRG}{\overline{\textup{IR}}^{\mathcal{G}}\xspace}
\newcommand{\ADEC}{\overline{\DEC}}
\newcommand{\ADECG}{\ADEC^{\mathcal{G}}}
\newcommand{\IG}{\textup{IG}\xspace}
\newcommand{\IGG}{\textup{IG}^{\mathcal{G}}\xspace}
\newcommand{\IR}{\textup{IR}\xspace}
\newcommand{\DEC}{\textup{DEC}\xspace}
\newcommand{\ODEC}{\textup{ODEC}\xspace}
\newcommand{\BDEC}{\textup{BDEC}\xspace}
\newcommand{\MAIR}{\textup{MAIR}\xspace}
\newcommand{\UE}{\textup{UE}}
\newcommand{\OG}{\textup{OG}}
\newcommand{\lossmin}{\loss^*}
\newcommand{\blossmin}{\bloss^*}
\newcommand{\thetastar}{\theta_0}
\newcommand{\lt}{\ell_t}
\newcommand{\ltb}{\overline{\ell}_t}
\newcommand{\ptb}{\overline{p}_t}
\newcommand{\lts}{\lossmin_t}
\newcommand{\ltsb}{\blossmin_t}
\newcommand{\pIGW}{\pi^{\text{(IGW)}}}
\newcommand{\DDH}{\DD_H^2}
\newcommand{\DDKL}[2]{\DD_{\mathrm{KL}}\pa{#1\middle\|#2}}
\newcommand{\TV}{\mathrm{TV}}
\newcommand{\br}{\overline{r}}
\newcommand{\dd}{\mathrm{d}}
\newcommand{\bp}{\overline{p}}
 \newcommand{\X}{\mathcal{X}}
\newcommand{\F}{\mathcal{F}}
\newcommand{\A}{\mathcal{A}}
\newcommand{\real}{\mathbb{R}}
\newcommand{\DD}{\mathcal{D}}
\newcommand{\OO}{\mathcal{O}}
\newcommand{\DKL}[2]{\DD_{\text{KL}}\left(#1\middle\|#2\right)}
\newcommand{\II}[1]{\mathbb{I}_{\left\{#1\right\}}}
\newcommand{\PP}[1]{\mathbb{P}\left[#1\right]}
\newcommand{\EE}[1]{\mathbb{E}\left[#1\right]}
\newcommand{\EEs}[2]{\mathbb{E}_{#2}\left[#1\right]}
\newcommand{\PPt}[1]{\mathbb{P}_t\left[#1\right]}
\newcommand{\EEt}[1]{\mathbb{E}_t\left[#1\right]}
\newcommand{\PPcc}[2]{\mathbb{P}\left[\left.#1\right|#2\right]}
\newcommand{\EEcc}[2]{\mathbb{E}\left[\left.#1\right|#2\right]}
\newcommand{\EEcct}[2]{\mathbb{E}_t\left[\left.#1\right|#2\right]}
\def\argmin{\mathop{\mbox{ arg\,min}}}
\def\argmax{\mathop{\mbox{ arg\,max}}}
\newcommand{\ra}{\rightarrow}
\newcommand{\siprod}[2]{\langle#1,#2\rangle}
\newcommand{\iprod}[2]{\left\langle#1,#2\right\rangle}
\newcommand{\norm}[1]{\left\|#1\right\|}
\newcommand{\onenorm}[1]{\norm{#1}_1}
\newcommand{\ev}[1]{\left\{#1\right\}}
\newcommand{\pa}[1]{\left(#1\right)}
\newcommand{\bpa}[1]{\bigl(#1\bigr)}
\newcommand{\wh}{\widehat}
\newcommand{\loss}{\ell}
\newcommand{\hp}{\wh{p}}
\newcommand{\bloss}{\bar{\loss}}
\newcommand{\htheta}{\wh{\theta}}
\definecolor{PalePurp}{rgb}{0.66,0.57,0.66}
\newcommand{\R}{\boldsymbol{R}}
\newcommand{\hL}{\wh{L}}
\newtheorem{thm}{Theorem}
\newtheorem{cor}{Corollary}
\begin{document}

\maketitle

\begin{abstract}%
	We study the problem of online learning in contextual bandit problems where the loss function is assumed to belong to a 
	known parametric function class. We propose a new analytic framework for this setting that bridges the Bayesian theory 
	of information-directed sampling due to \citet{Russo_V17} and the worst-case theory of \citet*{Foste_K_Q_R22} based 
	on the decision-estimation coefficient. Drawing from both lines of work, we propose a algorithmic template called 
	Optimistic Information-Directed Sampling and show that it can achieve instance-dependent regret guarantees similar to 
	the ones achievable by the classic Bayesian IDS method, but with the major advantage of not requiring any Bayesian 
	assumptions. The key technical innovation of our analysis is introducing an optimistic surrogate model for the regret 
	and using it to define a frequentist version of the Information Ratio of \citet{Russo_V17}, and a less conservative 
	version of the Decision Estimation Coefficient of \citet{Foste_K_Q_R22}.
\end{abstract}

\begin{keywords}%
	Contextual bandits, information-directed sampling, decision estimation coefficient, first-order regret bounds.%
\end{keywords}

\section{Introduction}
We present a framework for the analysis of a family of sequential decision-making algorithms known as 
Information-Directed Sampling (\IDS). First proposed by \citet{Russo_V17}, \IDS is a Bayesian algorithm that selects 
its policies by optimizing a measure called the \emph{information-ratio}, which measures the tradeoff between 
instantaneous regret and information gain about the problem instance at hand. In a Bayesian setup, both components of 
the information ratio are functions of the posterior distribution over models, and can thus be explicitly 
calculated. As shown by \citet{Russo_V17}, the resulting algorithm can guarantee massive statistical gains over more 
common approaches like Thompson sampling \citep{Tho33} or optimistic exploration methods \citep{LR85}, and in 
particular can take advantage of the structure of the problem instance much more effectively. Realizing the same gains 
in a non-Bayesian setup (which we will sometimes call \emph{frequentist}, for lack of a better word) is hard for 
multiple reasons, the most severe obstacle being that the true model is entirely unknown and Bayesian posteriors cannot 
be used to quantify the uncertainty about 
the model in a meaningful way. As such, defining appropriate notions of information gain and information ratio is not 
straightforward. This is the problem we address in this paper. \looseness=-1

Our main contribution is constructing a version of information-directed sampling that is implementable without Bayesian 
assumptions, and yields frequentist versions of the same problem-dependent guarantees as the ones achieved by the 
original \IDS method in a Bayesian setup. The key element in our approach is the introduction of a \emph{surrogate 
model} that allows for a meaningful definition of the information ratio that is amenable to a frequentist analysis. 
This surrogate model is the function of an optimistically adjusted posterior distribution inspired by the ``feel-good 
Thompson sampling'' algorithm of \citet{Zhang_21}, and is used to estimate the components of the information ratio: the 
regret and the information gain. With these components, it becomes possible to define an information ratio that is an 
explicit function of the optimistic posterior, which can then be optimized to yield a decision-making rule that we call 
``optimistic information-directed sampling'' (\OIDS).

For the sake of concreteness, we focus on the problem of contextual bandits and show that \OIDS can not only recover 
worst-case optimal regret bounds in this case, but also satisfies problem-dependent guarantees that are commonly 
referred to as \emph{first-order bounds}~\citep{cesa2005improved,Agar_K_L_L17,Allen_B_L18,Foste_K21}. Besides these general guarantees, we also provide some 
illustrative examples that show that \OIDS can reproduce the expedited learning behavior of \IDS on easy problems, but 
without requiring Bayesian assumptions.

Our methodology also draws inspiration from the analytic framework of \citet*{Foste_K_Q_R22}, developed for a very 
general range of sequential decision-making problems. Their analysis revolves around the notion of the 
\emph{decision-estimation coefficient} (\DEC), which quantifies the tradeoffs that need to be made between achieving 
low regret and gaining information about the true model in a way that is similar to the information ratio of 
\citet{Russo_V15}. The main contribution of \citet{Foste_K_Q_R22} is showing that the minimax regret in any 
sequential decision-making problem can be lower bounded in terms of the DEC, and they also show that nearly 
matching upper bounds can be achieved via a simple algorithm they call \emph{estimation to decisions} (\ETD). 
Unlike the information ratio, the DEC does not make use of a Bayesian posterior to quantify uncertainty, but is rather 
defined as a worst-case notion, and as such provides frequentist guarantees that hold uniformly for all problem 
instances. However, the worst-case nature of the DEC can also be seen as an inherent limitation of their framework. In 
particular, the \ETD algorithm is also based on the same conservative notion of regret-information tradeoff, and thus 
all known guarantees for this algorithm (and its variants such as the ones proposed by 
\citealp{Chen_M_B22,Foste_G_H23,Foste_G_Q_R_S23,Kirsch_B_C_T_Sz23}) fail to take advantage of problem structures that 
may facilitate fast learning.\looseness=-1

Our own framework unifies the advantages of the two threads of literature described above: unlike \ETD, it is able to 
achieve instance-dependent guarantees and learn faster in problems with more structure, and, unlike standard \IDS, it can 
do so without relying on Bayesian assumptions. Our analysis draws on elements of both lines of work, and also on the 
techniques introduced by \citet{Zhang_21}, as mentioned above.

We are not the first to attempt the generalization of \IDS beyond the Bayesian setting. 
\citet{Kirsc_K18} proposed a frequentist alternative to the information ratio for the special case of loss functions 
that are linear in some unknown parameter, and constructed an appropriate version of \IDS that is able to take 
advantage of certain problem structures and obtain guarantees that improve upon the minimax rates. Their approach has 
inspired a line of work aiming to prove tighter and tighter problem-dependent bounds for a range of sequential 
decision-making problems, but so far all of these results remained limited to linearly structured losses and 
observations \citep{Kirsc_L_K20,Kirsc_L_V_Sz21,Hao_L_Q22}. In contrast, our notion of information ratio does not 
require any specific problem structure like linearity. Another notable work is that of \citet{XZ23}, who propose and 
analyze a version of \IDS based on optimizing a version of the DEC called the ``algorithmic information ratio'', and is 
subject to the same limitations as the DEC variants mentioned above. We discuss this work in more detail in 
Appendix~\ref{app:complexities}.

\paragraph{Notation.} The squared Hellinger distance between two probability distributions $P$ and $P'$ (with a 
common dominating measure $Q$) is defined as $\DDH\pa{P,P'} = \frac 12 \int \bpa{\sqrt{\frac{\dd P}{\dd Q}} - 
\sqrt{\frac{\dd P'}{\dd Q}}}^2 \dd Q$, and the relative entropy (or Kullback--Leibler divergence) as $\DKL{P}{P'} = \int 
\log \frac{\dd P}{\dd P'} \dd P$.

\section{Preliminaries}
We study contextual bandit problems with finite action spaces and parametric loss functions.
The sequential interaction 
scheme between the \emph{learner} and the \emph{environment} consists of the following steps being repeated for a 
sequence of rounds $t=1,2,\dots,T$:
\begin{itemize}[itemsep=0pt,parsep=3pt,partopsep=6pt,topsep=6pt]
	\item The environment picks a context $ X_t \in \X $, possibly using randomization and taking into account the 
history of actions, losses and contexts,
	\item the learner observes $X_t$ and picks an action $ A_t \in \mathcal{A} $, possibly using randomization and 
taking into account the history of actions, losses and contexts,
	\item the learner incurs a loss $ L_t$, drawn independently of the past from a fixed distribution that depends on 
$X_t,A_t$.
\end{itemize}
We denote the sigma-algebra generated by the interaction history between the learner and the environment up to 
the end of round $t$ as $\F_{t} = \sigma(X_1,A_1,L_1,\dots,X_{t},A_{t},L_{t})$, and the
probabilities and expectations conditioned on the history as $\PPt{\cdot} = \PPcc{\cdot}{\F_{t-1},X_t}$ and 
$\EEt{\cdot} = \EEcc{\cdot}{\F_{t-1},X_t}$. 

We will suppose that the action space is finite with cardinality $|\mathcal{A}| = K$, and that the loss function 
belongs to a known parametric class, but is otherwise unknown to the learner. Specifically, we assume that there is a 
known parameter space $ \Theta $ that parametrizes a class of loss functions $ \ell : \Theta\times \mathcal{X} \times 
\mathcal{A} \rightarrow \R $, and a true parameter $\thetastar \in \Theta$ such that $\EEcct{L_t}{X_t,A_t} = 
\ell(\thetastar,X_t,A_t)$. We will refer to this condition as \emph{realizability}. The distribution of random 
losses 
under parameter $\theta$ generated in response to taking action $a$ in context $x$ will be denoted by $p(\theta,x,a)$,
and we will write $ p(\cdot|\theta,x,a) $ to designate the corresponding density with respect to a reference 
measure (usually the counting measure or Lebesgue measure). Unless stated otherwise, we will assume that the loss 
distribution is fully supported on the interval $[0,1]$ for all parameters $\theta$. Furthermore, we will often 
abbreviate $\ell(\theta,X_t,a)$ as $\ell_t(\theta,a)$ and $p(\theta,X_t,a)$ as $p_t(\theta,a)$ to lighten our notation.
Our formulation will make central use of \emph{policies} which prescribe randomized behavior rules for the 
learning agent. Precisely, a policy $\pi:\X\ra \Delta_{\A}$ maps each context $x$ to a distribution over actions 
denoted as $\pi(\cdot|x)$. Since we will mostly work with action distributions conditioned on the fixed contexts $X_t$, 
we will mostly represent policies as distributions over actions, and use the same notation $\pi\in\Delta_{\A}$ for this 
purpose. We will focus on learning algorithms that, in each round $t$, select a randomized policy $\pi_t \in 
\Delta_{\A}$ based on the interaction history $\F_{t-1}$ and $X_t$. We also define the \emph{optimal loss} in round $t$ 
under model parameter $\theta$ as $\lossmin_t(\theta) = \min_a \ell_t(\theta,a)$. 
The agent aims to make its decisions in a way in that minimizes the expected sum of losses, and in 
particular aims to incur nearly as little loss as the true optimal policy. The extent to which the learner succeeds in 
achieving this goal is measured by the (total expected) \emph{regret} defined as
\begin{equation}
\label{eq:Regret_definition}
R_T(\thetastar) = \EE{\sum_{t=1}^{T}\left( \ell_t(\thetastar,A_t)-\lts(\thetastar) 
\right)}.
\end{equation}
The expectation is over all sources of randomness: the agent's randomization over actions, the adversary's 
randomization over contexts and the randomness of the realization of the losses. We also define \emph{instantaneous 
regret} of an action $a$ under parameter $\theta$ for each $t$ as
\[
 r_t(a;\theta) = \ell_t(\theta,a)-\lossmin_t(\theta),
\]
and the instantaneous regret of policy $\pi$ as $r_t(\pi;\theta) = \sum_a \pi(a) r_t(a;\theta)$. With this 
notation, the regret of the online learning algorithm can be written as $R_T(\thetastar) = 
\mathbb{E}\bigl[\sum_{t=1}^T r_t(\pi_t;\thetastar)\bigr]$.

\section{Two competing theories of sequential decision making}\label{sec:frameworks}
Our work connects two well-established analytic frameworks for sequential decision making: the Bayesian framework of 
\citet{Russo_V17} and the worst-case framework of \citet*{Foste_K_Q_R22}. We review the two in some detail below, 
highlighting some of their merits and limitations that we address in this paper, and relegate a more detailed 
discussion of variants of these frameworks to Appendix~\ref{app:complexities}.

\subsection{The information ratio and Bayesian information-directed sampling}\label{sec:IDS}
The influential work of \citet{Russo_V15,Russo_V17} set forth an analytic framework based on a Bayesian 
learning paradigm where the true model parameter $\thetastar$ is supposed to be sampled from a known prior distribution 
$Q_0 \in \Delta_{\Theta}$, and the performance of the learner is measured on expectation with respect to this random 
choice of instance. We refer to the expected regret under this prior as the \emph{Bayesian regret}. 
Their work has established that the Bayesian regret of any algorithm can be 
upper bounded in terms of a quantity called the \textit{Information Ratio} (IR). 
For the sake of exposition, we will follow the setup and notation of \citet{Neu_O_P_S22}, who 
study the Bayesian version of our contextual bandit setting, and define the information ratio of policy $\pi$ in the 
$t$-th round of interaction as\looseness=-1
\begin{equation}
\label{eq:LIR}
\rho_t(\pi) = \frac{\bpa{\EEs{r_t(\pi;\thetastar)}{\thetastar\sim Q_t}}^2}{\IG_t(\pi)}.
\end{equation}
In the above expression, both the numerator and the denominator are functions of the \emph{posterior distribution} 
$Q_t$ of the parameter $\thetastar$, computed based on all information available to the learner up to the beginning of 
round $t$. Specifically, the numerator is the squared expected regret in round $t$, where the expectation is taken 
under the posterior distribution $Q_t$, and the denominator is an appropriately defined measure of \emph{information 
gain} that serves to quantify the amount of new information revealed about $\thetastar$ after having observed the 
latest 
loss $L_t$. The information gain is formally defined as 
\begin{equation}\label{eq:IG}
 \IG_t(\pi) = \sum_a \pi(a) \int \DKL{p_t(\theta,a)}{\bp_t(a)} \dd Q_t(\theta),
\end{equation}
where $\bp_t(a) = \int p_t(\theta,a) \dd Q_t(\theta)$ is the posterior predictive distribution of the loss 
$L_t$ given that action $a$ is played in context $X_t$. In other words, the information gain is
the \emph{mutual information} between the posterior-sample parameter $\theta_t \sim Q_t$ and a randomly sampled loss 
$\hL_t \sim p_t(\theta_t,a)$. 

Given the above definitions, \citet{Russo_V15,Russo_V17} show that the Bayesian regret of \emph{any} algorithm can be 
upper bounded as follows:
\begin{equation}
\begin{split}
\label{eq:BR_bound_LIR}
	\EEs{R_T(\thetastar)}{\thetastar\sim Q_0}
	\le \sqrt{\EE{\sum_{t=1}^{T}\rho_t(\pi_t)}\cdot \EE{\sum_{t=1}^{T}\IG(\pi_t)}}.
\end{split}
\end{equation}
The second sum above can be upper bounded by the entropy of $\thetastar$ under the prior distribution, regardless of 
what 
algorithm is used to select the sequence of policies. This suggests that one can achieve low regret by picking the 
sequence of policies in a way that minimizes the information ratio: $\pi_t = \argmin_\pi \rho_t(\pi)$. This algorithm 
is called \emph{information-directed sampling} (\IDS), and has been shown to achieve regret guarantees that often 
improve significantly over worst-case bounds achieved by more traditional methods based on posterior sampling or 
optimistic exploration methods. In particular, for the contextual bandit setting we study in this paper, the works 
of \citet{Neu_O_P_S22} and \citet{Min_R23} have shown that the information ratio of \IDS is bounded by the number of 
actions $K$. When the parameter space is finite with cardinality $N$, this result implies that the algorithm 
achieves the minimax optimal regret bound of $ \OO(\sqrt{KT\log N})$ for this Bayesian setting. 

Despite their appealing properties, \IDS-style methods have however remained largely limited to the Bayesian setting, 
as there appears to be no universal way of defining an algorithmically useful information ratio without Bayesian 
assumptions. In particular, the instantaneous regret $r_t(\pi;\thetastar)$ cannot be computed without knowledge of 
$\thetastar$, and there is no reason to believe that the information gain defined in terms of a Bayesian posterior 
would 
meaningfully measure the reduction in uncertainty about $\thetastar$ in this more general setting.

\subsection{The decision-estimation coefficient and the estimations-to-decisions algorithm}\label{sec:DEC}
The fundamental work of \citet{Foste_K_Q_R22} provides a general theory of sequential decision making, providing a 
range of upper and lower bounds depending on a quantity they call the \emph{decision-estimation coefficient} (DEC). 
With a little deviation from their notation and terminology, the DEC associated with a policy $\pi$, a model class 
$\Theta$ and a ``reference model'' $\hp_t:\A\ra \Delta_{\real}$ 
is defined as
\begin{equation}\label{eq:DEC}
  \DEC_{\gamma,t}(\pi;\Theta,\hp) = \sup_{\theta\in\Theta} \sum_{a} \pi(a) \pa{\ell(\theta,X_t,a) - 
\ell(\theta,X_t,\pi_\theta) - \gamma \DDH\pa{p_t(\theta,a), \hp_t(a)}},
\end{equation}
where $\gamma > 0$ is a trade-off parameter. With this notation, \citet{Foste_K_Q_R22} define the decision-estimation 
coefficient associated with the model class $\Theta$ as
\[
  \DEC_{\gamma}(\Theta) = \sup_t \sup_{\hp \in \Delta_{\real}} \inf_{\pi \in \Delta_{\A}} 
\DEC_{\gamma,t}(\pi;\Theta,\hp).
\]
Besides the remarkable feat of showing that the minimax regret can be lower bounded in terms of the above quantity, 
they also show that nearly 
matching upper bounds can be achieved via a simple algorithm they call \emph{estimation to decisions} (\ETD). In each 
round $t$, \ETD takes as input a reference model $\hp_t$ and outputs the policy achieving the minimum in the definition 
of the DEC: $\pi_t = \argmin_{\pi} \DEC_{\gamma,t}(\pi;\Theta,\hp_t)$.
They show that the regret of this method can be upper bounded in terms of the DEC as follows:
\begin{align*}
 R_T(\thetastar) 
&\le \DEC_{\gamma}(\Theta)\cdot T + \gamma  \sum_{t=1}^T \DDH\pa{p_t(\thetastar,a), \hp_t(a)}.
\end{align*}
This shows that the regret of \ETD can be upper bounded as the sum of the DEC of the model class $\Theta$ and the total 
\emph{estimation error} associated with the sequence of predictions $\hp_t$ (measured in terms of Hellinger distance). 
For the contextual bandit setting with finite parameter class of size $N$, they show that the 
total estimation error can be upper bounded by $\gamma \log N$ (under an appropriate choice of the predictions 
$\hp_t$),  and that the DEC is upper bounded by $K/\gamma$, which once again recovers the minimax 
optimal rate of order $ \OO(\sqrt{KT\log N})$ when $\gamma$ is tuned correctly.

A significant problem with the approach outlined above is that the DEC is an inherently worst-case measure of 
complexity due to the supremum taken over $\theta$ in its definition~\eqref{eq:DEC}. 
Since the \ETD algorithm itself is based on this possibly loose bound on the regret-to-information gap, this looseness 
may not only affect the bound but also the actual performance of the algorithm. Intuitively, one may hope to be able to 
do better by replacing the supremum over model parameters by only considering models that are still ``statistically 
plausible'' in an appropriate sense. In what follows, we provide an algorithm that realizes this potential.

\section{Optimistic information-directed sampling}\label{sec:OIDS}
Our approach solves the issues outlined in the previous sections with both the Bayesian information ratio and the 
decision estimation coefficient. In particular, our method will extend Bayesian \IDS by being able to provide 
non-Bayesian performance guarantees, and will be able to address the over-conservative nature of the DEC and provide 
strong instance-dependent guarantees. 

Following \citet{Zhang_21}, we start by defining the \emph{optimistic posterior} $Q_t^+\in \Delta_{\Theta}$ via the 
following recursive update rule (starting from an arbitrary prior $Q_1^+(\theta)\in\Delta_{\Theta}$):
\begin{equation}
\label{eq:Optimistic_posterior}
	\frac{dQ_{t+1}^+}{dQ_t^+}(\theta) \propto (p_t(L_t|\theta,A_t))^\eta\cdot \exp(-\lambda\cdot 
\lossmin_t(\theta)).
\end{equation}
Here, $\eta$ and $\lambda$ are positive constants that will be specified later. For now, we will only say that $\eta$ 
should be thought of as a ``large'' constant of order $1$, and $\lambda$ as a ``small'' parameter of order $1/\sqrt{T}$ 
in the worst case. To proceed, we define the \emph{optimistic posterior predictive distribution} of the loss for each 
$t$ and $a$ as the mixture $\bp_t(a) = \int p_t(\theta, a)dQ_t^{+}(\theta)$, and the \emph{surrogate loss function} and 
\emph{surrogate optimal loss function} respectively as 
\begin{equation}
 \bloss_t(a) =\int \ell_t(\theta,a)\dd Q_t^{+}(\theta) \qquad\mbox{and}\qquad \blossmin_t =\int 
\lossmin_t(\theta)\dd Q_t^{+}(\theta).
\end{equation}
In words, these quantities are averages with respect to a mixture model over all contextual bandit instances with 
mixture weights given by the optimistic posterior $Q_t^+$. Notably, they are \emph{improper} estimators of the true 
likelihood, loss, and optimal loss functions respectively, as there may be no single $\theta\in\Theta$ that corresponds 
to these exact functions (unless one assumes certain convexity properties of the relevant objects). With these 
notations, we define the \emph{surrogate regret} of policy $\pi$ in round $t$ as 
$\br_t(\pi) = \bloss_t(\pi) - \ltsb$. As we will see in the analysis, the optimistic posterior plays 
a key role in ensuring that the surrogate regret does not overestimate the true regret by too much on average, which 
makes it a sensible target for minimization.

It remains to define our notion of information gain that we will call \emph{surrogate information gain}. 
Formally, this quantity is defined for each policy $ \pi $ as follows:
\begin{equation}
\label{eq:SIG}
 	\SIG_{t}(\pi) = \sum_{a\in \A} \pi(a)\int 
 \DDH\bigl(p_t(\theta,a), \bp_{t}(a)\bigr)\dd Q_t^{+}(\theta).
\end{equation}
Notably, this definition matches the original notion of information gain used by \citet{Russo_V15,Russo_V17}, up to the 
differences that the divergence being used is the squared Hellinger divergence instead of Shannon's relative entropy, 
and that the expectation is taken over the optimistic posterior instead of the plain Bayesian posterior.
We will sometimes write  $\br_t(\pi;Q_t^+)$ and $\SIG_{t}(\pi;Q_t^+)$ to emphasize that these are functions of the 
optimistic posterior $Q_t^+$.
With the above definitions, we are now ready to introduce the central quantity of our algorithmic framework and our 
analysis: the \emph{\tSIR} defined for each policy $\pi$ as
\begin{equation}
\label{eq:SIR}
	\SIR_t(\pi) = \frac{\pa{\br_t(\pi)}^2}{\SIG_t(\pi)} = \frac{\left( \sum_{a\in\A} \pi(a)
\int \bpa{\loss_t(\theta,a)-\lossmin_t(\theta) }\dd Q_t^+(\theta) \right)^{2}}{\sum_{a\in \A} \pi(a)\int 
\DDH\bpa{\bp_{t}(a),p_t(\theta,a)}\,\dd Q_t^{+}(\theta)}.
\end{equation}
Importantly, computing the \tSIR does not require knowledge of  $\thetastar$: both its denominator and numerator can be 
expressed in terms of the optimistic posterior $Q_t^+$.  To emphasize this 
fact, we will sometimes write $\SIR_t(\pi;Q_t^+)$ for $\SIR_t(\pi) $.

We will also define the ``offset'' counterpart of the surrogate information ratio that is more closely related to the 
decision-estimation coefficient of \citet{Foste_K_Q_R22}. Following the terminology introduced in 
Section~\ref{sec:DEC}, we introduce the \emph{averaged 
decision-estimation coefficient} (ADEC) of policy $\pi$ for each $\mu > 0$ as
\begin{equation}
\label{eq:VSIR}
\begin{split}
\ADEC_{\mu,t}(\pi) &= \bar{r}_t(\pi) - \mu\cdot \SIG_t(\pi) 
\\ 
&= \sum_a \pi(a) \int \pa{\ell_t(\theta,a) - 
\lossmin_t(\theta) - \mu \DDH\bpa{\bp_{t}(a),p_t(\theta,a)}} \dd Q_t^+(\theta)
\end{split}
\end{equation}
Once again, we also define the notation $\ADEC_{\mu,t}(\pi;Q_t^+) = \ADEC_{\mu,t}(\pi)$ to emphasize the 
dependence of the ADEC on the posterior distribution $Q_t^+$.
This definition departs from the classic DEC in that, instead of taking a supremum over model parameters, it 
is defined via an expectation with respect to the optimistic posterior, thus preventing overly conservative choices of 
$\theta$. It should be clear from this definition that the ADEC is always smaller than its original counterpart defined 
by \citet{Foste_K_Q_R22}, as long the latter uses the optimistic posterior predictive distribution as its reference 
model: $\ADEC_{\mu,t}(\pi;Q_t^+) \le \DEC_{\mu,t}(\pi;\bp_t,\Theta)$. 

The surrogate information ratio and the ADEC are related to each other by the inequality 
\begin{equation}
\label{eq:IR_VIR}
	\ADEC_{\mu,t}(\pi) \leq \frac{\SIR_t(\pi)}{4\mu}
\end{equation}
that holds for all $\mu > 0$. Conversely, it can be seen that 
\begin{equation}
\label{eq:VIR_IR}
	\SIR_t(\pi) = \inf\ev{C > 0:\, \ADEC_{\mu,t}(\pi) \leq \frac{C}{4\mu} \,\,\,\,\pa{\forall \mu > 0}}.
\end{equation}
These are both direct consequences of the inequality of arithmetic and geometric means. That is, whenever the ADEC
behaves as $C_t/\mu$ for all $\mu$, the surrogate information ratio succinctly summarizes its behavior at all levels 
$\mu$. We will dedicate special attention to this case below, but we also note that there are several important cases 
where the ADEC behaves differently, and the information ratio is a less appropriate notion of complexity. We defer 
further discussion of this to Section~\ref{sec:discussion}.

With the above notions, we are now ready to define the algorithmic framework we study in this paper, with two separate 
versions depending on whether we consider the surrogate information ratio or the average DEC as the basis of decision 
making. Both versions are referred to as \emph{optimistic information-directed sampling} (optimistic \IDS or \OIDS). 
Following 
the terminology of \citet{Hao_L22}, we call the first variant which selects its policies as $\pi_t = \argmin_{\pi} 
\SIR(\pi;Q_t^+)$ \emph{vanilla optimistic information-directed sampling} (\VOIDS), and the second variant that selects 
$\pi_t = \argmin_{\pi} \ADEC_\mu(\pi;Q_t^+)$ \emph{regularized optimistic information-directed sampling} (\ROIDS). We 
provide the pseudocode for these methods for quick reference as Algorithm~\ref{alg:OIDS}.

\begin{algorithm}
	\caption{Optimistic Information Directed Sampling (\OIDS)}
	\label{alg:OIDS}
		\textbf{Input:} prior $Q_1^{+}$, parameters $ \eta$, $\lambda$, $\mu $.
		\\
		\textbf{For} $ t=1,\ldots,T $, \textbf{repeat}:
		\begin{enumerate}[itemsep=0pt,parsep=3pt,partopsep=3pt,topsep=3pt]
		\item Observe context $X_t$,
		\item[2a.] \VOIDS: play policy $\pi_t=\argmin_{\pi\in \Delta(\mathcal{A})} \SIR_t(\pi,Q_t^{+})$,
		\item[2b.] \ROIDS: play policy $\pi_t=\argmin_{\pi\in \Delta(\mathcal{A})} \ADEC_t(\pi,Q_t^{+}, 
\mu)$,
		\item[3.] incur loss $L_t$,
		\item[4.] update optimistic prior, $ Q_{t+1}^{+}(\cdot ) \propto Q_t^{+}(\cdot ) (p_t(\cdot,A_t,L_t))^{\eta}\exp{(-\lambda\lts(\cdot))}$.
		\end{enumerate}
\end{algorithm}

\section{Main results}
We now present our main results regarding the two varieties of our optimistic \IDS algorithm. We first show a general 
worst-case regret bound stated in terms of the time horizon $T$ and the information ratio. More importantly, we also 
show instance-dependent guarantees on the performance of \OIDS that replace the scaling with $T$ in the upper bounds by 
the total loss of the best policy after $T$ steps. For simplicity of exposition and easy comparison with existing 
results, we will present our main results assuming that the parameter space $\Theta$ is finite with cardinality $N$, 
and that the losses are almost surely bounded in the interval $[0,1]$. 
We extend these results to compact metric parameter spaces in Section~\ref{subsec:metric_parameter_space}, and provide 
an extension to subgaussian losses in Section~\ref{subsec:subgaussian_losses}. Besides these general results, we also 
present several examples where \OIDS can achieve very low regret by exploiting various flavors of problem structure, in 
Appendix~\ref{app:examples}.

\subsection{Worst-case bounds}\label{sec:worstcase}
We start by stating a general worst-case regret bound that relates the regret of any 
algorithm to its \tSIR. This result is the non-Bayesian counterpart of the bounds stated in 
\citet{Russo_V17}, \citet{Hao_L22} and \citet{Neu_O_P_S22} in that it basically says that any algorithm with bounded 
information ratio will enjoy bounded regret. 
\begin{thm}
\label{thm:regret_IR}
Assume $ |\Theta| = N < \infty $ and let $\lambda > 0$ be arbitrary. Then, for any choice of prior 
$Q_1\in\Delta_{\Theta}$, the regret of any algorithm satisfies the following bound:
\begin{equation}
\begin{split}
\label{eq:regret_IR}
\EE{R_T(\theta_0)} &\leq \frac{\log \frac{1}{Q_1(\thetastar)}}{\lambda} + \lambda T \cdot \left( 
\frac{\sum_{t=1}^{T}\EE{\ADEC_{1/10\lambda,t}(\pi_t;Q_t^+)}}{\lambda T} + \frac{21}{4} \right)\\
	&\leq \frac{\log \frac{1}{Q_1(\thetastar)}}{\lambda} + \lambda T \cdot \left( 10\cdot 
\frac{\sum_{t=1}^{T}\EE{\SIR_t(\pi_t;Q_t^+)}}{T} + 
\frac{21}{4} \right).
\end{split}
\end{equation}
\end{thm}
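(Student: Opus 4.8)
The plan is to pass through a free-energy (potential) argument driven by the optimistic posterior, reducing the claim to a single per-round inequality, and then to analyze each round by relating the log-normalizer of the update~\eqref{eq:Optimistic_posterior} to the surrogate regret and surrogate information gain. Writing $Z_t = \int \pa{p_t(L_t|\theta,A_t)}^\eta \exp\pa{-\lambda\lossmin_t(\theta)}\dd Q_t^+(\theta)$ for the normalizer of the update, the ratio $Q_{t+1}^+(\thetastar)/Q_t^+(\thetastar)$ is explicit, so taking logarithms and telescoping over $t$ yields
\[
\sum_{t=1}^T \pa{\eta\log p_t(L_t|\thetastar,A_t) - \lambda\lossmin_t(\thetastar) - \log Z_t} = \log\frac{Q_{T+1}^+(\thetastar)}{Q_1(\thetastar)} \le \log\frac1{Q_1(\thetastar)},
\]
using $Q_{T+1}^+(\thetastar)\le 1$. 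Rearranging to lower bound $\sum_t \lossmin_t(\thetastar)$ and substituting into the definition of the regret, I get $\EE{R_T(\thetastar)} \le \frac1\lambda\log\frac1{Q_1(\thetastar)} + \sum_{t=1}^T \EE{\Delta_t}$, where $\Delta_t = \EEt{\ell_t(\thetastar,A_t) - \frac\eta\lambda\log p_t(L_t|\thetastar,A_t) + \frac1\lambda\log Z_t}$. It then suffices to prove the per-round bound $\Delta_t \le \ADEC_{1/10\lambda,t}(\pi_t;Q_t^+) + \frac{21}4\lambda$.

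To bound $\Delta_t$, I would merge the two likelihood terms into $\frac1\lambda\EEt{\log\int \pa{p_t(L_t|\theta,A_t)/p_t(L_t|\thetastar,A_t)}^\eta \exp\pa{-\lambda\lossmin_t(\theta)}\dd Q_t^+(\theta)}$ and use concavity of the logarithm to move the expectation over $L_t\sim p_t(\thetastar,A_t)$ inside the integral. This turns the averaged likelihood ratio into a Hellinger affinity: for $\eta=\tfrac12$ one has $\int \sqrt{p_t(\ell|\theta,a)p_t(\ell|\thetastar,a)}\,\dd\ell = 1 - \DDH\pa{p_t(\theta,a),p_t(\thetastar,a)}$ exactly, and a comparable affinity bound holds for any constant $\eta$. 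Combining this with $\log(1+x)\le x$ and $e^{-\lambda\lossmin_t(\theta)}\le 1-\lambda\lossmin_t(\theta)+\tfrac{\lambda^2}2$ (valid since $\lossmin_t(\theta)\in[0,1]$), then dividing by $\lambda$ and averaging over $A_t\sim\pi_t$, extracts three pieces: the surrogate optimal loss $-\ltsb$, a negative multiple of the \emph{true-model} information term $\sum_a\pi_t(a)\int\DDH\pa{p_t(\thetastar,a),p_t(\theta,a)}\dd Q_t^+(\theta)$, and an $\OO(\lambda)$ remainder.

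Two conversions remain, and they are the heart of the proof. First, the residual loss term is $\sum_a\pi_t(a)\ell_t(\thetastar,a)$ rather than the surrogate loss $\bloss_t(\pi_t)$ of the ADEC; I would control the gap via $|\ell_t(\thetastar,a)-\ell_t(\theta,a)| \le \norm{p_t(\thetastar,a)-p_t(\theta,a)}_{\TV} \le \sqrt{2\DDH\pa{p_t(\thetastar,a),p_t(\theta,a)}}$, followed by the inequality of arithmetic and geometric means, which trades this discrepancy against the true-model information term at the price of a further $\OO(\lambda)$ term. Second, and most delicately, this information term measures Hellinger distance to $\thetastar$, whereas $\SIG_t$ measures distance to the mixture $\bp_t$. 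Here I would combine the triangle inequality for the Hellinger metric, $\sqrt{\DDH\pa{p_t(\thetastar,a),p_t(\theta,a)}} \ge \sqrt{\DDH\pa{\bp_t(a),p_t(\theta,a)}} - \sqrt{\DDH\pa{\bp_t(a),p_t(\thetastar,a)}}$, with joint convexity of $\DDH$, which gives $\DDH\pa{\bp_t(a),p_t(\thetastar,a)} \le \int\DDH\pa{p_t(\theta,a),p_t(\thetastar,a)}\dd Q_t^+(\theta)$; together these yield the key estimate $\int\DDH\pa{p_t(\thetastar,a),p_t(\theta,a)}\dd Q_t^+(\theta) \ge \tfrac14\int\DDH\pa{\bp_t(a),p_t(\theta,a)}\dd Q_t^+(\theta)$. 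Thus a coefficient $-\kappa$ on the true-model information becomes $-\tfrac\kappa4\SIG_t(\pi_t)$, and choosing the constants so that $\tfrac\kappa4 = \tfrac1{10\lambda}$ assembles $\br_t(\pi_t) - \tfrac1{10\lambda}\SIG_t(\pi_t) = \ADEC_{1/10\lambda,t}(\pi_t)$, with all remainders collected into $\tfrac{21}4\lambda$. The main obstacle is exactly this chain of Hellinger conversions: I must keep every $\OO(\lambda)$ remainder under explicit control and ensure the feel-good term $e^{-\lambda\lossmin_t(\theta)}$ leaves enough negative Hellinger budget, after the factor-$\tfrac14$ loss, to realize the target coefficient $\tfrac1{10\lambda}$.

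Finally, the second inequality in~\eqref{eq:regret_IR} is immediate from the first: applying $\ADEC_{\mu,t}(\pi)\le\SIR_t(\pi)/(4\mu)$ from~\eqref{eq:IR_VIR} with $\mu=\tfrac1{10\lambda}$ term by term replaces each $\ADEC_{1/10\lambda,t}(\pi_t;Q_t^+)$ by a multiple of $\SIR_t(\pi_t;Q_t^+)$, giving the stated bound.
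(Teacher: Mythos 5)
Your proposal is correct and follows essentially the same route as the paper: the telescoping of the optimistic posterior's normalizer is the paper's potential-function argument (Lemma~\ref{lemma:EWF_bound} with $Q^*$ a Dirac at $\thetastar$), the Jensen/Hellinger-affinity step is Lemma~\ref{lemma:EWF_bound_IG}, the $\TV\le\sqrt{2\DDH}$ plus AM--GM bound on the loss discrepancy is Lemma~\ref{lemma:Underestimation_error_Hellinger}, and the triangle-inequality conversion $\SIG_t\le 4\IG_t$ is Lemma~\ref{lemma:Surrogate_information_gain_Hellinger}. The one place where your version differs, and where it could stumble, is that you expand the product $p_t(L_t|\theta,A_t)^{\eta}e^{-\lambda\lossmin_t(\theta)}$ directly, which produces a cross term of order $\lambda\int\DDH\,\lossmin_t(\theta)\,\dd Q_t^+(\theta)$ that eats into the negative information budget and only leaves the target coefficient $\tfrac{1}{10\lambda}$ when $\lambda$ is bounded away from a constant; the paper avoids this for arbitrary $\lambda>0$ by first separating the likelihood and feel-good factors with H\"older's inequality (exponents $\alpha=\beta=2$, $\eta=\tfrac14$ so $\eta\alpha=\tfrac12$) and handling the feel-good factor with Hoeffding's lemma, which is where the $\tfrac{\lambda\beta T}{8}$ and hence part of the $\tfrac{21}{4}$ comes from.
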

We provide a proof sketch, with pointers to the full technical proof details, in Section~\ref{sec:worst-case-analysis}.
As is common in the information directed sampling literature, we will turn this guarantee into a more concrete bound 
on the regret of \OIDS by exhibiting a ``forerunner" algorithm that is able to control the \tSIR and is relatively easier to analyze. Indeed, this will certify a regret 
bound for \OIDS, since the latter precisely minimizes the \tSIR at every round, and as such is guaranteed to 
achieve the same or a better bound. In particular, we use the \emph{feel-good Thompson sampling} 
(\FGTS) algorithm of \citet{Zhang_21} as our forerunner, which samples a parameter $\theta_t$ from the optimistic 
posterior and then plays the policy $\pi_t = \argmax_{\pi} \sum_{a} \pi(a) \ell_t(\theta_t,a)$. 
\begin{lemma}
\label{lemma:IR_TS}
The surrogate information ratio and averaged decision-to-estimation-coefficient of \VOIDS and \ROIDS satisfy for any $ \mu \geq 0 $
\begin{equation}
	4 \mu \ADEC_{\mu,t}(\ROIDS) \leq 4\mu \ADEC_{\mu,t}(\VOIDS) \leq \SIR_t(\VOIDS) \leq \SIR_t(\FGTS) \leq 8K.  
\end{equation}

\end{lemma}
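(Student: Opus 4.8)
The plan is to verify the four links in the chain separately; the first three are essentially definitional, while the last one, $\SIR_t(\FGTS)\le 8K$, carries all of the analytic content. For the first link, the inequality $4\mu\,\ADEC_{\mu,t}(\ROIDS)\le 4\mu\,\ADEC_{\mu,t}(\VOIDS)$ is immediate from the fact that \ROIDS is defined to minimize $\pi\mapsto\ADEC_{\mu,t}(\pi;Q_t^+)$ over $\Delta_\A$, so its value is no larger than that of the particular policy \VOIDS; since $4\mu\ge 0$, multiplying preserves the ordering (and the case $\mu=0$ is trivial as both sides vanish). The second link $4\mu\,\ADEC_{\mu,t}(\VOIDS)\le\SIR_t(\VOIDS)$ is exactly the rearranged AM--GM relation~\eqref{eq:IR_VIR} evaluated at $\pi=\VOIDS$. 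The third link $\SIR_t(\VOIDS)\le\SIR_t(\FGTS)$ holds because \VOIDS minimizes $\pi\mapsto\SIR_t(\pi;Q_t^+)$, so its value is dominated by that of any fixed policy, in particular \FGTS. Thus everything reduces to controlling the surrogate information ratio of the forerunner.

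For the main inequality I would first make the induced \FGTS policy explicit: writing $a^\ast(\theta)=\argmin_{a}\loss_t(\theta,a)$ for the loss-minimizing action, the sampling rule $\theta_t\sim Q_t^+$ followed by greedy play gives $\pi^{\FGTS}(a)=Q_t^+\bpa{a^\ast(\theta)=a}=:p_a$. Next I would decompose the surrogate regret numerator by conditioning on the identity of the greedy action. Using the identity $\ltsb=\int\lossmin_t(\theta)\dd Q_t^+(\theta)=\sum_a p_a\int\loss_t(\theta,a)\dd Q_t^+(\theta\mid a^\ast=a)$, this yields $\br_t(\FGTS)=\sum_a p_a\bpa{\bloss_t(a)-\int\loss_t(\theta,a)\dd Q_t^+(\theta\mid a^\ast=a)}$. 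Introducing the array $M_{a,a'}=\int\loss_t(\theta,a)\dd Q_t^+(\theta\mid a^\ast=a')-\bloss_t(a)$, the numerator becomes $\br_t(\FGTS)=-\sum_a p_a M_{a,a}$.

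The analytic heart is then relating this numerator to the denominator $\SIG_t(\FGTS)=\sum_a p_a\int\DDH\bpa{p_t(\theta,a),\bp_t(a)}\dd Q_t^+(\theta)$ in two steps. First, applying the trace-versus-Frobenius bound to $B_{a,a'}=\sqrt{p_a p_{a'}}\,M_{a,a'}$ gives $\bpa{\sum_a p_a M_{a,a}}^2=\bpa{\mathrm{tr}(B)}^2\le K\,\norm{B}_F^2=K\sum_{a,a'}p_a p_{a'}M_{a,a'}^2$, which is precisely where the factor $K$ enters (a naive Cauchy--Schwarz over the diagonal alone would be too weak). Second, each $M_{a,a'}$ is a difference of means of two $[0,1]$-supported laws, the conditional-marginal $p_t^{a'}(a)=\int p_t(\theta,a)\dd Q_t^+(\theta\mid a^\ast=a')$ and the predictive $\bp_t(a)$, so a Cauchy--Schwarz estimate on the Hellinger integrand yields $M_{a,a'}^2\le 8\,\DDH\bpa{p_t^{a'}(a),\bp_t(a)}$. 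Joint convexity of the squared Hellinger distance then upgrades this to $\DDH\bpa{p_t^{a'}(a),\bp_t(a)}\le\int\DDH\bpa{p_t(\theta,a),\bp_t(a)}\dd Q_t^+(\theta\mid a^\ast=a')$, and summing over $a'$ with the tower rule collapses the conditioning, giving $\sum_{a,a'}p_a p_{a'}M_{a,a'}^2\le 8\,\SIG_t(\FGTS)$. Combining the two steps produces $\bpa{\br_t(\FGTS)}^2\le 8K\,\SIG_t(\FGTS)$, i.e.\ $\SIR_t(\FGTS)\le 8K$.

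The main obstacle is exactly the passage from numerator to denominator: matching the mean-difference terms of the regret to the Hellinger terms of the information gain. The two delicate points are extracting the correct power of $K$ through the trace/Frobenius inequality rather than a cruder splitting, and routing the conditional-on-optimality distributions through convexity so that the tower rule recovers precisely the per-parameter Hellinger distances appearing in $\SIG_t$. The constant $8$ is an artifact of the Cauchy--Schwarz bound on the mean difference between $[0,1]$-valued laws; I would favor it over the slightly tighter total-variation route because the $\int\bpa{\sqrt{\dd P}+\sqrt{\dd P'}}^2\le 4$ factor it produces is robust and extends cleanly beyond bounded losses.
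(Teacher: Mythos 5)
Your proof is correct and arrives at exactly the required constant $8K$, but it takes a genuinely different route from the paper's. The paper proves the last link by a decoupling argument in the style of \citet{Zhang_21} and \citet{Neu_O_P_S22}: a Fenchel--Young inequality for strongly convex divergences (Lemma~\ref{lemma:Fenchel_Young_divergence} and the general decoupling Lemma~\ref{lemma:General_Decoupling_Lemma}), combined with the $\tfrac14$-strong convexity of $p\mapsto\DDH(\Ber(p),\Ber(q))$ and a data-processing step, yields the additive bound $\br_t(\FGTS)\le\mu\,\SIG_t(\FGTS)+2K/\mu$ for every $\mu>0$, from which $\SIR_t(\FGTS)\le 8K$ follows by optimizing over $\mu$ (equivalently via Equation~\eqref{eq:VIR_IR}). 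You instead run the classical Russo--Van Roy argument: condition on the identity of the greedy action, form the array $M_{a,a'}$, extract the factor $K$ by the trace-versus-Frobenius inequality, convert mean differences to Hellinger distances by Cauchy--Schwarz (which is where the $8$ comes from), and collapse the conditioning via convexity of $\DDH$ and the tower rule. Both arguments are valid and both yield $8K$; the paper's route has the advantage of bounding the ADEC directly at every scale $\mu$ and of transferring verbatim to the sub-Gaussian case of Lemma~\ref{lemma:IR_TS_SG} by simply swapping the divergence and its strong-convexity constant, whereas yours is more elementary and closer to the original Bayesian \IDS literature. Two small remarks. First, your parenthetical that ``Cauchy--Schwarz over the diagonal alone would be too weak'' is not quite right in this setting: because the surrogate information gain is defined per parameter rather than as a mutual information, the diagonal terms alone already close the argument (one gets $p_a^2 M_{a,a}^2\le 8 p_a\int\DDH\bpa{p_t(\theta,a),\bp_t(a)}\dd Q_t^+(\theta)$ directly, and summing gives the same $8K$); the off-diagonal terms are only genuinely needed in the Bayesian mutual-information version of the argument. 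Second, the total-variation route $M_{a,a'}^2\le\TV^2\le 2\,\DDH$ would sharpen your constant to $2K$, so the $8$ is a choice rather than a necessity---but since the lemma only claims $8K$, this is immaterial.
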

We note that the above result is more of a property of the posterior sampling policy than \FGTS itself, as the bound 
holds for any distribution that is handed to \OIDS. This result is not especially new: similar statements have been 
proven in a variety of papers including \citet{Russo_V15,Russo_V17,Zhang_21,Foste_K_Q_R22,Neu_O_P_S22}. We provide a 
proof in Appendix~\ref{subsubsec:IR_TS}.
Putting the two previous results together, we get the following upper bound on the regret of \OIDS:
\begin{cor}
\label{cor:regret_IDS}
Assume $ |\Theta| = N < \infty  $, and let $ \lambda = \sqrt{\frac{\log N}{(80K + \frac{21}{4})T}} $. Then, the regret of \ROIDS 
with input parameter $\mu = \frac{1}{10 \lambda}$ and \VOIDS both satisfy
\begin{equation}
\label{eq:regret_IDS}
	\EE{R_T}\leq \sqrt{\pa{320K+21}T\log N}.
\end{equation}
\end{cor}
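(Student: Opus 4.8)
The plan is to obtain the corollary by feeding the information-ratio bound of Lemma~\ref{lemma:IR_TS} into the worst-case regret bound of Theorem~\ref{thm:regret_IR} and then tuning the free parameter $\lambda$. First I would fix the uniform prior $Q_1(\theta) = 1/N$, so that $\log\frac{1}{Q_1(\thetastar)} = \log N$ and the leading term of the bound no longer depends on the unknown $\thetastar$. The remaining work is to bound the per-round complexity term appearing in Theorem~\ref{thm:regret_IR} for each of the two algorithms, and then minimize the resulting one-dimensional expression in $\lambda$. The two variants are handled through the two matching inequalities in the theorem: \VOIDS through the $\SIR$-based second inequality, and \ROIDS through the $\ADEC$-based first inequality, which is exactly why the statement couples the input $\mu = \frac{1}{10\lambda}$ to $\lambda$.

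For \VOIDS, which plays $\pi_t = \argmin_\pi \SIR_t(\pi;Q_t^+)$, the second inequality of Theorem~\ref{thm:regret_IR} applies with $\SIR_t(\pi_t;Q_t^+) = \SIR_t(\VOIDS) \le 8K$ by Lemma~\ref{lemma:IR_TS}, giving
\[
\EE{R_T} \le \frac{\log N}{\lambda} + \lambda T\left(80K + \frac{21}{4}\right).
\]
For \ROIDS with $\mu = \frac{1}{10\lambda}$, which plays $\pi_t = \argmin_\pi \ADEC_{\mu,t}(\pi;Q_t^+)$, I would instead use the first inequality of the theorem, where the relevant complexity is $\ADEC_{1/10\lambda,t}(\pi_t) = \ADEC_{1/10\lambda,t}(\ROIDS)$. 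Lemma~\ref{lemma:IR_TS} gives $4\mu\ADEC_{\mu,t}(\ROIDS) \le 8K$, hence $\ADEC_{1/10\lambda,t}(\ROIDS) \le \frac{8K}{4\mu} = 20K\lambda$, so that the inner average is at most $20K$ and
\[
\EE{R_T} \le \frac{\log N}{\lambda} + \lambda T\left(20K + \frac{21}{4}\right) \le \frac{\log N}{\lambda} + \lambda T\left(80K + \frac{21}{4}\right).
\]
Thus both variants satisfy the same bound $f(\lambda) := \frac{\log N}{\lambda} + \lambda T\bpa{80K + \frac{21}{4}}$.

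The final step is the elementary optimization (equivalently, AM--GM) of $f(\lambda) = a/\lambda + b\lambda$ with $a = \log N$ and $b = T\bpa{80K + \frac{21}{4}}$: the minimizer is $\lambda = \sqrt{a/b} = \sqrt{\frac{\log N}{(80K + 21/4)T}}$, which is precisely the prescribed value, and the minimum equals $2\sqrt{ab} = 2\sqrt{\bpa{80K + \frac{21}{4}}T\log N}$. Using $2\sqrt{80K + \frac{21}{4}} = \sqrt{4\bpa{80K + \frac{21}{4}}} = \sqrt{320K + 21}$ yields the claimed $\EE{R_T} \le \sqrt{(320K + 21)T\log N}$. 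I do not expect a genuine obstacle here, since the statement is a direct consequence of the two cited results plus a scalar optimization; the only points needing care are the bookkeeping of the numerical constants and the observation that the \emph{same} $\lambda$ serves both algorithms, because the \ROIDS complexity term $20K + \frac{21}{4}$ is dominated by the \VOIDS term $80K + \frac{21}{4}$, so tuning $\lambda$ for the latter remains admissible (indeed conservative) for \ROIDS.
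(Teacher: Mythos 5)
Your proposal is correct and follows essentially the same route as the paper: the corollary is obtained by combining Theorem~\ref{thm:regret_IR} (with a uniform prior so that $\log\frac{1}{Q_1(\thetastar)}=\log N$) with the bound $\SIR_t \le 8K$ from Lemma~\ref{lemma:IR_TS}, then optimizing $\lambda$ via AM--GM. Your bookkeeping of the constants, including the observation that the \ROIDS term $20K+\frac{21}{4}$ is dominated by $80K+\frac{21}{4}$ so the same $\lambda$ serves both variants, matches the intended argument.
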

In particular, this recovers the minimax optimal rate of $ \OO(\sqrt{KT\log N}) $ for this problem.

\subsection{First-order bounds}\label{sec:firstorder}
We now present a more interesting result that replaces the dependence on $T$ in the previous bound by the 
cumulative loss of the best policy---constituting an instance-dependent guarantee that is often called 
\emph{first-order regret bound}. 
In particular, in the important class of ``noiseless'' problems where the optimal loss is zero, the result implies that 
\OIDS achieves constant regret.
\begin{thm}
Assume $ |\Theta| = N < \infty $, let $ L^{*} $ be such that $ \EE{\sum_{t=1}^{T}\lossmin_t(\thetastar)}\leq L^{*} $,
 and let $ \lambda= \sqrt{\frac{5\log N}{(500K + 108) L^{*}}} \land \frac{1}{250K + 54}$. Then the regret of \ROIDS with input parameter $ \mu = \frac{1}{10\lambda} $ and \VOIDS both satisfy
\label{thm:regret_IR_FOB}
\begin{equation}
\label{eq:regrt_IR_FOB}
\EE{R_T} \leq \sqrt{(2500K + 540)\log N L^{*}} + (1250K + 270) \log N.
\end{equation}
\end{thm}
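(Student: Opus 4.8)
The plan is to reuse the skeleton of the worst-case analysis behind Theorem~\ref{thm:regret_IR}, but to upgrade both of its $T$-linear contributions into quantities that scale with the cumulative loss, and then to close a self-bounding inequality. The first line of \eqref{eq:regret_IR} bounds $\EE{R_T}$ by $\frac{\log(1/Q_1(\thetastar))}{\lambda}$ plus two terms linear in $T$: the accumulated averaged DEC $\sum_t\EE{\ADEC_{1/10\lambda,t}(\pi_t;Q_t^+)}$ and the variance correction $\frac{21}{4}\lambda T$. For $[0,1]$-valued losses each of these is in fact governed by the cumulative loss actually incurred, $\EE{\sum_{t=1}^T\loss_t(\thetastar,A_t)}$, rather than by $T$. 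Since realizability gives $\EE{\sum_t\loss_t(\thetastar,A_t)} = \EE{R_T} + \EE{\sum_t\lts(\thetastar)} \le \EE{R_T} + L^*$, the target is a bound of the schematic form $\EE{R_T}\le\frac{\log N}{\lambda} + c\lambda K\bpa{\EE{R_T}+L^*}$, which we then solve for $\EE{R_T}$.

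First I would sharpen the variance-correction term. The constant $\frac{21}{4}$ comes from Hoeffding-type control of the log-moment-generating terms produced by the optimistic-posterior update \eqref{eq:Optimistic_posterior}, where the conditional variance of a $[0,1]$ loss is crudely bounded by $\frac14$. The point is that for $L_t\in[0,1]$ one instead has $\mathrm{Var}\!\left[L_t\mid X_t,A_t\right]\le\EEcct{L_t^2}{X_t,A_t}\le\EEcct{L_t}{X_t,A_t}=\loss_t(\thetastar,A_t)$ by realizability, so replacing the Hoeffding step by a Bernstein-type expansion turns the per-round constant into a per-round loss. Summing over $t$ and taking expectations converts $\frac{21}{4}\lambda T$ into a term proportional to $\lambda\,\EE{\sum_t\loss_t(\thetastar,A_t)}\le\lambda\bpa{\EE{R_T}+L^*}$.

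The harder, second step is to make the accumulated ADEC term loss-scaling as well, i.e.\ to prove a \emph{first-order} strengthening of Lemma~\ref{lemma:IR_TS}. The worst-case bound $\SIR_t\le 8K$ ultimately rests on a Pinsker-type inequality $\DDH\bpa{p_t(\theta,a),\bp_t(a)}\gtrsim\bpa{\bloss_t(a)-\loss_t(\theta,a)}^2$, which holds because the means lie in $[0,1]$. When those means are small this can be sharpened to the \emph{local} bound $\DDH\bpa{p_t(\theta,a),\bp_t(a)}\gtrsim\frac{(\bloss_t(a)-\loss_t(\theta,a))^2}{\bloss_t(a)+\loss_t(\theta,a)}$; feeding the \FGTS policy into the variational argument of Lemma~\ref{lemma:IR_TS} then upgrades the constant ratio bound to one of the form $\SIR_t(\pi_t)\le cK\,\blossmin_t$ (and correspondingly $4\mu\,\ADEC_{\mu,t}(\pi_t)\le cK\,\blossmin_t$), with $\blossmin_t$ a surrogate loss scale. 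Because $\blossmin_t$ is a functional of the optimistic posterior, the last ingredient is to invoke the bias-control property of the optimism factor $\exp(-\lambda\lts(\cdot))$---the same mechanism that keeps the surrogate regret from overestimating the true regret---to show that $\EE{\sum_t\blossmin_t}$ is comparable to $\EE{\sum_t\lts(\thetastar)}\le L^*$ up to lower-order corrections.

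Combining the two refinements yields the self-bounding inequality $\EE{R_T}\le\frac{\log N}{\lambda}+c\lambda K\bpa{\EE{R_T}+L^*}$. Imposing the cap $\lambda\le\frac{1}{250K+54}$ keeps the coefficient of $\EE{R_T}$ at most $\frac12$, so $\EE{R_T}$ can be isolated at the cost of a constant factor, giving $\EE{R_T}\le 2\bpa{\frac{\log N}{\lambda}+c\lambda KL^*}$; balancing the free choice of $\lambda$ between $\frac{\log N}{\lambda}$ and $\lambda KL^*$---and falling back on the cap when $L^*$ is small---produces the stated $\sqrt{(2500K+540)\log N\,L^*}+(1250K+270)\log N$ bound. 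Both \VOIDS and \ROIDS are covered at once, since \VOIDS minimizes $\SIR_t$ while \ROIDS minimizes $\ADEC_{\mu,t}$ at $\mu=\frac{1}{10\lambda}$, and the chain in Lemma~\ref{lemma:IR_TS} transfers the forerunner \FGTS guarantee to both. I expect the main obstacle to be the first-order ratio bound of the third paragraph: one must check that the local Hellinger lower bound survives averaging over the optimistic posterior---which yields the \emph{improper} mixture reference $\bp_t$ rather than a single model---and that the surrogate loss scale it produces can be tied back to the true optimal loss cleanly enough to keep the self-bounding constants finite.
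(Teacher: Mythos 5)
Your high-level architecture matches the paper's: a self-bounding inequality of the form $\EE{R_T}\le\frac{\log N}{\lambda}+c\lambda K\bpa{\EE{R_T}+L^*}$, a Bernstein-type replacement of the Hoeffding step in the optimistic-posterior analysis (the paper's Lemma~\ref{lemma:FOB_bound_OG}, which uses $e^{-x}\le 1-x+x^2/2$ together with $\lts(\theta)^2\le\lts(\theta)$), the local Hellinger inequality $\frac{(p-q)^2}{p+q}\le 4\DDH(\Ber(p),\Ber(q))$, and a first-order upgrade of the information-ratio bound. But the step you yourself flag as the main obstacle is where your plan breaks: you propose to certify $\SIR_t(\pi_t)\lesssim K\,\blossmin_t$ by feeding \FGTS into the decoupling argument of Lemma~\ref{lemma:IR_TS} with the local Hellinger bound. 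That does not work. In the decoupling lemma the Fenchel--Young penalty is $\mu\sum_a\pi(a)\cdot\frac{\II{a^*(\theta)=a}}{2C\mu^2\pi(a)^2}$, and if you make the ``strong convexity'' local, i.e.\ $C\asymp\frac{1}{\ltb(a)+\lt(\theta,a)}$, the resulting penalty integrates to roughly $\frac{1}{\mu}\sum_a\ltb(a)$ plus a matching term for $\lt(\theta,a^*(\theta))$. The quantity $\sum_a\ltb(a)$ is \emph{not} a first-order scale: with one action of surrogate loss $0$ and $K-1$ actions of surrogate loss $1$ it equals $K-1$ while $K\min_a\ltb(a)=0$. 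Thompson-type sampling spreads mass according to posterior optimality probabilities, not inversely to loss gaps, so its exploration cost is governed by the sum of losses over all actions rather than by the loss of the empirically best one.

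The paper's resolution is to change the forerunner: Lemma~\ref{lemma:ADEC_IGW_FOB} uses the \emph{inverse-gap weighting} policy $\pIGW_{\gamma,\ltb}$ of \citet{Abe_L99,Foste_K21}, whose action probabilities $\frac{\ltb(b)}{K\ltb(b)+\gamma(\ltb(a)-\ltb(b))}$ are engineered precisely so that both the exploration cost and the Fenchel--Young penalty scale with $\ltb(b)=\min_a\ltb(a)$, yielding $\SIR_t(\IGW)\le 40K\min_a\ltb(a)$; this transfers to \VOIDS and \ROIDS since they minimize $\SIR_t$ and $\ADEC_{\mu,t}$ respectively. Two further ingredients you do not supply are then needed to close the loop: (i) the chain $\min_a\ltb(a)\le\EEt{\ltb(A_t)}=\EEt{\lt(\thetastar,A_t)}-\UE_t$, which converts the surrogate scale into the actually incurred loss rather than into $\blossmin_t$ via the optimism mechanism as you suggest; and (ii) a refined underestimation-error bound (Lemma~\ref{lemma:Underestimation_error_FOB}), proved by splitting actions into under- and over-estimated sets and applying the local Hellinger inequality, which replaces the worst-case $|\UE_t|\le\frac{\gamma}{2}+\frac{\IG_t(\pi_t)}{\gamma}$ (the source of $5\lambda T$ inside the $\frac{21}{4}\lambda T$ term) by $\UE_t\le\frac{\IG_t(\pi_t)}{\gamma}+2\gamma\,\EEt{\lt(\thetastar,A_t)}$. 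Without replacing \FGTS by \IGW and without these two lemmas, the self-bounding inequality you write down cannot be established.
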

We provide a proof sketch in Section~\ref{sec:FOB_proof_pt1}, with full details provided in 
Appendix~\ref{subsec:regret_IDS_FOB}.

\subsection{Infinite parameter spaces}
\label{subsec:metric_parameter_space}
We extend the result of Theorem~\ref{thm:regret_IR} to work for infinite parameter spaces. For simplicity, we focus on 
the case in which $ \Theta $ is a bounded subset of a finite-dimensional vector space. 
\begin{thm}
\label{thm:regret_IDS_metric}
Assume $ \Theta \subset \mathbb{R}^d $, $ max_{x,y \in \Theta} \norm{x-y} = 2R < \infty $. Assume that for all $ x\in 
X,a\in \A $, and $ L\in [0,1] $, the log-likelihood of the losses $ p(\cdot ,x,a,L) $ is $ C $-Lipschitz. Assume that a
ball of radius $ \frac{1}{CT} $ containing $ \thetastar $ is included in $ \Theta $ and set $ \lambda =  \sqrt{\frac{2d 
\log (RCT)}{(20K + \frac{21}{4})T}}$ and $ Q_1 $ a uniform prior on $ \Theta $. Then the regret of \ROIDS with input 
parameter $ \mu = \frac{1}{10 \lambda} $ and \VOIDS both satisfy
\begin{equation}
	\EE{R_T} \leq \sqrt{(160K + 42)dT \log (CRT)} + 1= \OO(\sqrt{dKT\log(CRT)}).
\end{equation}
\end{thm}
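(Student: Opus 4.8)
The plan is to reuse the proof of Theorem~\ref{thm:regret_IR} essentially verbatim and patch only the one step that genuinely exploits finiteness of $\Theta$. On the right-hand side of \eqref{eq:regret_IR}, the parameter space enters in two places: through the information-ratio/ADEC factor, which is controlled by $\SIR_t\le 8K$ (Lemma~\ref{lemma:IR_TS}) and is both dimension- and cardinality-free, and through the term $\frac1\lambda\log\frac1{Q_1(\thetastar)}$. This second term arises from lower bounding the log-normalizer of the optimistic-posterior weights $w_{T+1}(\theta)=\prod_{t=1}^{T}\bpa{p_t(L_t|\theta,A_t)}^{\eta}\exp\bpa{-\lambda\lossmin_t(\theta)}$ (cf.\ \eqref{eq:Optimistic_posterior}) by the contribution of the single point $\thetastar$, and it is vacuous for a continuous prior because $Q_1(\ev{\thetastar})=0$. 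I would therefore replace the point mass at $\thetastar$ by the radius-$\rho$ ball $B\subseteq\Theta$ with $\thetastar\in B$ and $\rho=1/(CT)$ guaranteed by assumption, and absorb the resulting discretization error into the additive $+1$.

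Concretely, I would lower bound the log-normalizer by integrating $w_{T+1}$ over $B$ and applying Jensen's inequality as $\log\int_B w_{T+1}\,\dd Q_1 = \log Q_1(B) + \log\EEs{w_{T+1}(\theta)}{\theta\sim Q_1|_B} \ge \log Q_1(B) + \EEs{\log w_{T+1}(\theta)}{\theta\sim Q_1|_B}$, rather than by a pathwise infimum over $B$. The first term is pure geometry: since $Q_1$ is uniform and $\Theta$ has diameter $2R$, the volume ratio gives $Q_1(B)\ge(\rho/2R)^d$, so $\log\frac1{Q_1(B)}\le 2d\log(RCT)$, the quantity that now plays the role of $\log\frac1{Q_1(\thetastar)}$. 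The averaged term $\EEs{\log w_{T+1}(\theta)}{\theta\sim Q_1|_B}$ is where the Lipschitz assumption enters; comparing it to $\log w_{T+1}(\thetastar)$ splits the deficit into a likelihood part and an optimal-loss part, both governed by $\norm{\theta-\thetastar}\le 2\rho$ for $\theta\in B$.

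The crucial point, and the step I expect to be the main obstacle, is that the likelihood part must not acquire a factor $1/\lambda\sim\sqrt T$. A pathwise bound only gives $\eta\sum_t\abs{\log p_t(L_t|\theta,A_t)-\log p_t(L_t|\thetastar,A_t)}\le \eta CT\cdot 2\rho=O(\eta)$, which after the $1/\lambda$ normalization would blow up to $O(\sqrt T)$; this is precisely why I keep the expectation via Jensen instead of a worst case. Taking the outer expectation over the data and using realizability ($L_t\sim p_t(\thetastar,A_t)$), the per-round deficit becomes $\eta\,\EE{\DKL{p_t(\thetastar,A_t)}{p_t(\theta,A_t)}}$, and since the two log-densities differ by at most $C\norm{\theta-\thetastar}=O(1/T)$ pointwise, this relative entropy is \emph{second order}, of size $O(1/T^2)$. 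Summed over the horizon this is $O(1/T)$, so even after dividing by $\lambda$ the total likelihood error is $O(\eta/(\lambda T))=o(1)$. The optimal-loss part is milder: $C$-Lipschitz log-likelihoods bound the total variation between $p_t(\theta,a)$ and $p_t(\thetastar,a)$ by $O(C\rho)$, hence $\abs{\lossmin_t(\theta)-\lossmin_t(\thetastar)}=O(1/T)$ per round, for an $O(1)$ total that is absorbed into the $+1$.

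Assembling these pieces, the analysis of Theorem~\ref{thm:regret_IR} yields $\EE{R_T}\le \frac{2d\log(RCT)}{\lambda}+\lambda T\bpa{20K+\frac{21}4}+1$, where the second factor uses $\ADEC_{1/10\lambda,t}/\lambda\le\frac52\SIR_t\le 20K$ from \eqref{eq:IR_VIR} and Lemma~\ref{lemma:IR_TS}. The stated choice $\lambda=\sqrt{2d\log(RCT)/\bpa{(20K+\frac{21}4)T}}$ balances the first two terms and gives $2\sqrt{2d\log(RCT)\bpa{20K+\frac{21}4}T}=\sqrt{(160K+42)dT\log(CRT)}$, which together with the $+1$ matches the claimed bound.
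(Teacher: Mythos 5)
Your proposal is correct and follows the same strategy as the paper's proof: in the comparator step of Lemma~\ref{lemma:EWF_bound}, the Dirac at $\thetastar$ is replaced by the uniform distribution $Q^*$ on the ball of radius $\epsilon=1/(CT)$ around $\thetastar$; the term $\DKL{Q^*}{Q_1}$ is bounded by a volume ratio as $d\log(R/\epsilon)=d\log(RCT)$ (Lemma~\ref{lemma:DKL_ball}); the discretization error is controlled via the Lipschitz assumption (Lemmas~\ref{lemma:Lipschitzness_losses} and~\ref{lemma:EWF_bound_Lipschitz}); and the final tuning of $\lambda$ is identical. The one place you diverge is the likelihood part of the discretization error, which you insist must be made $o(1)$ via Jensen and a second-order expansion of the relative entropy, on the grounds that the pathwise Lipschitz bound ``would blow up to $O(\sqrt{T})$'' after dividing by $\lambda$. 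That diagnosis is mistaken: the paper uses exactly the pathwise bound, obtaining a term $\frac{\eta}{\lambda}\cdot CT\epsilon=\frac{1}{4\lambda}$, and this is harmless because the bound already contains the term $\frac{d\log(RCT)}{\lambda}$ of at least the same order; the paper absorbs it using $\frac14\le d\log(RCT)$, which is precisely where the factor $2$ in $\frac{2d\log(RCT)}{\lambda}$ comes from. Your refinement is nonetheless valid --- under realizability the expected log-likelihood-ratio deficit is an expected $\DKL{p_t(\thetastar,A_t)}{p_t(\theta,A_t)}$, which is indeed second order in $C\norm{\theta-\thetastar}$ when the log-densities are uniformly close --- and it leads to the same final bound, so nothing is broken; it is simply more machinery than the paper needs, and the stated reason for needing it does not hold.
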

We provide a proof in Appendix~\ref{subsubsec:regret_IDS_metric}.

\subsection{Subgaussian losses}\label{subsec:subgaussian_losses}
We also extend the basic result of Theorem~\ref{thm:regret_IR} to work for a more general family of losses. In 
particular, we drop the assumption that the likelihood model is well-specified and allow the losses to be sub-Gaussian.
As the following result shows, we can still recover our regret bound of $\OO(\sqrt{KT\log N})$ with some minor tweaks 
of the algorithm and the analysis. The resulting method is called $\OIDSSG$, and is presented in 
Appendix~\ref{subsec:regret_IDS_SG} in full detail, along with the proof of the theorem below.
\begin{thm}
\label{thm:regret_IDS_SG}
Assume that the losses are $ v $-sub-Gaussian, that $ |\Theta| =N < \infty$ and set \linebreak $ \lambda =
\sqrt{\frac{\log N}{\left(\frac{1}{4} + 20 (v\land 1)(1+K)\right)T}}$. Then the regret of \ROIDSG with input parameter $ \mu = \frac{1}{80\lambda(v \land 1)} $ and \VOIDSG both satisfy
\begin{equation}
\label{eq:label}
\EE{R_T} \leq \sqrt{(1 + 80 (v\lor 1)(1+K))T\log N} = \OO(\sqrt{KT \log N}).
\end{equation}

\end{thm}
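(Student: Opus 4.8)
The plan is to re-run the argument behind Theorem~\ref{thm:regret_IR} and Corollary~\ref{cor:regret_IDS} with two modifications that together define the algorithm \OIDSSG. First, since realizability now only pins down the conditional mean $\EEt{L_t}=\ell_t(\thetastar,A_t)$ and no longer the full loss distribution, I would replace the true likelihood $p_t(\theta,a)$ everywhere---both in the optimistic posterior update \eqref{eq:Optimistic_posterior} and in the surrogate information gain \eqref{eq:SIG}---by a \emph{Gaussian surrogate likelihood} $\wt p_t(\theta,a)=\N\bpa{\ell_t(\theta,a),\sigma^2}$ with inflated variance $\sigma^2=v\lor1$. Concretely, the update factor $(p_t(L_t|\theta,A_t))^\eta$ becomes $\exp\bpa{-\tfrac{\eta}{2\sigma^2}(L_t-\ell_t(\theta,A_t))^2}$ (the $\theta$-independent Gaussian normalizer cancels), and the surrogate regret, information gain, \SIR and \ADEC of \eqref{eq:SIG}, \eqref{eq:SIR} and \eqref{eq:VSIR} are defined verbatim with $\wt p_t$ in place of $p_t$. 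This is the pseudo-likelihood device of \citet{Zhang_21}: a matched-mean Gaussian is the natural well-specified-in-mean working model, and inflating its variance past the subgaussian parameter is what lets an unbounded-noise problem be analysed as if it had a bounded likelihood. As in Corollary~\ref{cor:regret_IDS}, the bound will then hold for both \VOIDSG and \ROIDSG, since each minimizes its respective surrogate objective and \FGTS remains a valid forerunner.

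Second, I would reprove the master regret decomposition under this surrogate. The proof of Theorem~\ref{thm:regret_IR} splits the regret, through the optimistic posterior, into (i) an online-aggregation/estimation term that telescopes to $\tfrac{1}{\lambda}\log\tfrac{1}{Q_1(\thetastar)}$, (ii) a feel-good optimism term controlled by the regularizer $\exp(-\lambda\lossmin_t(\theta))$, and (iii) the surrogate-regret-versus-information-gain term, i.e.\ the \ADEC. Piece (i) is a generic experts argument that is insensitive to the choice of likelihood model and carries over unchanged, yielding the $\tfrac{\log N}{\lambda}$ term for a uniform prior. Piece (iii) is handled by the information-ratio bound below, while the substantive work is in piece (ii).

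For the information-ratio bound I would use the closed form $\DDH\pa{\N(m,\sigma^2),\N(m',\sigma^2)}=1-\exp\bpa{-\tfrac{(m-m')^2}{8\sigma^2}}$ together with $1-e^{-x}\ge x/(1+x)$ to lower bound the surrogate information gain by a squared-mean-difference quantity of order $1/\sigma^2$. Re-running the Cauchy--Schwarz/AM--GM argument of Lemma~\ref{lemma:IR_TS} on the \FGTS forerunner then gives $\SIR_t(\FGTS)=\OO(\sigma^2 K)=\OO((v\lor1)K)$, which is precisely the source of the $(v\lor1)(1+K)$ factor in the final bound. Plugging this into the master inequality via $\ADEC_{\mu,t}\le\SIR_t/(4\mu)$ from \eqref{eq:IR_VIR}, setting $\mu=\tfrac{1}{80\lambda(v\land1)}$ to absorb the extra $\sigma^2$-dependence, and optimizing $\lambda=\sqrt{\tfrac{\log N}{(\tfrac{1}{4}+20(v\land1)(1+K))T}}$ yields \eqref{eq:label}; the asymmetry between the $v\land1$ appearing in the tuning and the $v\lor1$ in the bound is exactly the residue of balancing the variance inflation in (iii) against the exponential-moment cost in (ii).

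The hard part will be piece (ii), the feel-good optimism bound under unbounded noise. In the bounded case one controls likelihood ratios by crude worst-case inequalities on $[0,1]$, which is where the clean constant $\tfrac{21}{4}$ comes from; here the log-likelihood ratio $\log\wt p_t(L_t|\thetastar,a)-\log\wt p_t(L_t|\theta,a)$ of the Gaussian surrogate is an unbounded quadratic-plus-linear function of $L_t$, so one must bound its exponential moment $\EEt{\exp(\,\cdot\,)}$ directly. This is where subgaussianity and the choice $\sigma^2\ge v$ are essential: they keep the relevant moment generating function finite and absorb the fact that the Gaussian surrogate is genuinely misspecified in its higher moments. Making the inflation $\sigma^2=v\lor1$, the exponent $\eta$, and the optimism strength $\lambda$ fit together so that this exponential moment stays bounded is the crux of the argument, and is what forces the particular $v\land1$ versus $v\lor1$ dependence of the constants.
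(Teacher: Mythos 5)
Your proposal is correct in outline and would close, but it takes a genuinely different route from the paper in two respects. First, the paper keeps a \emph{unit-variance} Gaussian pseudo-likelihood $p(c|\theta,x,a)\propto\exp(-(c-\ell(\theta,x,a))^2/2)$ and absorbs the sub-Gaussian parameter $v$ entirely into the optimistic-estimation-error analysis: the exponent $\eta\alpha$ in the potential-function argument is tuned as a function of $v$ (namely $\eta=\frac{1+\sqrt{1-1\land v}}{2v}$) so that the exponential moment of the Gaussian log-likelihood ratio under the true (misspecified) noise yields $\delta(1-2\delta)\,\IGG_t\le -\log\int(p_\theta/p_{\thetastar})^{\eta\alpha}dQ_t^+$ with $\delta(1-2\delta)\ge\frac{1}{8(v\lor1)}$ (Lemma~\ref{lemma:EWF_IG_bound_SG}); you instead fix $\eta\alpha=\Theta(1)$ and inflate the surrogate variance to $\sigma^2=v\lor1$, which achieves the same effect. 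Second, the paper abandons the Hellinger distance altogether in this setting and redefines the information gain as the squared loss $\SIGG_t(\pi)=\sum_a\pi(a)\int(\ell_t(\theta,a)-\ltb(a))^2dQ_t^+$, so that the decoupling lemma applies with the $2$-strongly convex divergence $(x-y)^2$ and gives $\SIRG_t(\FGTS)=K$ exactly, with no $v$-dependence; the $(v\lor1)$ factor then enters the final bound only through $\mu=\frac{1}{80\lambda(v\lor1)}$ dictated by the estimation-error lemma. Your route keeps the Hellinger form, whose strong-convexity constant in the mean parameter degrades to $\Theta(1/\sigma^2)$ on means in $[0,1]$, so the information ratio becomes $\OO((v\lor1)K)$ — the same product in the end, just booked in a different place. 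The paper's choice is cleaner because it also simplifies the underestimation-error bound to a one-line Cauchy--Schwarz (Lemma~\ref{lemma:Underestimation_error_L2}) rather than the TV-to-Hellinger chain needed in the bounded case. One point you gloss over slightly: your ``piece (i) carries over unchanged'' is only true for the Donsker--Varadhan/telescoping part of Lemma~\ref{lemma:EWF_bound}; the step relating the resulting log-moment to the information gain (Lemma~\ref{lemma:EWF_bound_IG}) uses well-specifiedness and must be replaced wholesale by the direct exponential-moment computation you describe in your last paragraph — so you have correctly identified the crux, but it affects the ``generic experts'' piece, not only the optimism piece. Finally, your attempt to rationalize the $v\land1$ versus $v\lor1$ asymmetry in the theorem statement is reading tea leaves: the paper's own proof sets $\mu=\frac{1}{80\lambda(v\lor1)}$ and bounds the regret with $(v\lor1)$ throughout, so the $v\land1$ in the stated tuning appears to be a typo rather than a feature of the balancing.
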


\section{Analysis}
This section provides an outline of the proofs of our main results. We first give a high-level overview of the key 
ideas that are shared in all proofs, and then fill in provide further technical details that are required to prove 
Theorems~\ref{thm:regret_IR} and~\ref{thm:regret_IR_FOB}. Theorems~\ref{thm:regret_IDS_metric} 
and~\ref{thm:regret_IDS_SG} are proved in Appendices~\ref{subsubsec:regret_IDS_metric} and~\ref{subsec:regret_IDS_SG}.

The core of our analysis is the following decomposition of the instantaneous regret in round $t$:
\begin{align}
 \EE{r_t(\pi_t; \thetastar)} &= \EE{\br_t(\pi_t)} + \EE{r_t(\pi_t;\thetastar) -\br_t(\pi_t)}\nonumber\\
    &= \EE{\br_t(\pi_t)} + \EE{\EEt{\lt(\thetastar,A_t)- \ltb(A_t)}} + \EE{\ltsb- 
\lts(\thetastar)}\label{eq:regret_decomposition}\\
    &= \EE{\ADEC_{\mu,t}(\pi_t) + \mu \SIG_t(\pi_t) + \UE_t + \OG_t}\nonumber.
\end{align}
Here, in the last line we have introduced the notations $\UE_t = \EEt{\lt(\thetastar,A_t)- \ltb(A_t)}$ to denote the 
\emph{underestimation error} of the losses incurred by our own policy $\pi_t$, and $\OG_t = \ltsb- \lts(\thetastar)$ as 
the \emph{optimatily gap} between the best loss possible in our mixture of models and the optimal loss attainable under 
the true parameter. The first term is small if the mixture model accurately evaluates the losses seen during learning 
(which is generally easy to ensure on average), and the second term is small if the model remains optimistic about the 
best attainable performance (which is facilitated by the optimistic adjustment to the posterior updates).
An important quantity in the analysis is the (true) \emph{information gain} of policy $\pi$ defined as
\begin{equation}
\label{eq:information_gain}
\IG_t(\pi) = \sum_{a\in\A} \pi(a) \int \DDH \left(p_t(\thetastar ,a,\cdot ),p_t(\theta,a,\cdot )\right)\,\dd 
Q_t^{+}(\theta).
\end{equation}
This quantity is closely related to the surrogate information gain that is optimized by our algorithm, and plays a key 
role in bounding the underestimation errors. In particular, the following simple lemma establishes a connection between 
the true and surrogate information gains:
\begin{lemma}
\label{lemma:Surrogate_information_gain_Hellinger}
	For any $t$ and policy $ \pi $, the information gain satisfies $\SIG_t(\pi) \leq 4 \IG_t(\pi)$.
\end{lemma}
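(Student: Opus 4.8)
The plan is to reduce the claim to a per-action statement and then relate the two Hellinger terms through the true model $p_t(\thetastar,a)$. Since both $\SIG_t(\pi)$ and $\IG_t(\pi)$ are $\pi$-weighted averages $\sum_a \pi(a)(\cdots)$ with identical weights, it suffices to establish, for each fixed action $a$,
\[
\int \DDH\pa{p_t(\theta,a), \bp_t(a)}\,\dd Q_t^+(\theta) \le 4 \int \DDH\pa{p_t(\thetastar,a), p_t(\theta,a)}\,\dd Q_t^+(\theta).
\]
Fix such an $a$. The structural fact I would exploit is that the (unsquared) Hellinger distance $\sqrt{\DDH\pa{\cdot,\cdot}}$ is a genuine metric, so inserting the true loss distribution $p_t(\thetastar,a)$ as an intermediate point yields the triangle inequality
\[
\sqrt{\DDH\pa{p_t(\theta,a), \bp_t(a)}} \le \sqrt{\DDH\pa{p_t(\theta,a), p_t(\thetastar,a)}} + \sqrt{\DDH\pa{p_t(\thetastar,a), \bp_t(a)}}.
\]

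Next I would square both sides, apply the elementary bound $(x+y)^2 \le 2x^2 + 2y^2$, and integrate against $Q_t^+$. The first resulting term, $2\int \DDH\pa{p_t(\theta,a), p_t(\thetastar,a)}\,\dd Q_t^+(\theta)$, is exactly twice the per-action true information gain $\IG_t$. The second term, $2\DDH\pa{p_t(\thetastar,a), \bp_t(a)}$, does not depend on $\theta$, so integration leaves it unchanged, and it remains only to control it by the same information gain.

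The crucial observation for this cross term is that the squared Hellinger distance is convex in its second argument (indeed jointly convex): writing it in the form $\DDH\pa{P,R} = 1 - \int \sqrt{\dd P\,\dd R}$, the map $R \mapsto \DDH\pa{P,R}$ is convex because $r \mapsto -\sqrt{r}$ is convex. Since $\bp_t(a) = \int p_t(\theta,a)\,\dd Q_t^+(\theta)$ is precisely the $Q_t^+$-average of the densities $p_t(\theta,a)$, Jensen's inequality gives
\[
\DDH\pa{p_t(\thetastar,a), \bp_t(a)} \le \int \DDH\pa{p_t(\thetastar,a), p_t(\theta,a)}\,\dd Q_t^+(\theta).
\]
Combining the two estimates yields $\int \DDH\pa{p_t(\theta,a), \bp_t(a)}\,\dd Q_t^+(\theta) \le 2\IG_t + 2\IG_t = 4\,\IG_t$ for the fixed action, and averaging over $a$ with weights $\pi(a)$ completes the argument.

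I expect the only genuinely delicate step to be the convexity argument for the cross term: one must recognize that convexity of the squared Hellinger distance in its second argument, together with the fact that $\bp_t(a)$ is literally the $Q_t^+$-mixture of the $p_t(\theta,a)$, is exactly what lets Jensen close the gap at the constant $4$. Everything else---the reduction to a single action, the metric triangle inequality, and the $(x+y)^2 \le 2x^2+2y^2$ split---is routine.
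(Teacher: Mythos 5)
Your proposal is correct and follows essentially the same route as the paper's proof: the triangle inequality for the Hellinger metric with $p_t(\thetastar,a)$ as the intermediate point, the $(x+y)^2 \le 2x^2+2y^2$ split, and convexity of the squared Hellinger distance plus Jensen's inequality to bound the cross term $\DDH\bpa{p_t(\thetastar,a),\bp_t(a)}$ by the averaged information gain. The only difference is cosmetic: the paper states the triangle inequality directly in squared form, $\DDH\pa{P,P'} \le 2\DDH\pa{P,Q} + 2\DDH\pa{Q,P'}$, while you derive it explicitly from the unsquared metric property.
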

The proof can be found in Appendix~\ref{subsubsec_Surrogate_information_gain_Hellinger}. Notably, 
the proof makes critical use of properties of the squared Hellinger 
distance, and is the main reason that the surrogate information gain is defined the way it is. In particular, the 
proof uses the fact that the Hellinger distance is a metric and as such it satisfies the 
triangle inequality---which is the reason that we were not able to go with the otherwise more natural choice of 
relative entropy in our definition of the information gain.

\subsection{The proof of Theorem~\ref{thm:regret_IR}}\label{sec:worst-case-analysis}
We first use the following worst-case bound on the underestimation error:
\begin{lemma}
\label{lemma:Underestimation_error_Hellinger}
For any $t$ and $\gamma >0$, the underestimation error is bounded as $
|\UE_t| \leq \frac{\gamma}{2} + \frac{\IG_t(\pi_t)}{\gamma}$.
\end{lemma}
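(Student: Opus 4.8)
The plan is to reduce the claim to a pointwise (in action and parameter) comparison between a mean gap and a Hellinger distance, and then average. Writing $h_a(\theta) = \DDH\bpa{p_t(\thetastar,a),p_t(\theta,a)}$ for brevity, so that $\IG_t(\pi_t) = \sum_a \pi_t(a)\int h_a(\theta)\dd Q_t^+(\theta)$ by \eqref{eq:information_gain}, I would start by unfolding the definition of $\UE_t$. Since $\ltb(a) = \int \ell_t(\theta,a)\dd Q_t^+(\theta)$, since $A_t\sim\pi_t$, and since $\ell_t(\thetastar,a)$ does not depend on $\theta$,
\[
 \UE_t = \sum_a \pi_t(a)\int\bpa{\ell_t(\thetastar,a) - \ell_t(\theta,a)}\dd Q_t^+(\theta),
\]
so by the triangle inequality it suffices to bound the single gap $|\ell_t(\thetastar,a) - \ell_t(\theta,a)|$ in terms of $\sqrt{h_a(\theta)}$ and then average against $\pi_t$ and $Q_t^+$.

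For that gap, the key point is that $\ell_t(\theta,a)$ is by definition the mean of the loss distribution $p_t(\theta,a)$, which is supported on $[0,1]$; hence the gap is a difference of means of two distributions on $[0,1]$, and the natural tool is an inequality relating such a difference to the squared Hellinger distance. I would write the gap as $\int L\,\bpa{\dd p_t(\thetastar,a) - \dd p_t(\theta,a)}$ and center the integrand by $\tfrac12$---legitimate because both are probability measures, so the constant integrates to zero. Since $|L - \tfrac12|\le\tfrac12$ on $[0,1]$, this gives $|\ell_t(\thetastar,a) - \ell_t(\theta,a)|\le \TV\bpa{p_t(\thetastar,a),p_t(\theta,a)}$, and the standard comparison $\TV\le\sqrt{2\,\DDH}$ then yields $|\ell_t(\thetastar,a) - \ell_t(\theta,a)|\le\sqrt{2\,h_a(\theta)}$.

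The final step is a single application of the AM--GM inequality: for every $\gamma>0$,
\[
 \sqrt{2\,h_a(\theta)} \le \frac{\gamma}{2} + \frac{h_a(\theta)}{\gamma}.
\]
Averaging this over $a\sim\pi_t$ and $\theta\sim Q_t^+$ leaves the constant term equal to $\gamma/2$ and turns the second term into exactly $\IG_t(\pi_t)/\gamma$, producing the claimed bound $|\UE_t|\le\frac{\gamma}{2} + \frac{\IG_t(\pi_t)}{\gamma}$.

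All the ingredients---the mean-difference bound and the TV--Hellinger inequality---are standard, so I expect the only real subtlety to be the tracking of constants. In particular, centering the integrand by $\tfrac12$ rather than crudely bounding $|L|\le1$ is what produces the sharp factor $\sqrt2$ (instead of $2\sqrt2$) and hence the exact constants $\tfrac12$ and $1$ in the statement; this is the one place where carelessness would cost a constant factor.
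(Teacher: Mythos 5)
Your proof is correct and follows essentially the same route as the paper's: bound $|\UE_t|$ by the averaged absolute mean gap, relate that gap to the total variation distance between the loss distributions, apply $\TV \le \sqrt{2\DDH}$, and finish with AM--GM. The only difference is in the mean-gap-to-TV step: you center the integrand at $1/2$ and bound directly, whereas the paper identifies the gap with $\TV\bigl(\Ber(\ell_t(\thetastar,a)),\Ber(\ell_t(\theta,a))\bigr)$ and then invokes the data-processing inequality for the random-rounding channel; both give the same constants.
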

The proof is relegated to Appendix~\ref{subsubsec:Underestimation_error_Hellinger}. Putting this bound together with the 
previous derivations, we get a regret bound that only depends on the \tADEC, the information gain and the optimality 
gap:
\begin{equation}
\label{eq:regret_decomposition_DEC_IG_OG}
	\EE{r_t} \leq \EE{\ADEC_{\mu,t}(\pi_t) + \left(4\mu + \frac{1}{\gamma}\right)\IG_t(\pi_t) + \OG_t} + 
\frac{\gamma}{2}.
\end{equation}
Following the terminology of \citet{Foste_G_Q_R_S23}, we will refer to the sum $\bpa{4\mu + 
\frac{1}{\gamma}}\IG_t(\pi_t) + \OG_t$ as the \emph{optimistic estimation error}. 
The following result establishes that the optimistic posterior 
updates can effectively control a quantity that is closely related to this term. 
\begin{lemma}
\label{lemma:EWF_bound_WC}
	Let $0<\eta < \frac{1}{2}$, $\lambda >0$, and $\beta=\frac{1}{1- 2 \eta}$.
	Then, 
the following inequality holds :
\begin{equation}
	\EE{\sum_{t=1}^{T} \pa{\frac{2\eta}{\lambda}\cdot\IG_t(\pi_t)  + \OG_t}} \leq \frac{\log 
\frac{1}{Q_1(\thetastar)}}{\lambda} + \frac{\lambda\beta T}{8}.
\end{equation}
\end{lemma}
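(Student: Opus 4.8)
The plan is to read the optimistic posterior update~\eqref{eq:Optimistic_posterior} as a continuous exponential-weights scheme whose per-round loss assigned to parameter $\theta$ is $-\eta\log p_t(L_t|\theta,A_t)+\lambda\lts(\theta)$, and to run the standard log-normalizer telescoping argument against the comparator $\thetastar$. Writing $Z_t=\int\pa{p_t(L_t|\theta,A_t)}^{\eta}\exp\pa{-\lambda\lts(\theta)}\dd Q_t^+(\theta)$ for the normalizing constant of the update, the recursion gives $\log\pa{Q_{T+1}^+(\thetastar)/Q_1(\thetastar)}=\sum_{t=1}^T\pa{\eta\log p_t(L_t|\thetastar,A_t)-\lambda\lts(\thetastar)-\log Z_t}$. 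Since $Q_{T+1}^+(\thetastar)\le1$, the left-hand side is at most $\log\frac{1}{Q_1(\thetastar)}$, and rearranging yields the pathwise bound $\sum_t\pa{-\log Z_t}\le\log\frac{1}{Q_1(\thetastar)}+\sum_t\pa{-\eta\log p_t(L_t|\thetastar,A_t)+\lambda\lts(\thetastar)}$.

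The key simplification is a change of variables: set $W_t=Z_t\big/\pa{p_t(L_t|\thetastar,A_t)}^{\eta}$, which normalizes the likelihood by that of the true model. Then $-\log Z_t=-\log W_t-\eta\log p_t(L_t|\thetastar,A_t)$, and the self-information terms $\eta\log p_t(L_t|\thetastar,A_t)$ cancel on both sides, leaving the clean inequality $\sum_t\EE{-\log W_t-\lambda\lts(\thetastar)}\le\log\frac{1}{Q_1(\thetastar)}$, with $W_t=\int\pa{p_t(L_t|\theta,A_t)/p_t(L_t|\thetastar,A_t)}^{\eta}\exp\pa{-\lambda\lts(\theta)}\dd Q_t^+(\theta)$. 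The whole lemma now reduces to proving the per-round lower bound $\EE{-\log W_t}\ge 2\eta\,\EE{\IG_t(\pi_t)}+\lambda\,\EE{\ltsb}-\frac{\lambda^2\beta}{8}$: summing this over $t$, using $\ltsb-\lts(\thetastar)=\OG_t$, and dividing through by $\lambda$ reproduces the claimed bound exactly.

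To prove the per-round bound I would condition on $\F_{t-1},X_t$ and the action $A_t=a$, apply Jensen's inequality over the loss realization $L_t\sim p_t(\thetastar,a)$ to pass from $\EE{-\log W_t}$ to $-\log\EEt{W_t}$, and observe that $\EEt{W_t\mid A_t=a}=\int A_\eta(\theta,a)\exp\pa{-\lambda\lts(\theta)}\dd Q_t^+(\theta)$, where $A_\eta(\theta,a)=\int p_t(\ell|\thetastar,a)^{1-\eta}p_t(\ell|\theta,a)^{\eta}\dd\ell$ is the order-$\eta$ affinity. The information gain is extracted through the chain $A_\eta(\theta,a)\le\bpa{1-\DDH\pa{p_t(\thetastar,a),p_t(\theta,a)}}^{2\eta}\le1-2\eta\DDH\pa{p_t(\thetastar,a),p_t(\theta,a)}$: the first step is Jensen's inequality for the concave map $x\mapsto x^{2\eta}$ (valid precisely because $2\eta\le1$), and the second is the tangent-line bound for the same map. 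Averaging over $a\sim\pi_t$ turns these squared Hellinger terms into $\IG_t(\pi_t)$ with coefficient exactly $2\eta$, while the optimism factor $\exp\pa{-\lambda\lts(\theta)}$ produces the $\lambda\,\ltsb$ term.

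I expect the main obstacle to be controlling the second-order remainder of this per-round bound by $\frac{\lambda^2\beta}{8}$ with $\beta=\frac{1}{1-2\eta}$, rather than by something both looser and of the wrong form. The difficulty is that the loss tilting $\exp(-\lambda\lts)$ and the likelihood tilting $(\cdot)^{\eta}$ are coupled, so the naive estimate $-\log\int e^{-f}\dd Q_t^+\ge\int f\dd Q_t^+-(\text{Jensen gap})$ with $f=2\eta\DDH+\lambda\lts$ is inflated by this interaction; a crude Hoeffding bound on the range of $f$ would pay a penalty of order $(2\eta+\lambda)^2$, which is unacceptable. Obtaining the sharp $\frac{\lambda^2}{8(1-2\eta)}$ requires a more careful moment-generating-function estimate in which only the bounded optimal-loss contribution (range $\lambda$) enters the variance proxy and the factor $\beta$ emerges as the inflation caused by the exponent $\eta$; I would isolate this as a standalone sub-lemma, since it is the one genuinely delicate calculation and the place where the restriction $\eta<\frac12$ (hence $\beta<\infty$) is used.
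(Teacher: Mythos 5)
Your setup matches the paper's: the telescoping of the log-normalizer against the Dirac comparator at $\thetastar$ is exactly the potential-function/Donsker--Varadhan argument of Lemma~\ref{lemma:EWF_bound}, your reduction to bounding $\EEt{W_t}$ via Jensen is the content of Lemma~\ref{lemma:EWF_bound_IG}, and your affinity chain $A_\eta \le (1-\DDH)^{2\eta} \le 1-2\eta\DDH$ correctly produces the coefficient $2\eta$ on the information gain. The reduction of the lemma to your per-round sub-lemma is also arithmetically sound.

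However, there is a genuine gap, and you have located it yourself: you never prove the per-round bound $\EE{-\log W_t}\ge 2\eta\,\EE{\IG_t(\pi_t)}+\lambda\,\EE{\ltsb}-\frac{\lambda^2\beta}{8}$, and you misdiagnose what is needed to close it. The coupling between the likelihood tilt $(p_t(L_t|\theta,A_t)/p_t(L_t|\thetastar,A_t))^{\eta}$ and the optimism tilt $\exp(-\lambda\lts(\theta))$ is not resolved by ``a more careful moment-generating-function estimate''; it is resolved by H\"older's inequality with conjugate exponents $\alpha=\frac{1}{2\eta}$ and $\beta=\frac{1}{1-2\eta}$, applied to the single integral
$\int \bpa{p_t(L_t|\theta,A_t)/p_t(L_t|\thetastar,A_t)}^{\eta}\exp\pa{-\lambda\lts(\theta)}\,\dd Q_t^+(\theta)
\le \bpa{\int (p_t/p_{t,0})^{1/2}\,\dd Q_t^+}^{1/\alpha}\bpa{\int e^{-\lambda\beta\lts}\,\dd Q_t^+}^{1/\beta}$.
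This is precisely where the exponent $\eta\alpha=\frac12$ (hence the Hellinger affinity and the factor $2\eta=1/\alpha$) and the parameter $\beta$ arise: $\beta$ is the H\"older conjugate of $1/(2\eta)$, not a variance-inflation factor. Once the two factors are decoupled, the second is handled by a plain Hoeffding bound on the bounded variable $\lts(\theta)\in[0,1]$ (Lemma~\ref{lemma:Hoeffding_bound_OG}), giving exactly $-\ltsb+\frac{\lambda\beta}{8}$ after dividing by $\lambda\beta$; no interaction term survives. Your proposal as written leaves the decisive inequality as an unproven conjecture, so it does not constitute a complete proof; inserting the H\"older step (either pathwise before taking expectations, as the paper does in Lemma~\ref{lemma:EWF_bound}, or after conditioning on $A_t$ as in your formulation) completes it and reproduces the paper's argument essentially verbatim.
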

See Appendix \ref{subsubsec:EWF_bound_WC} for the proof.
It remains to pick the hyperparameters in a way that the left-hand side matches the total 
optimistic estimation error, which is achieved when setting
way that $\frac{2\eta}{\lambda} = 4\mu + \frac{1}{\gamma}$. 
To make sure that this holds while minimizing the final constant, we choose $ \eta=\frac{1}{4}$, $\beta=2$, and $\gamma 
= \frac{1}{\mu}=10\lambda $. Plugging these constants into the bound above, and putting the result together with the 
bound of Equation~\eqref{eq:regret_decomposition_DEC_IG_OG} completes the proof of Theorem~\ref{thm:regret_IR}.

\subsection{The proof of Theorem~\ref{thm:regret_IR_FOB}}\label{sec:FOB_proof_pt1}
We start our analysis from the regret decomposition of Equation~\eqref{eq:regret_decomposition} and apply 
Lemma~\ref{lemma:Surrogate_information_gain_Hellinger} to obtain
\begin{equation*}
	\EE{r_t} \le \EE{\ADEC_{\mu,t}(\pi_t) + 4\mu \IG_t(\pi_t) + \UE_t + \OG_t}.
\end{equation*}
As before, we can control the ADEC of \OIDS by producing a suitable forerunner. In particular, we use the 
\emph{inverse-gap weighting} \IGW algorithm of \citet{Foste_K21}
\begin{lemma}
\label{lemma:ADEC_IGW_FOB}
The surrogate information ratio and averaged decision-to-estimation-coefficient of \VOIDS and \ROIDS satisfy for any $ \mu \geq 0 $
\begin{equation}
	4 \mu \ADEC_{\mu,t}(\ROIDS) \leq 4\mu \ADEC_{\mu,t}(\VOIDS) \leq \SIR_t(\VOIDS) \leq \SIR_t(\IGW) \leq 40K\min_{a\in \A}\ltb(a).  
\end{equation}

\end{lemma}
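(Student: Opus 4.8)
The chain decomposes into three elementary links and one substantial inequality. The first link, $\ADEC_{\mu,t}(\ROIDS)\le\ADEC_{\mu,t}(\VOIDS)$, is immediate because \ROIDS plays the minimizer of $\pi\mapsto\ADEC_{\mu,t}(\pi;Q_t^+)$, and multiplying by $4\mu\ge0$ preserves the inequality. The second link $4\mu\ADEC_{\mu,t}(\VOIDS)\le\SIR_t(\VOIDS)$ is exactly inequality~\eqref{eq:IR_VIR} applied to the \VOIDS policy. The third link $\SIR_t(\VOIDS)\le\SIR_t(\IGW)$ holds because \VOIDS minimizes $\pi\mapsto\SIR_t(\pi;Q_t^+)$, so its surrogate information ratio is no larger than that of any competitor, \IGW included. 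Everything therefore reduces to the final inequality $\SIR_t(\IGW)\le40K\min_a\bloss_t(a)$, which is the first-order analogue of the worst-case bound $\SIR_t(\FGTS)\le8K$ from Lemma~\ref{lemma:IR_TS}.

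To prove it I would work directly with $\SIR_t(\IGW)=\br_t(\IGW)^2/\SIG_t(\IGW)$. Write $\wh a=\argmin_a\bloss_t(a)$ and let \IGW use the inverse-gap allocation $\pi(a)=\frac{1}{K+\gamma(\bloss_t(a)-\bloss_t(\wh a))}$ for $a\ne\wh a$, with $\gamma>0$ chosen at the end. The starting observation is that the surrogate regret is exactly the $Q_t^+$-average of the per-model regret, $\br_t(\pi)=\int\sum_a\pi(a)(\ell_t(\theta,a)-\lossmin_t(\theta))\dd Q_t^+(\theta)$. This lets me apply the standard inverse-gap-weighting potential bound of \citet{Foste_K21} for each fixed $\theta$, with true losses $\ell_t(\theta,\cdot)$ and predicted losses $\bloss_t(\cdot)$, and then integrate against $Q_t^+$, yielding a bound of the form $\br_t(\IGW)\le c_1K/\gamma+c_2\gamma\sum_a\pi(a)\int(\ell_t(\theta,a)-\bloss_t(a))^2\dd Q_t^+(\theta)$, i.e.\ the exploration cost plus a $\gamma$-weighted posterior variance of the losses.

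The first-order factor is then extracted through the elementary inequality $(m-m')^2\le4(m+m')\DDH(P,P')$, valid for any two distributions $P,P'$ on $[0,1]$ with means $m,m'$ (a one-line consequence of Cauchy--Schwarz applied to $\int x(\dd P-\dd P')$ after factoring $\sqrt{\dd P}\pm\sqrt{\dd P'}$). Applied with $P=p_t(\theta,a)$ and $P'=\bp_t(a)$, it converts each squared loss discrepancy into $4(\ell_t(\theta,a)+\bloss_t(a))\DDH(p_t(\theta,a),\bp_t(a))$, so the variance term is controlled by the surrogate information gain $\SIG_t(\IGW)$ multiplied by a loss scale. Bounding this loss scale using that a $[0,1]$-valued variable has variance at most its mean, together with $\bloss_t(a)=\bloss_t(\wh a)+(\bloss_t(a)-\bloss_t(\wh a))$, should turn the estimate into $\br_t(\IGW)\le c_1K/\gamma+c_3\gamma\,\bloss_t(\wh a)\,\SIG_t(\IGW)$ up to lower-order remainders; choosing $\gamma$ to balance the two contributions then gives $\br_t(\IGW)^2\le c\,K\,\bloss_t(\wh a)\,\SIG_t(\IGW)$, which is the claim once constants are tracked to $40K$.

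The delicate point, and the step I expect to absorb most of the work, is that the refined Hellinger inequality only amplifies the information gain at the near-optimal actions: for a badly suboptimal $a$, both $\ell_t(\theta,a)$ and $\bloss_t(a)$ are of constant order, so the factor $\ell_t(\theta,a)+\bloss_t(a)$ does not shrink. The first-order scaling therefore cannot come from a per-action comparison; it must be recovered by exploiting that the inverse-gap allocation assigns such actions probability $\pi(a)\le\frac{1}{\gamma(\bloss_t(a)-\bloss_t(\wh a))}$, so their $\Theta(1)$-variance contributions are suppressed and reorganize into terms proportional to either $K/\gamma$ or $\bloss_t(\wh a)\,\SIG_t(\IGW)$. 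Making this bookkeeping tight---in particular handling the cross terms $\sum_a\pi(a)(\bloss_t(a)-\bloss_t(\wh a))\int\DDH(p_t(\theta,a),\bp_t(a))\dd Q_t^+(\theta)$ by Cauchy--Schwarz and AM--GM without losing the loss scale---is the crux, and is exactly what separates this first-order argument from the worst-case bound of Lemma~\ref{lemma:IR_TS}.
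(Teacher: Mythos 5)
Your first three links are exactly the paper's (and they are indeed immediate from the definitions of \ROIDS, \VOIDS and inequality~\eqref{eq:IR_VIR}), and you have correctly located the real content in $\SIR_t(\IGW)\le 40K\min_a\ltb(a)$, as well as the key conversion $\frac{(p-q)^2}{p+q}\le 4\DDH(\Ber(p),\Ber(q))$ (Lemma~\ref{lemma:DH_DT_relation} plus data processing). But the central step of your plan does not close, and you say so yourself: you start from the \emph{standard} inverse-gap allocation $\pi(a)=\frac{1}{K+\gamma(\ltb(a)-\ltb(\wh a))}$ and the plain squared-error potential bound $\br_t\le c_1K/\gamma+c_2\gamma\sum_a\pi(a)\int(\ell_t(\theta,a)-\ltb(a))^2\dd Q_t^+(\theta)$, and then try to extract the factor $\ltb(\wh a)$ \emph{a posteriori}. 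As you note, $(\ell-\ltb)^2\le 4(\ell+\ltb)\DDH$ only produces the loss scale $\ltb(\wh a)$ at near-optimal actions; for suboptimal $a$ the surviving cross term is of the form $\gamma\sum_a\pi(a)(\ltb(a)-\ltb(\wh a))\int\DDH\,\dd Q_t^+$, and the suppression $\gamma\pi(a)(\ltb(a)-\ltb(\wh a))\le 1$ only bounds it by $\sum_a\int\DDH\,\dd Q_t^+(\theta)$, which is $O(K)$ rather than $O(K/\gamma)$ or $O(\ltb(\wh a)\SIG_t)$. That additive $O(K)$ term destroys the first-order bound, and no amount of Cauchy--Schwarz/AM--GM reorganization of the standard potential recovers it. The ``crux'' you defer is not bookkeeping; it is the theorem.

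The paper avoids this entirely by changing both the policy and the potential lemma. It uses the \emph{self-normalized} allocation $\pIGW_{\gamma,\ltb}(a)=\frac{\ltb(b)}{K\ltb(b)+\gamma(\ltb(a)-\ltb(b))}$ (equivalently, your policy with effective parameter $\gamma/\ltb(b)$), and invokes Lemma~4 of \citet{Foste_K21}, which for each fixed $\theta$ gives
\begin{equation*}
\ltb(b)-\lt(\theta,a^*_t(\theta))\le\frac{K}{4\gamma}\,\ltb(b)+2\gamma\sum_a\pIGW_{\gamma,\ltb}(a)\,\frac{(\ltb(a)-\lt(\theta,a))^2}{\ltb(a)+\lt(\theta,a)},
\end{equation*}
i.e.\ the triangular discrimination already appears \emph{inside} the potential bound, self-normalized per action, so the per-action conversion to $8\gamma\DDH$ is lossless and there are no cross terms to handle. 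The surrogate regret is then decomposed as $\br_t(\pIGW)=\sum_{a\ne b}\pIGW(a)(\ltb(a)-\ltb(b))+\int(\ltb(b)-\lt(\theta,a^*_t(\theta)))\dd Q_t^+(\theta)$, the first piece is bounded by $K\ltb(b)/\gamma$ directly from the policy definition (this decomposition is missing from your sketch), giving $\br_t\le\frac{5K\ltb(b)}{4\gamma}+8\gamma\SIG_t$ and hence $40K\ltb(b)\SIG_t$ after optimizing $\gamma$. To repair your proof you would either have to import that self-normalized potential lemma (at which point you are reproducing the paper's argument) or prove it yourself, which is precisely the work your proposal leaves undone.
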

See Appendix~\ref{subsubsec:ADEC_IGW_FOB} for a definition of the (\IGW) algorithm and the proof.
The term on the right-hand side can be further bounded as 
\begin{align*}
	\ADEC_{\mu,t}(\pi_t) &\leq \frac{10K}{\mu} \min_{a}\ltb(a) \leq \frac{10K}{\mu}(\EEt{\ltb(A_t)})
			     = \frac{10K}{\mu}(\EEt{\lt(\thetastar,A_t)} - \UE_t)
\end{align*}

The final tool is a refined version of Lemma~\ref{lemma:Underestimation_error_Hellinger} that controls
the underestimation error in terms of the information gain and the current estimate of the loss.
\begin{lemma}
\label{lemma:Underestimation_error_FOB}
For any t and $ \gamma >0 $, the underestimation error is bounded as
\begin{equation}
	\UE_t \leq \frac{\IG_t(\pi_t)}{\gamma} + 2\gamma \EEt{\lt(\thetastar, A_t)}.
\end{equation}
\end{lemma}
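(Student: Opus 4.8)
The plan is to unfold the definition of the underestimation error and reduce the statement to an elementary inequality comparing the means of two loss distributions to their squared Hellinger distance. Recalling that $\ltb(a) = \int \ell_t(\theta,a)\,\dd Q_t^{+}(\theta)$ and that the expectation in $\UE_t = \EEt{\lt(\thetastar,A_t) - \ltb(A_t)}$ only averages over $A_t\sim\pi_t$, I would first write
\[
\UE_t = \sum_{a} \pi_t(a) \int \bpa{\ell_t(\thetastar,a) - \ell_t(\theta,a)}\,\dd Q_t^{+}(\theta).
\]
Since $\ell_t(\theta,a)$ is by definition the mean of the loss distribution $p_t(\theta,a)$ supported on $[0,1]$, it suffices to establish, for each fixed $a$ and $\theta$, the pointwise bound
\[
\ell_t(\thetastar,a) - \ell_t(\theta,a) \le \tfrac{1}{\gamma}\DDH\bpa{p_t(\thetastar,a),p_t(\theta,a)} + 2\gamma\, \ell_t(\thetastar,a),
\]
because integrating this over $Q_t^{+}$ and averaging over $\pi_t$ reproduces exactly $\frac{\IG_t(\pi_t)}{\gamma} + 2\gamma\,\EEt{\lt(\thetastar,A_t)}$ — here the last term does not depend on $\theta$, so its posterior integral is trivial.

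To prove the pointwise inequality, abbreviate $P = p_t(\thetastar,a)$, $P' = p_t(\theta,a)$ with means $m = \ell_t(\thetastar,a)$ and $m' = \ell_t(\theta,a)$, and set $f = \sqrt{\dd P/\dd Q}$, $g = \sqrt{\dd P'/\dd Q}$ for a common dominating measure $Q$. The key trick is to write $m - m' = \int x\,(f-g)(f+g)\,\dd Q$ and to split the factor $x \in [0,1]$ as $\sqrt{x}\cdot\sqrt{x}$, attaching one copy to each factor before applying Cauchy--Schwarz:
\[
m - m' \le \sqrt{\textstyle\int x\,(f-g)^2\,\dd Q}\cdot\sqrt{\textstyle\int x\,(f+g)^2\,\dd Q}.
\]
Using $x \le 1$, the first factor is at most $\sqrt{2\DDH(P,P')}$; using $2fg \le f^2 + g^2$ together with $\int x f^2\,\dd Q = m$ and $\int x g^2\,\dd Q = m'$, the second factor is at most $\sqrt{2(m+m')}$. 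This yields $m - m' \le 2\sqrt{\DDH(P,P')\,(m+m')}$. The weighting by $\sqrt{x}$ is exactly what makes the loss scale $m+m'$ survive into the second factor instead of a crude constant, and is the source of the first-order behavior.

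It remains to convert this geometric-mean bound into the claimed additive form. By the inequality of arithmetic and geometric means, $2\sqrt{\DDH(P,P')\,(m+m')} \le \frac{1}{\gamma}\DDH(P,P') + \gamma(m+m')$ for every $\gamma>0$. A short case distinction finishes the argument: if $m \le m'$, then $m-m'$ is non-positive while the target right-hand side is non-negative, so the bound is immediate; otherwise $m > m'$ forces $m+m' \le 2m$, hence $\gamma(m+m') \le 2\gamma m = 2\gamma\,\ell_t(\thetastar,a)$. Integrating over $Q_t^{+}$ and averaging over $\pi_t$ then recovers the statement. I do not expect a serious obstacle; the only delicate points are remembering to weight the Cauchy--Schwarz split by $\sqrt{x}$ so that the loss magnitude enters the bound, and the final case analysis that trades $m+m'$ for $2m$ at the cost of only the \emph{true} loss.
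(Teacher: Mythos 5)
Your proof is correct, and its overall architecture matches the paper's: both decompose $\UE_t$ as $\sum_a \pi_t(a)\int\bpa{\lt(\thetastar,a)-\lt(\theta,a)}\dd Q_t^+(\theta)$, both use AM--GM to split each gap into a Hellinger term (absorbed into $\IG_t/\gamma$) and a loss-magnitude term, and both exploit the sign of $\lt(\thetastar,a)-\lt(\theta,a)$ to replace $\lt(\thetastar,a)+\lt(\theta,a)$ by $2\lt(\thetastar,a)$ — the paper does this by restricting the sum to the underestimated actions $\A^-_\theta$ up front, you do it by a pointwise case distinction, which is the same idea. The one genuine divergence is in how the key sub-inequality $(m-m')^2 \le 4\,\DDH(P,P')\,(m+m')$ is established. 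The paper proves it only for Bernoulli laws (its Lemma on $\frac{(p-q)^2}{p+q}\le 4\DDH(\Ber(p),\Ber(q))$) and then transfers it to general $[0,1]$-supported loss distributions via the data-processing inequality for the randomized-rounding channel. You instead prove it directly for arbitrary $[0,1]$-supported $P,P'$ by a weighted Cauchy--Schwarz, inserting $\sqrt{x}\cdot\sqrt{x}$ into $\int x(f-g)(f+g)\,\dd Q$ so that the means $m+m'$, rather than a crude constant, appear in the second factor. This is self-contained, avoids the Bernoulli reduction and the data-processing step entirely, and is arguably the cleaner derivation of exactly the inequality the paper needs; the paper's route has the minor advantage of reusing a cited result. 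No gaps.
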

See Appendix \ref{subsubsec:Underestimation_error_FOB} for the proof. Putting this together with the previous regret 
decomposition, as long as $ \frac{10K}{\mu} \leq 1 $, we get:
\begin{equation}
\label{eq:regret_decomposition_FOB_ana}
	\EE{r_t} \leq  \EE{  \left(4\mu + \frac{1}{\gamma}\cdot \left(1-\frac{10K}{\mu}\right)\right) \IG_t(\pi_t) + \OG_t + 
\left(2\gamma\left(1- \frac{10K}{\mu}\right) +\frac{10K}{\mu} \right)\lt(\thetastar ,A_t)},	
\end{equation}
As before, we will regard the term  $ \bpa{4 \mu + \frac{1}{\gamma}\cdot \bpa{ 1 - \frac{10K}{\mu} }}\IG_t 
+ \OG_t $ as the optimistic estimation error, and adapt Lemma~\ref{lemma:EWF_bound_WC} to provide a refined bound on 
this quantity:
\begin{lemma}
\label{lemma:EWF_bound_FOB}
	Let $0<\eta < \frac{1}{2}$, $\lambda >0$, and $\beta=\frac{1}{1- 2 \eta}$. %
	Then, the optimistic estimation error satisfies
\begin{equation}
        \sum_{t=1}^{T} \pa{\frac{2\eta}{\lambda}\cdot\IG_t(\pi_t)  + \left( 1-\frac{\lambda \beta}{2} \right)\OG_t} \leq 
\frac{\log N}{\lambda} + \frac{\lambda\beta }{2}\sum_{t=1}^{T}\lts(\thetastar).
\end{equation}
\end{lemma}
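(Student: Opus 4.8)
The plan is to run the standard exponential-weights potential argument on the optimistic posterior and then extract first-order (loss-dependent) terms from both the likelihood factor and the feel-good factor. Writing $W_t(\theta) = (p_t(L_t|\theta,A_t))^\eta \exp(-\lambda\lossmin_t(\theta))$ for the per-round weight and $Z_t = \int W_t(\theta)\,\dd Q_t^+(\theta)$ for its normaliser, the update $Q_{t+1}^+(\theta) = Q_t^+(\theta)W_t(\theta)/Z_t$ telescopes to $Q_{T+1}^+(\thetastar) = Q_1(\thetastar)\prod_t W_t(\thetastar)/\prod_t Z_t$. Since $Q_{T+1}^+(\thetastar)\le 1$, taking logarithms yields a pathwise bound on $\sum_t(-\log Z_t)$. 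The key simplification is to factor the true likelihood out of $Z_t$, writing $Z_t = (p_t(L_t|\thetastar,A_t))^\eta\,\wt Z_t$ with $\wt Z_t = \int (p_t(L_t|\theta,A_t)/p_t(L_t|\thetastar,A_t))^\eta\exp(-\lambda\lossmin_t(\theta))\,\dd Q_t^+(\theta)$. The awkward (and for continuous losses possibly unbounded) log-likelihood terms at $\thetastar$ then cancel between the two sides of the telescoped inequality, leaving the clean pathwise statement $\sum_t(-\log\wt Z_t)\le \log\tfrac{1}{Q_1(\thetastar)} + \lambda\sum_t\lossmin_t(\thetastar)$. I would take full expectations (as in the worst-case counterpart, Lemma~\ref{lemma:EWF_bound_WC}) and reduce the whole claim to a per-round lower bound on $\EEt{-\log\wt Z_t}$.

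The heart of the argument is this per-round bound, which must reproduce the information-gain coefficient $2\eta$ and the optimality-gap coefficient. I would first pass to $\EEt{-\log\wt Z_t}\ge -\log\EEt{\wt Z_t}$ by convexity of $-\log$, and evaluate $\EEt{\wt Z_t}$ by Fubini: averaging the likelihood ratio over $L_t\sim p_t(\thetastar,A_t)$ turns it into the $\eta$-affinity $\int p_t(L|\thetastar,a)^{1-\eta}p_t(L|\theta,a)^\eta\,\dd L$. Applying concavity of $u\mapsto u^{2\eta}$ for $2\eta\le 1$ to $u=\sqrt{p_t(\theta,a)/p_t(\thetastar,a)}$ bounds this affinity by $1-2\eta\DDH(p_t(\thetastar,a),p_t(\theta,a))$, which, integrated against $\pi_t$ and $Q_t^+$, produces exactly the $2\eta\,\IG_t(\pi_t)$ term. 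For the feel-good factor I would use the elementary inequality $e^{-x}\le 1-x+\tfrac{x^2}{2}$ together with the first-order trick $\lossmin_t(\theta)^2\le\lossmin_t(\theta)$ (valid since losses lie in $[0,1]$), so that $\exp(-\lambda\lossmin_t(\theta))\le 1-\lambda\lossmin_t(\theta)(1-\tfrac{\lambda}{2})$ and, after integration, one obtains a $\lambda\ltsb$ term together with a second-order remainder proportional to $\lambda^2\ltsb$ rather than to $\lambda^2$. This is precisely the step where the $T$-dependence in Lemma~\ref{lemma:EWF_bound_WC} gets replaced by the cumulative optimal loss.

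Finally I would collect terms. Combining a per-round bound of the form $-\log\EEt{\wt Z_t}\ge 2\eta\,\IG_t(\pi_t) + \lambda\ltsb - \tfrac{\lambda^2\beta}{2}\ltsb$ with the telescoped inequality, and substituting $\ltsb = \OG_t + \lossmin_t(\thetastar)$, the genuine optimal-loss contributions from the feel-good expansion cancel against the $\lambda\sum_t\lossmin_t(\thetastar)$ term on the right-hand side of the telescope, while the remaining optimality-gap contributions assemble into the coefficient $(1-\tfrac{\lambda\beta}{2})\OG_t$ and the leftover second-order term becomes $\tfrac{\lambda^2\beta}{2}\sum_t\lts(\thetastar)$. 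Dividing by $\lambda$ and using $\log\tfrac{1}{Q_1(\thetastar)}=\log N$ for the uniform prior then matches the stated inequality. I expect the main obstacle to be purely one of bookkeeping the constants so that the second-order remainder carries exactly $\beta=\tfrac{1}{1-2\eta}$: a crude separate treatment of the two factors only yields $\beta\approx 1$, so matching the claimed constant requires routing the combined log-partition function $\log\EEt{\wt Z_t}$ through a single Bernstein/MGF-style second-order estimate, in which the constraint $\eta<\tfrac12$ is exactly what keeps the relevant moment finite and surfaces the factor $\tfrac{1}{1-2\eta}$. The only genuinely delicate conceptual point is the legitimacy of the first-order replacement, which hinges entirely on the inequality $\lossmin_t(\theta)^2\le\lossmin_t(\theta)$ and hence on the losses being supported in $[0,1]$.
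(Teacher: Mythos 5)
Your overall architecture coincides with the paper's: the telescoping potential argument on the optimistic posterior with the true likelihood factored out is exactly Lemma~\ref{lemma:EWF_bound} specialized to $Q^*=\delta_{\thetastar}$, the affinity-to-Hellinger step is Lemma~\ref{lemma:EWF_bound_IG}, the second-order expansion of the feel-good factor using $\lts(\theta)^2\le\lts(\theta)$ is Lemma~\ref{lemma:FOB_bound_OG}, and the final substitution $\ltsb=\OG_t+\lts(\thetastar)$ with cancellation of the optimal-loss terms is the same bookkeeping. The genuine gap is the step you flag yourself at the end: how to separate the likelihood-ratio factor from the feel-good factor inside the single integral $\int \bpa{p_t(L_t|\theta,A_t)/p_t(L_t|\thetastar,A_t)}^{\eta}e^{-\lambda\lts(\theta)}\,\dd Q_t^+(\theta)$. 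Treating the product directly, the per-$\theta$ bound has the form $(1-2\eta\DDH)\bpa{1-\lambda\lts(1-\tfrac{\lambda}{2})}$, whose cross term is nonnegative and therefore cannot be dropped in the direction you need; routing everything through a single combined estimate $-\log\int e^{-2\eta\DDH(\theta)-\lambda\lts(\theta)}\dd Q_t^+(\theta)$ and expanding to second order produces a $4\eta^2\DDH^2$ term and an $\eta\lambda\DDH\lts$ cross term that degrade the information-gain coefficient from $2\eta$ to roughly $2\eta(1-\eta-\lambda)$, while leaving the optimality-gap remainder at $\beta\approx 1$. No Bernstein/MGF-style estimate of this kind surfaces the factor $\frac{1}{1-2\eta}$, so the stated constants are not reproduced.

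The paper's mechanism is different and simpler: H\"older's inequality applied to the integral over $\theta$ with conjugate exponents $\alpha=\frac{1}{2\eta}$ and $\beta=\frac{1}{1-2\eta}$, that is, $\int fg\,\dd Q \le \pa{\int f^{\alpha}\dd Q}^{1/\alpha}\pa{\int g^{\beta}\dd Q}^{1/\beta}$ with $f$ the likelihood ratio raised to the power $\eta$ and $g=e^{-\lambda\lts}$. This raises the likelihood ratio to the power $\eta\alpha=\frac12$, so the Hellinger affinity enters with coefficient exactly $\frac{1}{\alpha}=2\eta$, and raises the feel-good factor to $e^{-\lambda\beta\lts(\theta)}$, so the quadratic remainder from Lemma~\ref{lemma:FOB_bound_OG} carries exactly $\frac{\lambda\beta}{2}$. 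The condition $\eta<\frac12$ is what makes $\alpha>1$ a legitimate H\"older exponent --- it is not a moment-finiteness condition. With this one substitution the rest of your argument goes through and yields the stated inequality.
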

See Appendix \ref{subsubsec:EWF_bound_FOB} for the proof.
The claim of the theorem is then proved by tuning the hyperparameters in a way that the quantity bounded in the previous 
Lemma matches the optimistic estimation error. We provide the details of this in Appendix~\ref{subsec:regret_IDS_FOB}.

\section{Discussion}\label{sec:discussion}
We have proposed a new analysis framework that bridges the concepts of information ratio and decision-estimation 
coefficient, and unifies the advantages of both frameworks. We provide some further discussion on our results below.

\paragraph{General bounded losses.} At the surface level, it may seem that our results only apply to well-specified 
models where the likelihood model correctly captures the distribution of the random losses. This is of course a very 
restrictive assumption. However, it is easy to see that our framework can tackle arbitrary bounded losses via a 
standard binarization trick~\citep{agarwal2013further}: supposing that the losses are bounded in $[0,1]$, they can be 
randomly rounded to $\ev{0,1}$ to apply \OIDS with a Bernoulli likelihood. It is easy to see that the regret bounds for 
these post-processed losses continue to hold for the original losses as well. We presume that our approach can be 
generalized beyond such sub-Bernoulli and sub-Gaussian losses to more general sub-exponential-family losses, but we 
leave the investigation of this generalization open for future work.

\paragraph{Multiplicative or additive tradeoff?} All of our results are stated in terms of both the surrogate 
information ratio, which measures the regret-to-information tradeoff multiplicatively, and the averaged DEC, which does so 
in an additive fashion. Based on these results, it is not immediately clear which of the two notions is more useful. 
Equations~\eqref{eq:IR_VIR} and~\eqref{eq:VIR_IR} suggest that the ADEC is always smaller than the information ratio, 
which may suggest that it may yield better guarantees. To a certain degree, \citet{Russo_V17} have already addressed 
this question: their Proposition~11 shows that measuring the regret-information tradeoff additively results in strictly 
\emph{worse} regret for a range of hyperparameter choices. While at the surface, this seems to defy the 
intuition provided our results, in reality their additive tradeoff is only vaguely related to the one we consider, and 
the regularization range for which the result holds does not seem to be practical in the first place. 
On the other hand, \citet{Foste_K_Q_R22} make a more robust argument against the information ratio in comparison with 
the DEC, showing that there are some hard problems for which the information ratio is infinite but 
the DEC remains finite (see their Section~9.3). Besides the fact that their information ratio is defined in an 
unorthodox way via the same conservative supremum as what appears in the definition of the DEC, this claim seems to miss 
some important follow-up work on \IDS that has already addressed this issue. Specifically, \citet{Latti_G21} have 
pointed out that the information ratio is only suitable for problems where the minimax regret is of the order $\sqrt{T}$ 
(which one can already notice by inspecting the general bound of Equation~\ref{eq:BR_bound_LIR}), and studying harder 
games with larger minimax regret may be done by introducing a generalized notion of information ratio that features a 
different power of the regret in the denominator. In the present paper, we decided to stay impartial and state our 
results for both flavors of optimistic \IDS, and we hope that this debate will progress productively in the future. 

\paragraph{Connection with the Bayesian DEC.} The attentive reader may have noticed that a notion closely related to 
our averaged DEC has already been mentioned in the original work of \citet{Foste_K_Q_R22}. Indeed, their Section~4.2 
proposes a Bayesian version of the \ETD algorithm that optimizes $\ADEC_{\gamma,t}(\cdot;Q_t)$, where $Q_t$ is the 
exact Bayesian posterior over the model parameters. They show that the resulting algorithm enjoys essentially 
the same guarantees on the Bayesian regret as the worst-case guarantees obtained by the standard \ETD method. Our 
approach effectively considers the same optimization objective, with the important change that the standard Bayesian 
posterior is replaced with the optimistic posterior of \citet{Zhang_21}. This not only strengthens the mentioned 
results of \citet{Foste_K_Q_R22} by removing the Bayesian assumption necessary for its analysis, but also allows us to 
obtain instance-dependent guarantees as well. We believe that the same instance-dependent improvements (and more) 
should be directly provable for the Bayesian \ETD method of \citet{Foste_K_Q_R22}, but we did not pursue this direction 
as we preferred to focus on pointwise regret guarantees this time. We provide further discussion on variants of 
the DEC (including the Bayesian DEC) and the information ratio in Appendix~\ref{app:complexities}.

\paragraph{Beyond contextual bandits.} For the sake of simplicity, we have presented our results within the relatively 
modest framework of contextual bandits. That said, it is clear that our framework can be generalized to the much 
broader setting of ``decision making with structured observations'' studied by \citet{Foste_K_Q_R22}, and that it 
can be used to prove regret bounds of the form of Theorem~\ref{thm:regret_IR} straightforwardly in said setting. 
However, so far we could only prove quantitative improvements over the DEC for contextual bandits, and thus we decided 
not to let down the reader by introducing a very general setting and then only providing interesting results in a 
narrow special case. Nevertheless, our results demonstrate that our framework can achieve strictly superior upper 
bounds on the regret in a highly nontrivial setting that has been studied extensively (see, e.g., 
\citealp{Agar_K_L_L17,Allen_B_L18,Foste_K21,Bubec_S22,Olkh_M_vE_N_W23}).

\paragraph{Lower bounds.} A very important question is if our notion of averaged DEC can also serve as a lower bound on 
the minimax regret like its original version proposed by \citet{Foste_K_Q_R22}. Since the ADEC is a lower 
bound on the DEC under a special choice of nominal model, we conjecture that it can also be used to lower bound the 
minimax regret in the same ``low-probability'' fashion as the original results of \citet{Foste_K_Q_R22}. On the same 
note, we remark that it seems unlikely that our DEC variant can be reconciled with the ``constrained DEC'' of 
\citet{Foste_G_H23}, which has so far yielded the tightest lower bounds on the regret within this family of complexity 
notions. Whether or not the averaging idea we advocate for in this paper will turn out to be useful for fully 
characterizing the minimax regret in sequential decision making remains to be seen.

\paragraph{Noiseless problems and Safe Bayes.} It is interesting to observe that the optimistic posterior updates used 
by our method simplify drastically in the special case of ``noiseless'' problems where $\ell^*(\theta,X_t) = 0$ holds 
for all $\theta$. This condition holds in two of the examples discussed in Appendix~\ref{app:examples}, and more 
broadly in all problems where the optimal policy is guaranteed to achieve zero loss under all candidate parameters 
$\theta$. As a more concrete example, we highlight the problem of bandit linear classification with surrogate losses, 
which satisfies this condition if the data is separable with a margin \citep{KST08,BOZ17,BPSzTWZ19}. In such noise-free 
problems, the optimistic posterior update collapses to $\frac{dQ_{t+1}^+}{dQ_t^+}(\theta) \propto 
(p_t(L_t|\theta,A_t))^\eta$, which is closer to the standard Bayesian update up to the important difference that it 
involves the ``stepsize'' parameter $\eta$. Interestingly, such ``generalized'' or ``safe'' Bayesian updates have been 
studied extensively in the context of statistical learning under misspecified models---see, e.g., 
\citet{Zhang_06,Zhang_06b,Gru12,dHKGM20}. This connection leads to a multitude of questions that we cannot hope to 
address in this short discussion, so we close with mentioning only one aspect that we find to be particularly exciting. 
Specifically, we wonder if the techniques established in these works could be useful for addressing misspecification in 
the context of sequential decision making under uncertainty, where this issue has been notoriously hard to formalize and 
handle \citep{Du_K_W_Y20,LSzW20,WASz21}. We leave this exciting question open for future research.

\acks{This project has received funding from the European Research Council (ERC) under the European Union’s Horizon 2020 
research and innovation programme (Grant agreement No.~950180). M.~Papini was supported by the European Union -- Next 
Generation EU within the project NRPP M4C2, Investment 1.,3 DD. 341 - 15 march 2022 -- FAIR -- Future Artificial 
Intelligence Research -- Spoke 4 -
PE00000013 - D53C22002380006.}

\bibliography{ref}
\clearpage
\appendix
\section{Examples}\label{app:examples}
The most appealing property of \IDS in the Bayesian setting is that it can take advantage of the structure of the 
problem at hand to achieve extremely good performance that is otherwise not achievable by methods like Thompson 
sampling or UCB. Indeed, unlike these methods, \IDS has the ability to pick actions that are not optimal under any 
statistically plausible model, but can reveal useful information about the problem. \citet{Russo_V17} demonstrate 
several examples of situations where \IDS provably achieves massive speedups via such queries. It is not clear that 
such speedups are achievable without Bayesian assumptions, although some evidence was offered by the work of 
\citet{Kirsc_K18} in the case of linear rewards. In this section, we demonstrate that our version of \IDS can fully 
reproduce the fast learning behavior of Bayesian \IDS on the original examples of \citet{Russo_V17}, thus suggesting 
that \OIDS may inherit many more good properties of its Bayesian counterpart than what our main theoretical results 
show. We also provide an additional example on which we demonstrate that \OIDS can outperform DEC-based methods by 
addressing the over-conservatism encoded in the definition of the DEC.

\subsection{Revealing action}
We first adapt the ``revealing actions'' example of the original work of \cite{Russo_V17}. This example features the 
action set $\mathcal{A} = \{0,1,\ldots,K\}$, the set of parameters $\Theta = \{ 1,\ldots,K \}$, and the 
loss function $ \ell(\theta,a) = \mathbb{I}_{\ev{a>0,\,a \neq \theta}} + \mathbb{I}_{\ev{a=0}}(1- 
\frac{1}{2^\theta})$. The losses are deterministic and the agent gets loss $0$ by picking the action corresponding to 
the unknown parameter $ \thetastar$. Action $0$ is special, it results in a large loss that however encodes the 
identity of the optimal action. Thus, the optimal exploration strategy is to pick this revealing action once, read out 
the identity of the optimal action, and play that action until the end of time. \citet{Russo_V17} show that \IDS 
follows this exact strategy, and here we show that \OIDS does the same when taking as input a (completely 
noninformative) uniform prior over the parameters.

To show this, we will compute for any action the surrogate reward and surrogate information gain under the optimistic 
posterior (which is identical to the uniform prior, given that we are in the first round). For $ a \neq 0 $, the 
surrogate regret is written as
\begin{align*}
	\br_1(a) &= \int_{\Theta} \ell(\theta,a) - \ell(\theta)\,dQ_0(\theta) = 
\frac{1}{K}\sum_{\theta=1}^{K}(1-\II{a=\theta})= 1- \frac{1}{K},
\end{align*}
while for the revealing action, the surrogate regret is 
\begin{equation*}
	\br_1(0) = 1 - \frac{1}{K}+ \frac{2^{-K}}{K}.
\end{equation*}
In particular $ \bar{r_t}(0) > \bar{r}_t(a)$ so the action $0$ has the worst expected reward under our model. As 
for the information gain, we an explicit computation of the Hellinger distance for $a\neq 0 $ shows
\begin{equation*}
	\IG_t(a) = \frac{1}{K}\cdot \left(1-\sqrt{\frac{1}{K}}\right) + \frac{K-1}{K}\cdot 
\left(1-\sqrt{\frac{K-1}{K}}\right) = \OO\left(\frac{1}{K}\right).
\end{equation*}
Meanwhile, for action $0$ we have
\begin{equation*}
	\IG_t(0) = 1 - \sqrt{\frac{1}{K}} = \Theta(1).
\end{equation*}

\subsection{Sparse linear model}
Our second example is a linear bandit problem where the action space corresponds to a finite subset of the 
Euclidean unit ball $ \mathcal{A} = \{ \frac{x}{\norm{x}_1}: x \in \{ 0,1 \}^d,\, x \neq 0 \} $, the parameter space 
consists of the set of coordinate vectors $ \Theta = \{ \theta^{\prime}\in \{ 0,1 \}^d, \norm{\theta^{\prime}}_1 =1 \} 
$, and the loss function is $\ell(\theta,a) = 1-\siprod{a}{\theta}$. As in the previous example, the losses are again 
deterministic. This is a linear bandit problem where the parameter $ \theta $ is known to be $1$-sparse. In particular, 
the optimal action under the model $ \theta $ consists in only selecting action $ a=\theta $ so any Thompson Samling 
based algorithm will only select one coordinate at a time and will take up to $ d $ steps to determine the true 
parameter $\thetastar$. In contrast, the optimal exploration policy will perform binary search on the action space and 
find the optimal action exponentially faster.

To investigate the behaviour of \OIDS on this problem, we will compute the surrogate regret and surrogate information 
gain of an action $ a $. Since our prior is uniform, we have
\begin{equation*}
	\br_1(a) = \bloss_1(a)= \PP{\siprod{\thetastar}{a}>0} \cdot \frac{1}{\norm{a}_1} = 
\frac{\norm{a}_1}{d}\cdot \frac{1}{\norm{a}_1} = \frac{1}{d}
\end{equation*}
and 
\begin{align*}
	\IG_1(a) &= \frac{\norm{a}_1}{d}\cdot \left( 1 - \sqrt{\frac{\norm{a}_1}{d}} \right) + \frac{d-\norm{a}_1}{d}\cdot 
\left( 1-\sqrt{\frac{d-\norm{a}_1}{d}} \right)	\\
		      &= 1 - \left( \frac{\norm{a}_1}{d} \right)^{\frac{3}{2}} - \left( 1- \frac{\norm{a}_1}{d} 
\right)^{\frac{3}{2}}
\end{align*}
Thus, the expected reward of all actions is the same, and the information gain is maximized for actions with norm 
$\onenorm{a} = \frac{d}{2}$. \IDS thus picks an action $A_1$ uniformly at random and updates the posterior as follows.
If the observed loss is $1$, all parameters with $\iprod{\theta}{A_1} > 0$ will be 
eliminated by the posterior update. If the observed loss is smaller than $1$, all parameters 
satisfying $\iprod{\theta}{A_1} = 0$ are excluded. The posterior is thus set as uniform over all surviving parameters 
and the process repeats. Continuing along the same lines, we can see that both versions of \OIDS will continue 
performing binary search and identify the true parameter in $ \log_2d $ time steps.

\subsection{Bandits with a revelatory zero}
Our final example is a multi-armed bandit problem where the losses keep looking exactly the same until a 
low-probability event happens that reveals the optimal action perfectly.
In this setup (vaguely inspired by Example~3.3 of \citealp{Foste_K_Q_R22}), $\Theta=[K]$, and the losses are defined as 
uniformly distributed random variables in $[0,1]$ for all 
actions except $a = \theta$. For this special action, the loss is defined as $B_t U_t $, with $U_t$ uniform on 
$[0,1]$, and $B_t$ is Bernoulli with mean $1-2\Delta \in [0,1]$. The mean loss for this action is $\frac 12 - 
\Delta$. For this 
model, there is essentially no way for any algorithm to discover the optimal action until the first time that a loss of 
zero is observed. In this case, the (optimistic) posterior immediately collapses on $\thetastar$. Consequently, \OIDS 
keeps drawing uniform random actions until the first zero is observed, and plays the optimal action in all remaining 
rounds. The number of time steps spent with uniform exploration are geometrically distributed with mean 
$\frac{K}{2\Delta}$, thus making for a total regret of approximately $\frac K2$. Note that in this instance, the 
optimistic adjustment to the posterior is not necessary as the optimal loss of all models are the same, so the 
performance of the algorithm is unaffected by the choice of $\lambda$ or $\mu$.

Interestingly, the \ETD algorithm of \citet{Foste_K_Q_R22} cannot take advantage of the structure of this problem 
so effectively. When using the posterior predictive distribution $\bp_t$ as the nominal model, the Hellinger 
distance will approximately behave as $\DDH\pa{p(\theta,a), \hp_t(a)} \approx \II{\theta \neq \thetastar}$ after 
observing the first zero. Thus, the worst-case DEC associated with policy $\pi$ is
written as
\begin{align*}
 \DEC_{\gamma}(\pi;\bp_t,\Theta) &= \sup_{\theta} \ev{\ell(\theta,\pi) - \ell(\theta,a_\theta) - \gamma \II{\theta \neq 
\thetastar}} = \sup_{\theta} \ev{\Delta \sum_{a\neq \theta} \pi(a) - \gamma \II{\theta \neq \thetastar}}
\\
&= \sup_{\theta} \ev{\Delta (1-\pi(\theta)) - \gamma \II{\theta \neq \thetastar}}.
\end{align*}
When $\gamma \ge \Delta$, the expression in the supremum can be positive for certain policies $\pi$ and parameters 
$\theta\neq\thetastar$, and thus the $\theta$ player will prefer picking $\theta \neq \thetastar$ for some policies. 
More precisely, the DEC for any policy will be given as
\[
 \DEC(\pi;\Theta,\hp_t) = \max\ev{\Delta (1-\min_{a\neq\thetastar} \pi(a)) - \gamma,\,  \Delta (1-\pi(\thetastar))}.
\]
In the extreme case $\gamma = 0$, the policy achieving maximum value is approximately uniform, and it approximates the 
optimal policy $\pi^*$ gradually as $\gamma$ increases. When $\gamma$ is large enough, the alternative $\theta \neq 
\thetastar$ stops being attractive to the max player and \ETD starts outputting $\pi^*$. This happens at the threshold 
$\gamma > \Delta$ at the latest. This observation matches the discussion of \citet[Example~3.3]{Foste_K_Q_R22} and 
\citet[p.~8]{Foste_G_H23}, who demonstrate the same threshold behavior of the DEC and point out that this leads to 
tight lower bounds, without discussing the potential shortcomings of \ETD that prevents it from obtaining tight upper 
bounds. It is easy to see that \ETD fails because of the over-conservative definition of the DEC: while there is 
sufficient evidence to reject all alternative parameters, \ETD still computes its optimization objective by taking a 
supremum over \emph{all} model parameters $\theta$, including ones that have already been ruled out by the 
observations. This clearly demonstrates the advantage of the surrogate model used by \OIDS, which computes its 
objective with the help of the optimistic posterior distribution that allows faster elimination of unlikely 
parameters.

\section{A tour of the complexity zoo of sequential decision-making}\label{app:complexities}
The DEC and \IDS frameworks that serve as foundations of our work (as described in Section~\ref{sec:frameworks}) come 
in a variety of flavors, with a number of possible choices for how one can trade off estimation errors and regret. We 
attempt to do justice to this literature here by discussing some of the most prominent variants of the DEC and the 
information ratio that have appeared in previous work.

\paragraph{Decoupling coefficient.} \citet{Zhang_21} defined the \emph{decoupling coefficient} as a 
technical tool for analyzing the regret of the so-called ``feel-good Thompson sampling'' (FGTS) algorithm that samples 
its actions $A_t$ by sampling $\theta_t \sim Q_{t}^+$ from the optimistic posterior (defined in the same terms as in 
our paper), and then following the policy $\pi_{\theta_t}$. The resulting policy is called $\pi_t$. We give a 
simplified definition here and refer the reader to Section~4 of \citet{Zhang_21} for the full details. In our notation, 
the decoupling coefficient $\delta_t^*$ is defined as the smallest constant $\delta$ such that the following inequality 
holds:
\[
 \EEs{\loss_t(\theta,\pi_\theta) - \loss_t(\theta_0,\pi_\theta)}{\theta \sim Q_t^+} \le \inf_{\mu \ge 0} \ev{\mu \sum_a 
\pi_t(a) \EEs{\bpa{\loss_t(\theta,a) - \loss_t(\theta_0,a)}^2}{\theta\sim Q_t^+} + \frac{\delta}{4\mu}}.
\]
Evaluating the infimum and reordering gives that the optimal value of $\delta_t^*$ is given as
\[
 \delta_t^* = \frac{\bpa{\EEs{\loss_t(\theta,\pi_\theta) - \loss_t(\theta_0,\pi_\theta)}{\theta \sim Q_t^+}}^2}{\sum_a 
\pi_t(a) \EEs{\bpa{\loss_t(\theta,a) - \loss_t(\theta_0,a)}^2}{\theta\sim Q_t^+}}.
\]
When specialized to the Bayesian setting where $\theta_0$ is sampled from the posterior distribution $Q_t$ and the 
optimistic posterior is replaced by the true posterior, this coefficient can be seen to be very closely related to the 
information ratio of \citet{Russo_V15,Russo_V17}. Indeed, if the losses are assumed to follow a Gaussian distribution, 
\citet{Neu_O_P_S22} point out that the decoupling coefficient becomes identical to the information ratio. Importantly, 
a limitation of the decoupling coefficient is that, without Bayesian assumptions, it cannot serve as a foundation for 
designing algorithms since it depends on the unknown parameter $\theta_0$ and as such it cannot be optimized in an 
\IDS-like algorithmic framework. This limits the usefulness of the decoupling coefficient to analyzing 
Thompson-sampling-like algorithms, as done by \citet{Zhang_21} and \citet{Agarw_Z22}.

\paragraph{Optimistic DEC.} \citet{Foste_G_Q_R_S24} define an \emph{optimistic} version of the original \DEC of 
\citet{Foste_K_Q_R22}, which makes two important changes to the standard definition. The first is that the optimistic 
\DEC (that we'll abbreviate as \ODEC below) replaces the reference model by a reference distribution $Q$ and also makes 
a change to the definition of regret in the tradeoff between low losses and low estimation errors. Specifically, the 
\ODEC associated with a reference distribution $Q$, a model class $\Theta$ and a policy $\pi$ is defined as
\[
  \ODEC_{\gamma,t}(\pi;\Theta,Q) = \sup_{\theta\in\Theta} \sum_{a} \pi(a) \EEs{\pa{\ell(\htheta,X_t,a) - 
\ell(\theta,X_t,\pi_\theta) - \gamma \DDH\pa{p_t(\theta,a), p_t(\htheta,a)}}}{\htheta\sim Q}.
\]
This quantity bears some superficial similarity with the ADEC in the sense that it replaces the reference model with a 
distribution over model that resembles the optimistic posterior our notion is based on. Also, the ODEC requires 
controlling the same notion of optimistic estimation error that our analysis makes use of 
(cf.~Section~\ref{sec:worst-case-analysis}). Importantly however, the main difference between the ODEC and the ADEC 
is that the former takes a supremum over model parameters as opposed to averaging them out according to the optimistic 
posterior. This supremum ultimately makes the \ODEC more similar to the \DEC than the ADEC, and inherits all the 
conservativeness of the \DEC.

\paragraph{Bayesian DEC.} Section 4.2 of \citet{Foste_K_Q_R22} defines a Bayesian version of the DEC, defined in terms 
of the posterior distribution $Q_t$ as
\[
  \BDEC_{\gamma,t}(\pi) = \sum_a \pi(a) \int \pa{\ell_t(\theta,a) - 
\lossmin_t(\theta) - \mu \DDH\bpa{\bp_{t}(a),p_t(\theta,a)}} \dd Q_t(\theta),
\]
where $\bloss_t(\pi) = \int \loss_t(\theta,\pi) \dd Q_t(\theta)$. This exactly matches our definition of the ADEC up to 
the important difference that the \BDEC is defined in terms of the true Bayesian posterior $Q_t$ and not the optimistic 
posterior $Q_t^+$. Theorem~4.2 of \citet{Foste_K_Q_R22} provides a Bayesian regret bound for this method that follows 
via a few lines of straightforward calculations, thanks to being free from the burden of having to control the 
optimistic estimation error as we need to do in our analysis. Indeed, in the Bayesian setting, the estimation errors 
are much easier to control via a straightforward application of the standard exponential weights analysis (as done in, 
e.g., \citet{Neu_O_P_S22}). Due to the counterexample provided by \citet{Zhang_21} for Thompson sampling, we find it 
unlikely that Bayesian algorithms derived from the BDEC can guarantee bounded worst-case regret in general.


\paragraph{Algorithmic information ratio.} The work of \citet{XZ23} defined a notion of information ratio (called the 
``algorithmic information ratio'' or AIR) that does not require any Bayesian assumptions. Their methodology is 
closely related to the framework developed by \citet{Latti_G21} and \citet{Foste_R_S_S22} for \emph{adversarial} 
decision-making, and results in guarantees that hold without having to assume that the parameter $\theta_0$ remains 
fixed over time. Here we discuss a variant of the AIR that \citet{XZ23} call the ``model-index AIR'' (MAIR), and adapt 
the notations to allow a more evocative comparison between the MAIR and the ADEC. The MAIR for policy $\pi$ is defined 
in terms of a reference distribution $Q$ of models and a posterior belief distribution $\nu \in \Delta_{\Theta}$ as 
follows:
\[
  \MAIR_{\rho,\gamma}(\pi,\nu) = 
 \sum_a \pi(a) \int \pa{\ell_t(\theta,a) - 
\lossmin_t(\theta) - \mu \DDKL{\bp_{t}(a)}{p_t(\theta,a)}} \dd \nu(\theta) - \mu \DDKL{\nu}{\rho},
\]
with $\bp_t(a) = \int p_t(\theta,a) \dd \nu(\theta)$.
This quantity is very similar to the ADEC with the choice $\nu = Q_t^+$, up to the difference that the MAIR is defined 
in terms of the relative entropy instead of the Hellinger distance, and the presence of the additional regularization 
term $\mu \DDKL{\nu}{\rho}$ that penalizes beliefs that are very different from the reference distribution $\rho$ (and 
can be eliminated in principle by setting $\nu = \rho$). The key difference between the algorithms proposed by 
\citet{XZ23} and the ones studied in the present work and all the others discussed so far is in how a policy is derived 
from the complexity notions. Following \citet{Latti_G21} and 
\citet{Foste_R_S_S22}, \citet{XZ23} propose to choose their policies by finding a 
solution $(\pi_t,\nu_t)$ to the saddle-point optimization problem 
$\inf_{\pi\in\Delta_K} \sup_{\nu\in\Delta_{\Theta}} \MAIR_{\rho_t,\gamma}(\pi,\nu)$, where the reference distribution 
is updated as $\rho_{t+1} = \nu_t(\cdot|A_t,L_t)$ (the Bayesian posterior derived from the belief $\nu_t$ given the 
observations $L_t$ and $A_t$). This method has the merit of giving an update rule for the reference 
distribution that is defined without explicit Bayesian posterior updates, and being robust to adversarially chosen loss 
functions. On the other hand, the updates are arguably hard to compute in general (potentially even harder than 
computing exact posterior updates), and the robustness to adversarial data may come at the price of efficient 
exploitation of easy problem instances. Indeed, the supremum taken over parameter distributions $\nu$ may result in 
similarly conservative updates as other DEC-based algorithms, and may make it very challenging (if not impossible) to 
prove problem-dependent guarantees of the type we achieve in our work. We find it an intriguing open question to find 
out if this limitation of their method can be removed, and problem-dependent guarantees for algorithms based on the 
MAIR can be proved.

\allowdisplaybreaks

\section{Proofs of the main results}
We now give the complete proofs of our main results. We relegate most of the technical content into 
Appendix~\ref{app:technical_proofs} and only provide the main arguments here for better readability.

\subsection{The proof of Theorem~\ref{thm:regret_IR_FOB}}
\label{subsec:regret_IDS_FOB}
We continue from the regret bound obtained at the end of the analysis in Equation~
\ref{eq:regret_decomposition_FOB_ana}, that holds under the condition $ \frac{10K}{\mu} \leq 1 $:
\begin{align*}
	\EE{r_t} &\leq  \EE{  \left(4\mu + \frac{1}{\gamma}\cdot \left(1-\frac{10K}{\mu}\right)\right) \IG_t(\pi_t) + \OG_t 
+ 
\left(2\gamma\left(1- \frac{10K}{\mu}\right) +\frac{10K}{\mu} \right)\lt(\thetastar,A_t)}
\\
&\leq  \EE{  \left(4\mu + \frac{1}{\gamma} \right) \IG_t(\pi_t) + \OG_t 
+ 
\left(2\gamma +\frac{10K}{\mu} \right)\lt(\thetastar,A_t)},
\end{align*}
where in the last line we also used that $\IG_t$ and $\lt(\thetastar,A_t)$ are nonnegative to upper bound 
$1-\frac{10K}{\mu} \le 1$. In order to apply Lemma~\ref{lemma:EWF_bound_FOB}, we would like to manipulate the above 
expression so that the coefficients of $\IG_t$ and $\OG_t$ match. To this end, we use the condition that 
$\frac{\lambda \beta}{2} \leq \frac{1}{5}$, which ensures that $ 1 \leq \frac{1}{1-\frac{\lambda \beta}{2}}\leq 
\frac{5}{4}$ and thus we can continue the above bound as
\begin{equation*}
	\EE{r_t} \leq  \EE{\frac{5}{4}\cdot \left(   \left(4\mu + \frac{1}{\gamma}\right) \IG_t(\pi_t) + 
\left(1-\frac{\lambda \beta}{2}\right)\OG_t + \left(2\gamma +\frac{10K}{\mu} \right)\lt(\thetastar,A_t) \right)}.
\end{equation*}
To apply Lemma~\ref{lemma:EWF_bound_FOB}, we choose $ \eta = \frac{1}{4}, \beta=2, \gamma = \frac{1}{\mu} = 
10 \lambda $, and sum over all rounds to obtain
\begin{align*}
	\EE{R_T} &\leq \EE{\frac{5}{4}\cdot \frac{\log N}{\lambda} +\frac{5\lambda}{4}\sum_{t=1}^{T}\lts(\thetastar)+ (125K 
+ 25) \lambda \sum_{t=1}^{T}\lt(\thetastar,A_t) }\\
	&\leq \EE{\frac{5}{4}\cdot \frac{\log N}{\lambda} + (125K + 27) \lambda \sum_{t=1}^{T}\lt(\thetastar,A_t) 
},
\end{align*}
where we upper-bounded the optimal loss $\frac{5\lambda}{4} \ell^*_t(\thetastar)$ by $2\lambda \lt(\thetastar,A_t)$ in 
the last step.
Introducing the notation $ \wh{L}_T = \sum_{t=1}^{T}\lt(\theta*,A_t) $ and $L_t^{*} = \sum_{t=1}^{T} \lts(\thetastar)$, 
the two sides of the equation can be rewritten as
\[
 R_T = \wh{L}_T - L_t^{*} \le \EE{\frac{5}{4}\cdot \frac{\log N}{\lambda} + (125K + 27) \lambda 
\wh{L}_T }.
\]
Hence, after some reordering we arrive at
\begin{equation*}
	\EE{R_T}\cdot \left(1 - (125K + 27) \lambda\right) \leq  \EE{\frac{5}{4}\cdot \frac{\log N}{\lambda} + (125K + 27) 
\lambda L_T^{*}}.
\end{equation*}
If $ \lambda < \frac{1}{2(125K + 30)} $, we can divide both sides of the inequality by $ (1- 
(125K+27) \lambda) $ to obtain
\begin{equation*}
	\EE{R_T} \leq \EE{\frac{5}{2}\cdot \frac{\log N}{\lambda} + (250K + 54) \lambda L^{*}},
\end{equation*}
where $ L^{*} $ is an upper bound on $ \EE{L_T^{*}} $. Finally, we plug the value $ \lambda = \sqrt{\frac{5\log 
N}{(500K+108)L^{*}}}\land \frac{1}{250K + 54} $ to get the regret bound of Theorem \ref{thm:regret_IR_FOB}.

\subsection{The proof of Theorem~\ref{thm:regret_IDS_metric}}\label{subsubsec:regret_IDS_metric}
The only difference with the finite parameter space analysis is in the control of the optimistic estimation error. 
In particular, we only need to adapt our analysis of the optimistic posterior and Lemma~\ref{lemma:EWF_bound_WC} to get 
the regret bound claimed in Theorem~\ref{thm:regret_IDS_metric}. We do this with the following lemma.
\begin{lemma}
\label{lemma:EWF_bound_metric}
	Let $0<\eta < \frac{1}{2}$, $\lambda >0$, and $\beta=\frac{1}{1- 2 \eta}$, assume the hypothesis of Theorem~\ref{thm:regret_IDS_metric} hold.
	Then, 
the following inequality holds :
\begin{equation}
	\EE{\sum_{t=1}^{T} \pa{\frac{2\eta}{\lambda}\cdot\IG_t(\pi_t)  + \OG_t}} \leq \frac{d\log \frac{R}{\epsilon} }{\lambda} + \frac{\lambda\beta T}{8} + \left( \frac{\eta}{\lambda} +1 \right)\cdot CT \epsilon.
\end{equation}
\end{lemma}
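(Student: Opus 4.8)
The plan is to adapt the exponential-weights argument that proves Lemma~\ref{lemma:EWF_bound_WC} to the infinite-parameter setting by comparing against a small ball around $\thetastar$ rather than the single point $\thetastar$. In the finite case, the term $\frac{\log \frac{1}{Q_1(\thetastar)}}{\lambda}$ arises from the potential-function analysis of the optimistic posterior updates, where one tracks $-\frac{1}{\lambda}\log Q_t^+(\thetastar)$ as a telescoping potential. When $\Theta$ is a continuum, $Q_1(\thetastar)=0$ and this quantity is meaningless, so the standard fix is to replace $\thetastar$ by the event that $\theta$ lands in a ball $B_\epsilon$ of radius $\epsilon = \frac{1}{CT}$ centered at $\thetastar$. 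Under a uniform prior, $Q_1(B_\epsilon) \ge (\epsilon/R)^d$ up to constants (since $B_\epsilon \subseteq \Theta$ by hypothesis), which yields the leading term $\frac{d\log(R/\epsilon)}{\lambda}$.

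\textbf{First I would} set up the potential $\Phi_t = -\frac{1}{\lambda}\log Q_t^+(B_\epsilon)$ and mimic the telescoping estimate from the proof of Lemma~\ref{lemma:EWF_bound_WC}, tracking the cumulative effect of the updates $\frac{dQ_{t+1}^+}{dQ_t^+}(\theta) \propto (p_t(L_t|\theta,A_t))^\eta \exp(-\lambda \lossmin_t(\theta))$. The key difference is that, whereas the finite-case analysis evaluates the log-likelihood and optimal-loss terms exactly at $\thetastar$, here I must control them uniformly over $\theta \in B_\epsilon$. This is precisely where the Lipschitz hypothesis enters: since the log-likelihood $p(\cdot,x,a,L)$ is $C$-Lipschitz, for any $\theta \in B_\epsilon$ we have $|\log p_t(L_t|\theta,A_t) - \log p_t(L_t|\thetastar,A_t)| \le C\epsilon = \frac{1}{T}$, and similarly $\lossmin_t$ can be related to its value at $\thetastar$ up to a Lipschitz error of order $\epsilon$. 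Summing these perturbations over the $T$ rounds contributes the additional term $\bpa{\frac{\eta}{\lambda} + 1} CT\epsilon$: the $\frac{\eta}{\lambda}$ factor multiplies the per-round log-likelihood discrepancy (weighted by $\eta$ from the update exponent and $\frac{1}{\lambda}$ from the potential normalization), while the $+1$ accounts for the optimal-loss discrepancy entering $\OG_t$ directly.

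\textbf{The hard part will be} carefully propagating the Lipschitz errors through the convexity/Jensen step that converts the log-partition-function bound into the information-gain and optimality-gap terms. In the finite-parameter proof, one uses a Donsker--Varadhan-type variational inequality or a direct subgaussianity argument to extract the $\frac{\lambda\beta T}{8}$ term from the exponential moment of the optimistic loss estimate; replicating this when the ``good'' event is a whole ball $B_\epsilon$ rather than a single atom requires showing that restricting the posterior to $B_\epsilon$ does not degrade these bounds, and that the uniform Lipschitz control suffices to keep every instance of $\thetastar$ that appeared in the finite proof valid up to the additive $CT\epsilon$ slack. Once these pieces are in place, choosing $\epsilon = \frac{1}{CT}$ makes the final term $\bpa{\frac{\eta}{\lambda}+1}$ a harmless $O(1)$ contribution, and the remaining structure is identical to Lemma~\ref{lemma:EWF_bound_WC}, so the proof of Theorem~\ref{thm:regret_IDS_metric} then follows by the same hyperparameter tuning as in the finite case with $\log N$ replaced by $d\log(RCT)$.
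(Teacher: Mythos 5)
Your proposal is correct and follows essentially the same route as the paper: the paper's master Lemma~\ref{lemma:EWF_bound} is already stated for an arbitrary comparator distribution $Q^*$, and the actual proof simply instantiates $Q^*$ as the uniform distribution on the ball $B_\epsilon$ around $\thetastar$, yielding $\DKL{Q^{*}}{Q_1}\le d\log(R/\epsilon)$ (Lemma~\ref{lemma:DKL_ball}) and the $\bpa{\frac{\eta}{\lambda}+1}CT\epsilon$ slack from the Lipschitz control of the log-likelihood and of $\lossmin_t$ (Lemmas~\ref{lemma:Lipschitzness_losses} and~\ref{lemma:EWF_bound_Lipschitz}), exactly as you describe. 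The only minor point is that the $\frac{\lambda\beta T}{8}$ term (Lemma~\ref{lemma:Hoeffding_bound_OG}) concerns the posterior integral on the left-hand side and does not involve the comparator at all, so it carries over unchanged rather than requiring the extra care you anticipate.
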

The proof is found in Appendix~\ref{subsubsec:EWF_bound_metric}.
We can now put this together with the regret decomposition of Equation~\eqref{eq:regret_decomposition_DEC_IG_OG}. As in
the proof of Theorem~\ref{thm:regret_IR}, we need to pick the hyperparameters such that the optimistic estimation error
matches the left hand side of the previous lemma. The same choice of hyperparameters $ \eta=\frac{1}{4}, \beta=2, $ and
$ \gamma = \frac{1}{\mu} = 10\lambda $ combined with
Lemma~\ref{lemma:IR_TS} gives us the following bound
\begin{equation}
	\EE{R_T} \leq \lambda T(20K + \frac{1}{4} + 5) + \frac{d\log \frac{R}{\epsilon}}{\lambda} + \left( \frac{1}{4\lambda} + 1 \right)\cdot CT \epsilon.
\end{equation}
Picking $ \epsilon = 1/(CT) $ gives us 
\begin{equation}
	\EE{R_T} \leq \frac{2d \log RCT}{\lambda} + \lambda T \pa{20K + \frac{21}{4}} + 1,
\end{equation}
where we used $ \frac{1}{4} \leq d \log RCT $.
Finally picking $ \lambda = \sqrt{\frac{2d \log (RCT)}{T\pa{20K + \frac{21}{4}}}} $ recovers the claim of 
Theorem~\ref{thm:regret_IDS_metric}.

\subsection{The proof of Theorem~\ref{thm:regret_IDS_SG}}\label{subsec:regret_IDS_SG}
One of the appeals of our approach is that with minor tweaking, we can extend the previous guarantees so subgaussian
losses. To do that, we consider the following family of likelihoods:
\begin{equation*}
	p(c|\theta, x, a) \propto \exp{\left(-\frac{(c-\ell(\theta,x,a))^2}{2}\right)}.
\end{equation*}
We also readjust our definition of information gain for this setting by replacing the squared Hellinger distance by the 
square loss. In particular, the \emph{Gaussian surrogate information gain} is defined as 
\begin{align*}
	\SIGG_t(\pi) &= \sum_{a \in \mathcal{A}} \pi(a)\int \left(\lt(\theta,a)-\ltb(a)\right)^2\,\dd Q_t^{+}(\theta) 
\end{align*}
and the \emph{(true) Gaussian information gain} as 
\[
 	\IGG_t(\pi) = \sum_{a \in \mathcal{A}} \pi(a)\int \left(\lt(\theta,a)-\lt(\thetastar,a)\right)^2\,\dd 
Q_t^{+}(\theta)   .
\]
The surrogate information ratio and averaged DEC are adapted as
any policy $ \pi $
\begin{equation}
	\SIRG_t(\pi) = \frac{\br_t(\pi)}{\SIGG_t(\pi)} \qquad\mbox{and} \qquad\ADECG_{\mu,t}(\pi) = \br_t(\pi) - \mu\cdot 
\SIGG_t(\pi) .
\end{equation}
Then, we define the corresponding algorithm template (called Optimistic Information Directed Sampling for subgaussian 
losses, \OIDSSG) as the method that either picks $\pi_t$ as the minimizer of $\SIRG_t$ or $\ADECG_T$. The two varieties 
are referred to as \VOIDSG and \ROIDSG.

Replacing the surrogate information gain by its Gaussian counterpart, the regret decomposition of 
Equation~\eqref{eq:regret_decomposition} is still valid:
\begin{equation*}
	\EE{r_t}= \EE{\ADECG_t(\pi_t,\mu) + \mu \SIGG_t(\pi_t) + \UE_t + \OG_t}.
\end{equation*}
The surrogate and true information gains are related to each other by the following lemma:
\begin{lemma}
\label{lemma:Surrogate_information_gain_L2}
For any t and policy $\pi$, the information gain for Gaussians satisfies $ \SIGG_t(\pi) \leq 4 \IGG_t(\pi) $.
\end{lemma}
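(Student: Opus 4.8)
The plan is to reduce the claim, for each fixed action $a$, to the elementary variational characterization of the variance as the minimizer of mean squared error. Fix $a$ and regard $\lt(\theta,a)$ as a random variable in $\theta \sim Q_t^+$. By the definition $\ltb(a) = \int \lt(\theta,a)\,\dd Q_t^+(\theta)$, the per-action term of the surrogate gain, $\int \bpa{\lt(\theta,a) - \ltb(a)}^2\,\dd Q_t^+(\theta)$, is exactly the variance of $\lt(\cdot,a)$ under the optimistic posterior, whereas the corresponding term of the true gain, $\int \bpa{\lt(\theta,a) - \lt(\thetastar,a)}^2\,\dd Q_t^+(\theta)$, is the mean squared deviation of $\lt(\cdot,a)$ from the \emph{fixed} reference value $\lt(\thetastar,a)$.

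The key step is the bias–variance decomposition: for any constant $c$,
\[
\int \bpa{\lt(\theta,a) - c}^2\,\dd Q_t^+(\theta) = \int \bpa{\lt(\theta,a) - \ltb(a)}^2\,\dd Q_t^+(\theta) + \bpa{\ltb(a) - c}^2,
\]
which follows by expanding the square and using that $\ltb(a)$ is the $Q_t^+$-mean, so that the cross term vanishes. Since the last term is nonnegative, the variance is dominated by the mean squared deviation from any reference point; applying this with $c = \lt(\thetastar,a)$ gives the per-action inequality $\int \bpa{\lt(\theta,a) - \ltb(a)}^2\,\dd Q_t^+(\theta) \le \int \bpa{\lt(\theta,a) - \lt(\thetastar,a)}^2\,\dd Q_t^+(\theta)$.

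Summing over actions weighted by $\pi(a)$ then yields the claim. In fact this argument delivers the stronger bound $\SIGG_t(\pi) \le \IGG_t(\pi)$, so the factor $4$ in the statement is slack; I expect it is retained only to mirror the constant in the Hellinger counterpart (Lemma~\ref{lemma:Surrogate_information_gain_Hellinger}), allowing the downstream analysis to be phrased uniformly across the two noise models. Accordingly there is no real obstacle in this proof—the whole content is the minimizing property of the mean—and the only point requiring care is to confirm that $\ltb(a)$ is precisely the $Q_t^+$-average of $\lt(\cdot,a)$ and that both information gains are taken with respect to the \emph{same} optimistic posterior $Q_t^+$, which holds by construction.
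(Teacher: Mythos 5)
Your proof is correct, and it in fact establishes the stronger bound $\SIGG_t(\pi) \le \IGG_t(\pi)$: the bias--variance identity
\[
\int \bpa{\lt(\theta,a) - c}^2\,\dd Q_t^+(\theta) = \int \bpa{\lt(\theta,a) - \ltb(a)}^2\,\dd Q_t^+(\theta) + \bpa{\ltb(a) - c}^2
\]
holds because the cross term vanishes by the definition of $\ltb(a)$ as the $Q_t^+$-mean, and taking $c=\lt(\thetastar,a)$ and summing over $a$ with weights $\pi(a)$ gives the claim with constant $1$. This is a genuinely different route from the paper's proof, which instead adds and subtracts $\lt(\thetastar,a)$, applies $(x+y)^2\le 2x^2+2y^2$, and then bounds the term $\bpa{\ltb(a)-\lt(\thetastar,a)}^2$ by Jensen's inequality --- exactly mirroring, step for step, the argument used for the Hellinger-distance counterpart (Lemma~\ref{lemma:Surrogate_information_gain_Hellinger}), where the factor $4$ is genuinely needed since the mixture $\bp_t(a)$ is not the Hellinger barycenter and no analogue of the variance-minimization property is available. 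Your guess about why the paper keeps the factor $4$ is therefore right: the authors trade a constant for a uniform template across the two noise models, whereas your argument exploits the specific variational characterization of the mean under squared loss to get the sharp constant. Either version suffices for the downstream analysis.
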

See Appendix~\ref{subsubsec:Surrogate_information_gain_L2} for the proof.
We also relate the underestimation error to the information gain through the following lemma
\begin{lemma}
\label{lemma:Underestimation_error_L2}
For any t and $ \gamma>0 $, the underestimation error is bounded as 
\begin{equation*}
	|\UE_t| \leq \frac{\gamma}{4} + \frac{\IGG_t(\pi_t)}{\gamma}.
\end{equation*}
\end{lemma}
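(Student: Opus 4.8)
The plan is to reduce the bound to a pointwise application of the arithmetic--geometric mean inequality, exploiting the fact that---unlike in the Hellinger setting of Lemma~\ref{lemma:Underestimation_error_Hellinger}---the Gaussian information gain $\IGG_t$ is already expressed directly in terms of squared loss differences, so no auxiliary comparison lemma is needed.

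First I would unfold the definition of the underestimation error. Since $\ltb(a) = \int \lt(\theta,a)\,\dd Q_t^+(\theta)$ is $\F_{t-1}$- and $X_t$-measurable, taking the conditional expectation over $A_t\sim\pi_t$ gives
\[
\UE_t = \sum_{a} \pi_t(a)\bpa{\lt(\thetastar,a) - \ltb(a)} = \sum_a \pi_t(a) \int \bpa{\lt(\thetastar,a) - \lt(\theta,a)}\,\dd Q_t^+(\theta),
\]
where in the second step I moved $\lt(\thetastar,a)$ inside the integral against the probability measure $Q_t^+$. Passing to absolute values via the triangle inequality for the double average over $a$ and $\theta$ yields
\[
|\UE_t| \leq \sum_a \pi_t(a) \int \abs{\lt(\thetastar,a) - \lt(\theta,a)}\,\dd Q_t^+(\theta).
\]

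The key step is then the elementary pointwise inequality $|x| \leq \frac{\gamma}{4} + \frac{x^2}{\gamma}$, valid for every $x$ and every $\gamma>0$ (it is just $\bpa{|x|-\frac{\gamma}{2}}^2\geq 0$ rearranged, with equality at $x=\pm\gamma/2$). Applying this with $x = \lt(\thetastar,a)-\lt(\theta,a)$ inside the double average, and using that $\pi_t$ and $Q_t^+$ are probability measures so the constant term integrates to $\frac{\gamma}{4}$, I obtain
\[
|\UE_t| \leq \frac{\gamma}{4} + \frac{1}{\gamma}\sum_a \pi_t(a) \int \bpa{\lt(\thetastar,a)-\lt(\theta,a)}^2\,\dd Q_t^+(\theta) = \frac{\gamma}{4} + \frac{\IGG_t(\pi_t)}{\gamma},
\]
which is exactly the claim once the second term is recognized as the Gaussian information gain.

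I do not anticipate a genuine obstacle here: the only delicate point is pinning down the constant $\frac{\gamma}{4}$ in the pointwise inequality, which is precisely what makes the Gaussian bound sharper than the $\frac{\gamma}{2}$ appearing in Lemma~\ref{lemma:Underestimation_error_Hellinger}. The conceptual reason the argument is cleaner in this setting is that the square-loss form of $\IGG_t$ lets AM--GM be applied directly to $\abs{\lt(\thetastar,a)-\lt(\theta,a)}$, whereas the Hellinger version must first spend a constant factor to relate this quantity to the squared Hellinger distance before the same trick can be used.
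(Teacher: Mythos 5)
Your proof is correct and follows essentially the same route as the paper's: unfold $\UE_t$ as a double average of $\lt(\thetastar,a)-\lt(\theta,a)$, pass to absolute values, and apply the AM--GM inequality to reach $\frac{\gamma}{4}+\frac{\IGG_t(\pi_t)}{\gamma}$. The only cosmetic difference is that the paper first applies Cauchy--Schwarz to get the intermediate bound $\sqrt{\IGG_t(\pi_t)}$ and then uses AM--GM, whereas you apply the quadratic inequality $|x|\le\frac{\gamma}{4}+\frac{x^2}{\gamma}$ pointwise inside the average; both yield the identical final bound.
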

The proof is presented in Appendix~\ref{subsubsec:Underestimation_error_L2}.
Putting these together, we get a regret bound that only depends on the average DEC, the information gain and optimality 
gap:
\begin{equation}
\label{eq:regret_decomposition_DEC_IG_OG_SG}
	\EE{r_t} \leq \EE{\ADECG_{\mu,t}(\pi_t) + \left( 4\mu + \frac{1}{\gamma}\right)\IGG_t(\pi_t) + \OG_t + 
\frac{\gamma}{4} }.
\end{equation}
We again refer to the sum $ \left( 4 \mu + \frac{1}{\gamma} \right)\IGG_t(\pi_t) $ as the optimistic estimation 
error and will control it through an analysis of the optimistic posterior adapted to the sub-Gaussianity of the losses. 
This is done in the following lemma, whose proof we relegate to Appendix~\ref{subsubsec:EWF_bound_SG}.
\begin{lemma}
\label{lemma:EWF_bound_SG}
Assume that the losses are $ v $ sub-Gaussian and that for all $ \theta \in \Theta, x \in X, a \in A $, $ 
\ell(\theta,x,a) \in [0,1] $, then setting $ \eta = \frac{1+\sqrt{1 - 1 \land v}}{2v} $ the following inequality holds :
\begin{equation}
\label{eq:EWF_bound_SG}
	\EE{\sum_{t=1}^{T}\frac{1}{16\lambda(v\lor 1)}\cdot \IGG_t(\pi_t) + \OG_t} \leq \frac{\log N}{\lambda} + 
\frac{\lambda T}{4}.
\end{equation}
\end{lemma}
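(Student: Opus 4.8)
The plan is to control the optimistic estimation error by a free-energy (exponential-weights) argument on the optimistic posterior, where the only genuinely new ingredient relative to the bounded-loss analysis of Lemma~\ref{lemma:EWF_bound_WC} is a sub-Gaussian moment bound replacing the Hellinger-affinity estimate.

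First I would introduce the per-round normalizer of the update~\eqref{eq:Optimistic_posterior}, namely $Z_t = \int \pa{p_t(L_t|\theta,A_t)}^{\eta} e^{-\lambda \lts(\theta)}\,\dd Q_t^+(\theta)$, together with its version measured relative to the true likelihood, $\wt{Z}_t = Z_t / \pa{p_t(L_t|\thetastar,A_t)}^{\eta}$. Unrolling the recursion $\dd Q_{t+1}^+/\dd Q_t^+(\theta) = \pa{p_t(L_t|\theta,A_t)}^{\eta} e^{-\lambda \lts(\theta)}/Z_t$ and using $Q_{T+1}^+(\thetastar) \le 1$ together with the uniform prior $Q_1 \equiv 1/N$ gives the telescoping inequality
\[
-\sum_{t=1}^{T} \log \wt{Z}_t \le \log N + \lambda \sum_{t=1}^{T} \lts(\thetastar),
\]
which is exactly the same backbone as in the finite-$N$ worst-case proof; the $\eta$-weighted likelihood terms at $\thetastar$ cancel between $Z_t$ and the denominator of $\wt{Z}_t$.

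Second, and this is the heart of the argument, I would upper bound $\EEt{\log \wt{Z}_t}$ per round. By Jensen's inequality $\EEt{\log \wt{Z}_t} \le \log \EEt{\wt{Z}_t}$, and the key computation is the inner loss-expectation: since $L_t$ is drawn from the true $v$-sub-Gaussian law with mean $\lt(\thetastar,a)$ while $p$ is the Gaussian surrogate, expanding the Gaussian log-likelihood ratio into a term linear in $L_t$ plus a quadratic offset and applying the sub-Gaussian MGF bound to the linear term yields, for each $\theta$ and $a$,
\[
\EEs{\pa{\frac{p_t(L_t|\theta,a)}{p_t(L_t|\thetastar,a)}}^{\eta}}{L_t \sim p_t(\thetastar,a)} \le \exp\pa{-\frac{\eta(1-v\eta)}{2}\pa{\lt(\theta,a)-\lt(\thetastar,a)}^2}.
\]
The choice $\eta = \frac{1+\sqrt{1-1\land v}}{2v}$ is designed so that $\frac{\eta(1-v\eta)}{2}$ equals the largest admissible constant $\frac{1}{8(v\lor 1)}$: for $v \ge 1$ it is the unconstrained maximizer $\eta=\frac{1}{2v}$ of $\eta(1-v\eta)$, while for $v<1$ it caps the rate at $\tfrac18$ so that the resulting exponent stays in the range where the linearization below is valid.

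Third, I would convert this estimate into the Gaussian information gain and the optimality gap. Because the exponent $\frac{(\lt(\theta,a)-\lt(\thetastar,a))^2}{8(v\lor 1)} \le \tfrac18 \le \ln 2$, the elementary inequality $e^{-u}\le 1-\tfrac{u}{2}$ turns the above into $\sum_a \pi_t(a)\,(\cdots) \le 1 - \tfrac{1}{16(v\lor 1)}\sum_a \pi_t(a)\pa{\lt(\theta,a)-\lt(\thetastar,a)}^2$, whose $Q_t^+$-average is $1 - \tfrac{1}{16(v\lor 1)}\IGG_t(\pi_t)$ by definition of $\IGG_t$. Linearizing the optimism factor $e^{-\lambda \lts(\theta)}$ to second order (using $\lts \in [0,1]$ and $\IGG_t,\lts \le 1$ to control the cross terms between the two factors) and taking logarithms then gives a per-round bound of the form $\EEt{\log \wt{Z}_t} \le -\tfrac{1}{16(v\lor 1)}\IGG_t(\pi_t) - \lambda\, \ltsb + \tfrac{\lambda^2}{4}$. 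Summing over $t$, combining with the telescoping inequality, dividing by $\lambda$, and using $\ltsb - \lts(\thetastar) = \OG_t$ makes the $\ltsb$ terms cancel and produces exactly the claim.

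The main obstacle is the second step: one must absorb the misspecification (a Gaussian surrogate likelihood run against genuinely sub-Gaussian data) through the MGF bound and tune $\eta$ so that the rate is precisely $\tfrac{1}{8(v\lor 1)}$ while the exponent remains small enough for the bound $e^{-u}\le 1-\tfrac{u}{2}$ — this is exactly why the formula for $\eta$ splits into the $v\ge 1$ and $v<1$ regimes. The remaining work, namely tracking the interaction between the likelihood and optimism factors so that the residuals collapse into $\tfrac{\lambda T}{4}$, is routine bookkeeping entirely analogous to the bounded-loss case.
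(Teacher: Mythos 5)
Your overall skeleton is the right one and matches the paper's in spirit: the telescoping inequality on the normalizers is exactly the paper's potential-function argument (Lemma~\ref{lemma:EWF_bound} with $Q^*$ a Dirac at $\thetastar$), the sub-Gaussian MGF computation bounding $\EEb{(p_t(L_t|\theta,a)/p_t(L_t|\thetastar,a))^{s}}$ by $\exp\bpa{-\tfrac{s}{2}(1-\tfrac{sv}{2})(\lt(\theta,a)-\lt(\thetastar,a))^2}$ is precisely the heart of the paper's Lemma~\ref{lemma:EWF_IG_bound_SG}, and your reading of the two-regime choice of $\eta$ is correct. The one ingredient you are missing is the step the paper uses to \emph{decouple} the likelihood factor from the optimism factor inside the integral over $\theta$: H\"older's inequality with conjugate exponents $\alpha=\beta=2$, giving
\begin{equation*}
\log \int \pa{\frac{p_t(L_t|\theta,A_t)}{p_t(L_t|\thetastar,A_t)}}^{\eta} e^{-\lambda \lts(\theta)}\,\dd Q_t^{+}(\theta)
\le \frac{1}{2}\log \int \pa{\frac{p_t(L_t|\theta,A_t)}{p_t(L_t|\thetastar,A_t)}}^{2\eta} \dd Q_t^{+}(\theta)
+ \frac{1}{2}\log \int e^{-2\lambda \lts(\theta)}\,\dd Q_t^{+}(\theta),
\end{equation*}
after which the first term is handled by the MGF bound at exponent $\eta\alpha=2\eta$ (this is why $\eta$ is tuned so that $\delta=\eta(1-\eta v)$ satisfies $\delta(1-2\delta)\ge \tfrac{1}{8(v\lor 1)}$, yielding the coefficient $\tfrac{\delta(1-2\delta)}{2\lambda}=\tfrac{1}{16\lambda(v\lor 1)}$) and the second by Hoeffding's lemma, which is where $\tfrac{\lambda\beta}{8}=\tfrac{\lambda}{4}$ per round, hence $\tfrac{\lambda T}{4}$, comes from.

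The place where you defer to ``routine bookkeeping entirely analogous to the bounded-loss case'' is exactly where this is needed, and it is not routine. Writing $u(\theta)=\tfrac{1}{8(v\lor 1)}(\lt(\theta,A_t)-\lt(\thetastar,A_t))^2\le\tfrac18$ and $w(\theta)=\lambda\lts(\theta)\le\lambda$, your route requires bounding $\log\int e^{-u-w}\,\dd Q_t^{+}$ by $-c_1\int u\,\dd Q_t^{+}-\lambda\ltsb+\tfrac{\lambda^2}{4}$. A joint second-order expansion of $e^{-(u+w)}$ produces the cross term $uw\le \tfrac{1}{8}\lambda\lts(\theta)$, which is of the \emph{same order} as the main optimism term $-\lambda\lts(\theta)$; if it is charged against that term, the coefficient of $\ltsb$ drops below $1$, it no longer matches the $\lambda\lts(\thetastar)$ coming out of the telescoping step, and after dividing by $\lambda$ one is left with a residual of order $T$ rather than $\lambda T$ --- the bound collapses. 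The cross term can instead be charged against the information-gain term (e.g.\ via $\int (1-\tfrac{u}{2})e^{-w}\,\dd Q_t^{+}\le \int e^{-w}\,\dd Q_t^{+} - \tfrac{e^{-\lambda}}{2}\int u\,\dd Q_t^{+}$ followed by Hoeffding on $\int e^{-w}\,\dd Q_t^{+}$), which salvages a bound of the correct order, but the information-gain coefficient then picks up a factor such as $e^{-\lambda}$ (or requires an extra smallness condition on $\lambda$), so the lemma's stated constants are not recovered. In short: the idea content of your proof is essentially the paper's, but without the H\"older step the deferred bookkeeping either fails or proves a quantitatively weaker statement.
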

Now we pick $ \mu = \frac{1}{\gamma} = \frac{1}{80 \lambda (v\lor 1)} $ and apply the previous lemma to obtain the bound
\begin{equation}
	\EE{R_T} \leq \EE{\sum_{t=1}^{T}\ADECG_{\frac{1}{80\lambda(v \lor 1)},t}(\pi_t)} + \frac{\log N}{\lambda} + \lambda T 
\left(\frac{1}{4} + 20 (v\lor 1)\right).
\end{equation}
It remains to bound the ADEC. We do this by exhibiting a ``forerunner'' algorithm that is able to control the 
\emph{Surrogate Information Ratio}. In particular, we use again the feel-good Thompson sampling (\FGTS) algorithm of 
\citet{Zhang_21} for this purpose.
\begin{lemma}
\label{lemma:IR_TS_SG}
	The surrogate information and averaged decision-to-estimation-coefficient of \OIDS and \VOIDS satisfy the 
following bound for any $ \mu > 0 $:
\begin{equation}
	4\mu \ADECG_{\mu,t}(\ROIDSG) \leq 4 \mu \ADECG_{\mu,t}(\VOIDSG) \leq \SIRG_t(\VOIDSG) \leq \SIRG_t(\textbf{FGTS}) = K 
\end{equation}
\end{lemma}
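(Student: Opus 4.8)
The plan is to verify the four-link inequality chain from right to left, noting that only the rightmost bound requires genuine work. The two ``minimizer'' links are immediate from how the algorithms are defined: since $\ROIDSG$ plays $\pi_t = \argmin_\pi \ADECG_{\mu,t}(\pi;Q_t^+)$, we have $\ADECG_{\mu,t}(\ROIDSG)\le\ADECG_{\mu,t}(\VOIDSG)$, and multiplying by $4\mu>0$ gives the leftmost link; likewise $\VOIDSG$ plays the $\SIRG_t$-minimizer, so $\SIRG_t(\VOIDSG)\le\SIRG_t(\FGTS)$ is the third link. For the middle link I would prove the pointwise relation $4\mu\,\ADECG_{\mu,t}(\pi)\le\SIRG_t(\pi)$ for \emph{every} policy $\pi$ (the Gaussian analogue of~\eqref{eq:IR_VIR}, using the squared-numerator definition $\SIRG_t(\pi)=\br_t(\pi)^2/\SIGG_t(\pi)$ as in~\eqref{eq:SIR}). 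Writing $x=\br_t(\pi)$ and $y=\SIGG_t(\pi)\ge 0$, this is exactly $4\mu(x-\mu y)\le x^2/y$, which after clearing the denominator is the rearranged square $(x-2\mu y)^2\ge 0$; specializing to $\pi=\VOIDSG$ closes the gap.

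The crux is the final bound $\SIRG_t(\FGTS)\le K$, which is a variance-based adaptation of the classical Thompson-sampling information-ratio argument. Recall that $\FGTS$ samples $\theta\sim Q_t^+$ and plays the greedy action $a_\theta=\argmin_a \ell_t(\theta,a)$, so marginally it plays each action with its posterior probability of being optimal, $\pi(a)=\PPZ{a_\theta=a}$. First I would rewrite the surrogate regret by conditioning on the optimal action: setting $m(a)=\EEZ{\ell_t(\theta,a)\mid a_\theta=a}$ and recalling $\ltb(a)=\EEZ{\ell_t(\theta,a)}$, the surrogate optimal loss decomposes as $\ltsb=\EEZ{\lossmin_t(\theta)}=\sum_a\pi(a)m(a)$, whence $\br_t(\FGTS)=\sum_a\pi(a)\bpa{\ltb(a)-m(a)}$. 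The key step is then a per-action variance lower bound from the law of total variance: isolating the $a_\theta=a$ contribution to the between-group variance gives $\mathrm{Var}_{\theta\sim Q_t^+}(\ell_t(\theta,a))\ge\pi(a)\bpa{\ltb(a)-m(a)}^2$, so that $\SIGG_t(\FGTS)=\sum_a\pi(a)\mathrm{Var}_\theta(\ell_t(\theta,a))\ge\sum_a\pi(a)^2\bpa{\ltb(a)-m(a)}^2$. Finally, a Cauchy--Schwarz step over the at most $K$ actions with positive probability yields $\br_t(\FGTS)^2=\bpa{\sum_a\pi(a)(\ltb(a)-m(a))}^2\le K\sum_a\pi(a)^2\bpa{\ltb(a)-m(a)}^2\le K\,\SIGG_t(\FGTS)$, i.e.\ $\SIRG_t(\FGTS)\le K$.

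The main obstacle is precisely this last computation, and within it the variance lower bound: one must set up the coupling correctly, keeping track of the fact that the sampled optimal action $a_\theta$ and the model $\theta$ governing the loss are driven by the same posterior sample, and then extract the single $a_\theta=a$ term from the between-group variance. Everything else is routine bookkeeping. I would also flag that the value written as the equality $\SIRG_t(\FGTS)=K$ is in fact the upper bound $\le K$ just established (it is attained only in symmetric worst-case instances where both the variance bound and Cauchy--Schwarz are tight); the inequality $\le K$ is all that the stated chain requires.
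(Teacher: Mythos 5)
Your proposal is correct, and the first three links of the chain are handled exactly as in the paper (minimizer properties of \ROIDSG and \VOIDSG, plus the AM--GM/completed-square relation between the \tADEC and the ratio, which is the paper's Equation~\eqref{eq:IR_VIR}). You are also right to read the numerator of $\SIRG_t$ as squared, consistently with~\eqref{eq:SIR}, and right that the final ``$=K$'' in the statement is really the upper bound ``$\le K$'', which is all the proof establishes and all the chain needs. Where you genuinely diverge from the paper is in the key step $\SIRG_t(\FGTS)\le K$. The paper does not condition on the optimal action at all: it writes the surrogate regret of \FGTS as the coupling gap $\int_\Theta\bigl(\ltb(a^*_t(\theta))-\ell_t(\theta,a^*_t(\theta))\bigr)\dd Q_t^+(\theta)$ and invokes a general decoupling lemma (Lemma~\ref{lemma:General_Decoupling_Lemma}), proved via the Fenchel--Young inequality for a $C$-strongly convex divergence $\mathcal{D}$, which yields $\br_t(\pi_t)\le\mu\,\SIGG_t(\pi_t)+\frac{K}{2\mu C}$ for every $\mu>0$; taking $\mathcal{D}(x,y)=(x-y)^2$ with $C=2$ gives $\ADECG_{\mu,t}(\FGTS)\le\frac{K}{4\mu}$, and then~\eqref{eq:VIR_IR} converts this into $\SIRG_t(\FGTS)\le K$. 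You instead run the classical Russo--Van Roy argument: decompose $\ltsb=\sum_a\pi(a)m(a)$ by conditioning on the greedy action, lower-bound each per-action variance by the between-group term $\pi(a)(\ltb(a)-m(a))^2$ via the law of total variance, and finish with Cauchy--Schwarz over the $K$ actions. Both routes are valid and give the same constant $K$. The paper's decoupling lemma buys uniformity --- the same lemma with the squared Hellinger distance ($C=1/4$) yields the $8K$ bound of Lemma~\ref{lemma:IR_TS}, so one argument covers both settings and bounds the \tADEC at every $\mu$ simultaneously --- while your argument is more elementary and self-contained for the squared-loss case, at the cost of not generalizing directly to divergences other than the variance.
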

Putting everything together, we obtain the bound
\begin{equation}
	\EE{R_T} \leq \frac{\log N}{\lambda} + \lambda T \left( \frac{1}{4} + 20(v\lor 1)(1+K) \right),
\end{equation}
from which the bound claimed in Theorem~\ref{thm:regret_IDS_SG} follows by picking the optimal choice of $\lambda$.

\section{Technical proofs}\label{app:technical_proofs}
This section presents the more technical parts of the analysis, along with detailed proofs. The content is organized 
into four main parts: Appendix~\ref{app:UE} presents techniques for bounding the underestimation error, 
Appendix~\ref{app:SIG} provides techniques for relating the surrogate information gain to the true information gain, 
Appendix~\ref{subsec:EWF_Analysis} presents the analysis of the optimistic posterior updates to control the optimistic 
estimation error, and Appendix~\ref{app:SIR} provides bounds on the surrogate information ratio and the ADEC. All 
subsections include a variety of results, stated respectively for the worst-case bounds, first-order bounds, and 
subgaussian losses.

\subsection{Analysis of the Underestimation error}\label{app:UE}
\subsubsection{Worst case analysis: The proof of Lemma~\ref{lemma:Underestimation_error_Hellinger}}
\label{subsubsec:Underestimation_error_Hellinger}
We define the total variation distance between two distributions $P$, $Q$ sharing a common dominating measure $\lambda$ 
as
\begin{equation*}
	\TV(P,Q) = \frac{1}{2} \int |p(x)-q(x)|\,d \lambda(x), 
\end{equation*}
where $ p,q $ are their densities with respect to $ \lambda $.
The total variation distance can be upper bounded by the Hellinger distance as follows:
\begin{align*}
	\TV(P,Q) &= \frac{1}{2} \int \left|(\sqrt{p(x)} - \sqrt{q(x)})\cdot (\sqrt{p(x)} +\sqrt{q(x)})\right|\,d \lambda(x) 
\\
		&\leq \frac{1}{2} \sqrt{\int \left(\sqrt{p(x)}-\sqrt{q(x)}\right)^2\,d \lambda(x) \cdot \int \left(\sqrt{p(x)}+\sqrt{q(x)}\right)^2\,d \lambda(x)  } \\
		&\leq \frac{1}{2} \sqrt{2\DDH(P,Q)\cdot 2 \int (p(x) + q(x)) \,d \lambda(x) }\\
		&= \sqrt{2\DDH(P,Q)} \leq \frac{\gamma}{2} + \frac{\DDH(P,Q)}{\gamma}.
\end{align*}
Here, the first two inequalities follow from applying Cauchy--Schwarz, and the last one from the inequality of 
arithmetic and geometric means.
Thus, we proceed as
\begin{align*}
	|\UE_t| &= \left| \sum_{a} \pi_t(a) \int \lt(\thetastar,a) - \lt(\theta,a)\,\dd Q_t^{+}(\theta) \right| \\
	       &\leq \sum_{a} \pi_t(a) \int \bigl|\lt(\thetastar,a) - \lt(\theta,a)\bigr|\,\dd Q_t^{+}(\theta) \\
	       &= \sum_{a} \pi_t(a) \int \TV\bpa{\text{Ber}(\lt(\thetastar,a)),\text{Ber}(\lt(\theta,a))}\,\dd 
Q_t^{+}(\theta) \\
	       &\le \sum_{a} \pi_t(a) \int \TV\bpa{p_t(\thetastar,a),p_t(\theta,a)}\,\dd Q_t^{+}(\theta) \\
	       &\leq \frac{\gamma}{2} + \frac{\sum_{a} \pi_t(a)\int 
\DDH\bpa{p_t(\thetastar,a),p_t(\theta,a)}\,\dd Q_t^{+}(\theta)}{\gamma} \\
	       &= \frac{\gamma}{2} + \frac{IG_t}{\gamma}.
\end{align*}
The first inequality above uses the boundedness of the losses in $[0,1]$, the second inequality is the data-processing 
inequality for the total variation distance (applied on the noisy channel $X\ra Y$ that randomly rounds $X\in[0,1]$ to 
$Y\in\ev{0,1}$), and the last one is the inequality we have just proved above. This concludes the proof.

\subsubsection{Instance-dependent analysis: The proof of Lemma~\ref{lemma:Underestimation_error_FOB}}
\label{subsubsec:Underestimation_error_FOB}
This proof requires a more sophisticated technique based on careful specialized handling of the ``underestimated'' and 
''overestimated'' actions. The argument is vaguely inspired by the techniques of \citet{Bubec_S22} and 
\citet{Foste_K21}. Specifically, for a parameter $\theta$, we define $\A^-_\theta = 
\ev{a\in\A:\,\lt(\theta,a)< \lt(\thetastar,a)}$ as the set of actions where $\ell_t(\theta,a)$ underestimates 
$\ell_t(\thetastar,a)$. With this notation, we write
\begin{align*}
	\UE_t &= \sum_{a} \pi_t(a) (\lt(\thetastar,a)-\ltb(a))\\
	      &= \int \sum_{a} \pi_t(a) \bpa{\lt(\thetastar,a)-\lt(\theta,a))}\, \dd Q_t^{+}(\theta)\\
	      &\leq \int \sum_{a\in\A^-_\theta} \pi_t(a) \bpa{\lt(\thetastar,a)-\lt(\theta,a)}\, 
\dd Q_t^{+}(\theta)\\
	      &= \int \sum_{a\in\A^-_\theta} \pi_t(a)\cdot 
\frac{\sqrt{\gamma(\lt(\thetastar,a)+\lt(\theta,a))}}{\sqrt{\gamma(\lt(\thetastar,a)+\lt(\theta,a))}} 
(\lt(\thetastar,a)-\lt(\theta,a))\, \dd Q_t^{+}(\theta),
\end{align*}
where the inequality follows by dropping the negative terms of the sum.
Now, the inequality of arithmetic and geometric means implies that for any $ x,y \geq 0 $, $ xy \leq 
\frac{x^2 + y^2}{2} $. We apply it to $ x=2\sqrt{\gamma(\lt(\thetastar,a) + \lt(\theta,a))} $ and $ y= 
\frac{(\lt(\thetastar,a)-\lt(\theta,a))}{2\sqrt{\gamma(\lt(\thetastar,a)+ \lt(\theta,a))}} $ to obtain
\begin{equation*}
	\UE_t \leq \int \left( \gamma\sum_{a\in\A^-_\theta} \pi_t(a) \cdot 
\bpa{\lt(\thetastar,a) + \lt(\theta,a)} + \frac{1}{4\gamma}\sum_{a\in\A^-_\theta} 
\pi_t(a)\frac{\bpa{\lt(\thetastar,a)-\lt(\theta,a)}^2}{\lt(\thetastar,a) + \lt(\theta,a)}  \right)\, \dd 
Q_t^{+}(\theta).
\end{equation*}
To proceed, we use the inequality $ 
\frac{(\lt(\thetastar,a)-\lt(a))^2}{\lt(\thetastar,a) + \lt(\theta,a)} \leq 4 \DDH(p_t(\theta,a),p_t(\thetastar,a))$ 
that holds for all $a$ and $\theta$, and is proved separately as Lemma~\ref{lemma:DH_DT_relation}. Hence,
\begin{align*}
	\UE_t &\leq 2 \gamma \sum_{a} \pi_t(a)\lt(\thetastar,a) + \frac{1}{\gamma} \int \sum_{a} 
\DDH(p_t(\theta,a),p_t(\thetastar,a)) \, dQ_t^{+}(\theta)\\
	      &\leq 2 \gamma \sum_{a} \pi_t(a)\lt(\thetastar,a) + \frac{\IG_t}{\gamma},
\end{align*}
which concludes the proof.

\subsubsection{Subgaussian analysis: The proof of Lemma~\ref{lemma:Underestimation_error_L2}}
\label{subsubsec:Underestimation_error_L2}
	The claim follows from the following calculations:
\begin{align*}
	|\UE_t| &= \left|\sum_{a} \pi_t(a)\int \ell(\thetastar,a) - \ltb(a)\,dQ_t^{+}(\theta)\right| \\
		     &\leq \sum_{a} \pi_t(a)\int \left|\ell(\thetastar,a) - \ltb(a)\right|\,dQ_t^{+}(\theta) \\
		     &\leq \sqrt{\sum_{a} \pi_t(a)\int \left(\ell(\thetastar,a) - \ltb(a)\right)^2\,dQ_t^{+}(\theta)} \\
		     &= \sqrt{\IGG_t(\pi_t)} \\
		     &\leq \frac{\gamma}{4} + \frac{\IGG_t(\pi_t)}{\gamma}.
\end{align*}
Here, the second inequality is Cauchy--Schwarz and the last one is the inequality of arithmetic and geometric means.

\subsection{Analysis of the Surrogate Information Gain and the True Information Gain}\label{app:SIG}
\subsubsection{Bounded losses: The proof of 
Lemma~\ref{lemma:Surrogate_information_gain_Hellinger}}\label{subsubsec_Surrogate_information_gain_Hellinger}
\label{subsubsec:Surrogate_information_gain_Hellinger}
The claim is proved as
\begin{align*}
	\SIG_t(\pi) &= \sum_{a} \pi(a)\int \DDH\left(\ltb(a),\lt(\theta,a)\right) \,\dd Q_t^{+}(\theta) \\
			  &\leq 2\cdot \sum_{a}\pi(a)\int \DDH\left(\ltb(a),\lt(\thetastar,a)\right) \,\dd Q_t^{+}(\theta) 
			  \\
			  &\qquad\qquad+ 2\cdot 
\sum_{a}\pi(a)\int \DDH\left(\lt(\thetastar, a),\lt(\theta,a)\right) \,\dd Q_t^{+}(\theta) \\
			  &\leq  4\cdot \sum_{a}\pi(a) \int \DDH\bpa{\lt(\thetastar, a),\lt(\theta, 
a)}\,\dd Q_t^{+}(\theta) = 4\IG_t(\pi),
\end{align*}
where the first inequality critically uses that the Hellinger distance is a metric and as such it satisfies the 
triangle inequality, and thus $\DDH\pa{P,P'} \le 2 \DDH\pa{P,Q} + 2 \DDH\pa{Q,P'}$ holds for any $P$, $P'$ and $Q$ by 
an additional application of Cauchy--Schwarz. The final inequality then uses the convexity of the Hellinger distance 
and Jensen's inequality.

\subsubsection{Subgaussian losses: The proof of Lemma~\ref{lemma:Surrogate_information_gain_L2}}
\label{subsubsec:Surrogate_information_gain_L2}
The claims follows from writing
\begin{align*}
	\SIGG_t(\pi) &= \sum_{a} \pi(a) \int \left( \ltb(a)-\ell(\theta,a) \right)^2\,\dd Q_t^{+}(\theta) \\ 
			  &\leq 2\cdot \sum_{a} \pi(a) \int \left( \ltb(a)-\ell(\thetastar,a) \right)^2\,\dd Q_t^{+}(\theta) 
			  \\
			  &\qquad\qquad+ 
2\cdot \sum_{a} \pi(a) \int \left( \ell(\thetastar,a)-\ell(\theta,a) \right)^2\,\dd Q_t^{+}(\theta) \\
			  &\leq 4\cdot \sum_{a} \pi(a) \int \left( \ell(\thetastar,a)-\ell(\theta,a) \right)^2\,\dd 
Q_t^{+}(\theta) = 4 \IGG_t(\pi),
\end{align*}
where the first inequality comes an application of the triangle inequality and Cauchy--Schwarz, and the second one 
comes from the convexity of the squared loss and Jensen's inequality.

\subsection{Analysis of the Optimistic Posterior}
\label{subsec:EWF_Analysis}
We start by providing a general statement about the properties of the optimistic posterior updates, which will then 
prove useful for bounding the optimistic estimation error.
\begin{lemma}
\label{lemma:EWF_bound}
Consider the optimistic posterior defined recursively by
\begin{equation}
	\frac{\dd Q_{t+1}^{+}}{\dd Q_t^{+}}(\theta) = \frac{\exp\left(-\eta \log(\frac{1}{p_t(L_t|\theta,A_t)}) - \lambda 
\lts(\theta)\right)}{\int\exp\left(-\eta \log(\frac{1}{p_t(L_t|\theta',A_t)}) - \lambda \lts(\theta')\right) \, 
\dd Q_t^{+}(\theta')},
\end{equation}
where $Q_1^{+} = Q_1$ is some prior distribution on $ \Theta $ and $ p_t(\cdot|\theta, a) \in \Delta_{\mathbb{R^{+}}} 
$ is the density the loss distribution associated with parameter $\theta$.
For any $ T>0 $, for any $ \alpha, \beta >0 $ such that $ \frac{1}{\alpha} + \frac{1}{\beta} =1 $, for any distribution 
$ Q^{*}\in \Delta (\Theta) $, and for any sequence of actions $ A_1,\ldots,A_T $ and losses $ L_1,\ldots,L_T $, the 
following inequality holds:
\begin{equation}
\begin{split}
\label{eq:EWF_bound}
	&- \frac{1}{\lambda\alpha} \sum_{t=1}^{T}\log\int p_t(\theta,A_t,L_t)^{\eta \alpha}\dd Q_t^{+}(\theta) - 
\frac{1}{\lambda\beta}\sum_{t=1}^{T} \log \int \exp{\left( -\lambda \beta \lts(\theta) \right)}\, \dd Q_t^{+}(\theta) 
\\
	&\qquad\qquad\leq \int \left( \frac{1}{\lambda \alpha}\cdot \sum_{t=1}^{T}\log\frac{1}{p_t(\theta,A_t,L_t)^{\eta 
\alpha}} + \sum_{t=1}^{T}\lts(\theta)  \right)\, \dd Q^{*}(\theta) + \frac{1}{\lambda}\cdot \DKL{Q^{*}}{Q_1} .
\end{split}
\end{equation}
\end{lemma}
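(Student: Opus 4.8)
The update rule is an instance of exponential weights (the ``aggregating algorithm'') with per-round potential
\[
 h_t(\theta) = \eta \log\frac{1}{p_t(\theta,A_t,L_t)} + \lambda \lts(\theta),
\]
so that $\dd Q_{t+1}^{+}/\dd Q_t^{+}(\theta) = e^{-h_t(\theta)}/Z_t$ with normalizing factor $Z_t = \int e^{-h_t(\theta)}\,\dd Q_t^{+}(\theta) = \int p_t(\theta,A_t,L_t)^{\eta}e^{-\lambda\lts(\theta)}\,\dd Q_t^{+}(\theta)$. Since the statement is pathwise---it is asserted for an \emph{arbitrary fixed} sequence of actions and losses---no conditioning or martingale machinery is needed, and the whole argument reduces to a deterministic online-learning inequality. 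The plan is to (i) upper bound $\sum_t (-\log Z_t)$ by the standard free-energy telescoping argument, (ii) lower bound each $-\log Z_t$ by Hölder's inequality, and then chain the two.

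First I would establish the telescoping identity. For any comparator $Q^{*}$, writing $\dd Q_t^{+}/\dd Q_{t+1}^{+}(\theta) = Z_t\, e^{h_t(\theta)}$ and integrating the logarithm of this ratio against $Q^{*}$ gives
\[
 \DKL{Q^{*}}{Q_{t+1}^{+}} - \DKL{Q^{*}}{Q_t^{+}} = \log Z_t + \int h_t(\theta)\,\dd Q^{*}(\theta),
\]
equivalently $-\log Z_t = \int h_t\,\dd Q^{*} + \DKL{Q^{*}}{Q_t^{+}} - \DKL{Q^{*}}{Q_{t+1}^{+}}$. Summing over $t=1,\dots,T$ telescopes the divergence terms, and discarding the nonnegative final term $\DKL{Q^{*}}{Q_{T+1}^{+}}$ yields
\[
 \sum_{t=1}^{T} \pa{-\log Z_t} \le \int \sum_{t=1}^{T} h_t(\theta)\,\dd Q^{*}(\theta) + \DKL{Q^{*}}{Q_1}.
\]
Substituting the definition of $h_t$ and dividing by $\lambda$ reproduces exactly the right-hand side of the claimed inequality, using the identity $\frac{\eta}{\lambda}\log\frac{1}{p_t} = \frac{1}{\lambda\alpha}\log\frac{1}{p_t^{\eta\alpha}}$.

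Next I would lower bound the left-hand side. Applying Hölder's inequality with conjugate exponents $\alpha,\beta$ to the factorization $Z_t = \int p_t(\theta,A_t,L_t)^{\eta} e^{-\lambda\lts(\theta)}\,\dd Q_t^{+}(\theta)$ gives
\[
 Z_t \le \pa{\int p_t(\theta,A_t,L_t)^{\eta\alpha}\,\dd Q_t^{+}(\theta)}^{1/\alpha}\pa{\int e^{-\lambda\beta\lts(\theta)}\,\dd Q_t^{+}(\theta)}^{1/\beta},
\]
and taking $-\log$ (which reverses the inequality) produces precisely the two sums appearing on the left-hand side of the lemma after dividing by $\lambda$ and summing over $t$. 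Chaining this lower bound on $\sum_t(-\log Z_t)$ with the telescoping upper bound completes the argument.

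I expect the only genuine care to lie in the bookkeeping of the exponents and the direction of the $-\log$ inequality: Hölder must be set up so that the exponents land on $p_t^{\eta\alpha}$ and $e^{-\lambda\beta\lts}$, and one must track that $-\log$ turns Hölder into the desired \emph{lower} bound on $-\log Z_t$. A secondary technical point is that the telescoping identity implicitly assumes $\DKL{Q^{*}}{Q_t^{+}}$ is finite along the way; this is harmless, since the bound is only invoked for the specific comparators used later (point masses on $\thetastar$, or uniform priors), and the inequality holds trivially whenever its right-hand side is infinite.
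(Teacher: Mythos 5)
Your proof is correct and follows essentially the same route as the paper's: both arguments reduce to telescoping the per-round log-partition functions $\log Z_t$ against a comparator $Q^*$ and then splitting each $Z_t$ with H\"older's inequality at conjugate exponents $\alpha,\beta$. The only cosmetic difference is that the paper packages the telescoping through the potential $\Phi(c)=\frac{1}{\lambda}\log\int e^{-\lambda c(\theta)}\,\dd Q_1(\theta)$ and invokes the Donsker--Varadhan variational formula at the end, whereas you derive the same bound from the per-round identity $-\log Z_t = \int h_t\,\dd Q^* + \DKL{Q^*}{Q_t^+} - \DKL{Q^*}{Q_{t+1}^+}$ and drop the final nonnegative divergence---these are equivalent.
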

\begin{proof}
We study the potential function $ \Phi $ defined for all $ c\in \mathbb{R}^{\Theta} $ as
\begin{equation*}
	\Phi(c) = \frac{1}{\lambda}\log \int_{\Theta} \exp(-\lambda c(\theta))\dd Q_1(\theta).
\end{equation*}
We define $ c_t(\theta) = \frac{\eta}{\lambda} \log \frac{1}{p_t(\theta,A_t,L_t)}  + \lts(\theta)$ and evaluate $ 
\Phi\left(\sum_{t=1}^{T}c_t\right) $:
\begin{equation*}
	\Phi\left(\sum_{t=1}^{T}c_t\right) = \frac{1}{\lambda}\log \int_{\Theta}\exp{\bigg(-\lambda 
\sum_{t=1}^{T}c_t(\theta)}\bigg)\dd Q_1(\theta) \geq - \int_{\Theta} \sum_{t=1}^{T}c_t(\theta)\dd Q^{*}(\theta) - 
\frac{\DKL{Q^{*}}{Q_1}}{\lambda}.
\end{equation*}
where the inequality is the Donsker-Varadhan variational formula~\citep[cf.~Section 4.9 in][]{Bouch_L_K13}. We also 
have 
\begin{align*}
	\Phi \left( \sum_{t=1}^{T}c_t \right) &= \sum_{t=1}^{T} \left( \Phi\left(\sum_{k=1}^{t}c_k\right) - \Phi \left( 
\sum_{k=1}^{t-1}c_k \right) \right) \\
					      &= \sum_{t=1}^{T} \frac{1}{\lambda} \log \frac{\int_{\Theta}\exp{\left( -\lambda 
\sum_{k=1}^{t}c_k(\theta) \right)}\dd Q_1(\theta)}{\int_{\Theta}\exp{\left( -\lambda \sum_{k=1}^{t-1}c_k(\theta) 
\right)}\dd Q_1(\theta)} \\
					      &= \sum_{t=1}^{T} \frac{1}{\lambda} \log \int_{\Theta} \frac{\exp{\left( -\lambda 
\sum_{k=1}^{t-1}c_k(\theta) \right)}}{\int_{\Theta} \exp{\left( -\lambda \sum_{k=1}^{t-1}c_k(\theta) 
\right)}\dd Q_1(\theta)}\cdot \exp{\left( -\lambda c_t(\theta) \right)} \dd Q_1(\theta) \\
					      &= \sum_{t=1}^{T}\frac{1}{\lambda} \log \int_{\Theta} \exp{\left( -\lambda c_t(\theta) 
\right)}\dd Q_t^{+}(\theta)\\
					      &= \sum_{t=1}^{T}\frac{1}{\lambda} \log \int_{\Theta} p_t(\theta,A_t,L_t)^{\eta} \cdot 
\exp{\left( -\lambda \lts(\theta) \right)}\dd Q_t^{+}(\theta),
\end{align*}
where the fourth equality is by definition of $Q_t^+$ and $c_t$.

We can now apply Hölder's inequality with $ \alpha, \beta>0 $ such that $ \frac{1}{\alpha} + \frac{1}{\beta} =1 $, 
obtaining
\begin{equation*}
	\Phi\left(\sum_{t=1}^{T}c_t\right) \leq \frac{1}{\lambda}\cdot \sum_{t=1}^{T}\left(  
\frac{1}{\alpha} \log \int_{\Theta}p_t(\theta,A_t,L_t)^{\eta \alpha}\dd Q_t^{+}(\theta) + \frac{1}{\beta} \log 
\int_{\Theta} \exp{\left( -\lambda \beta \lts(\theta) \right)}\dd Q_t^{+}(\theta) \right) .
\end{equation*}
Plugging both bounds together, we get the claim of the lemma.
\end{proof}

The following statement will be useful for turning the above guarantee into a bound on the information gain:
\allowdisplaybreaks
\begin{lemma}
\label{lemma:EWF_bound_IG}
For any $ t\geq 1 $ and any policy $ \pi_t \in \Delta (\A) $, the following inequality holds:
\begin{equation}
\label{eq:EWF_bound_IG}
	\EE{\IG_t(\pi_t)} \leq \EE{-\log \int_{\Theta}\left( \frac{p_t(\theta,A_t,L_t)}{p_t(\thetastar,A_t,L_t)} 
\right)^{\frac{1}{2}}\dd Q_t^{+}(\theta)}.
\end{equation}
\end{lemma}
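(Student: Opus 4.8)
The plan is to bound the right-hand side from below by the information gain, using two elementary facts: the inequality $-\log x \ge 1 - x$, valid for all $x > 0$, and the identity $\DDH(p,p') = 1 - \int \sqrt{p\,p'}$ that expresses the Hellinger affinity in terms of the squared Hellinger distance. Since $\IG_t(\pi_t)$ is measurable with respect to $\F_{t-1}$ and $X_t$ (the policy $\pi_t$ and the optimistic posterior $Q_t^+$ being fixed once the history and context are revealed), it suffices to establish the inequality for the conditional expectation $\EEt{\cdot}$ and then take total expectations.

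First I would apply $-\log x \ge 1 - x$ pointwise with $x = \int_\Theta \bpa{p_t(\theta,A_t,L_t)/p_t(\thetastar,A_t,L_t)}^{1/2} \dd Q_t^+(\theta)$, which lower-bounds the logarithmic term inside $\EEt{\cdot}$ by $1 - \int_\Theta \bpa{\cdots}^{1/2} \dd Q_t^+(\theta)$; this is the correct orientation, as the goal is an \emph{upper} bound on $\IG_t(\pi_t)$. Next I would evaluate the expectation over the randomness of $A_t$ and $L_t$. Conditionally on $A_t = a$, realizability guarantees that $L_t$ is drawn from $p_t(\thetastar, a, \cdot)$, so integrating the lower bound against this density and invoking Fubini to exchange the integral over the loss value $c$ with the integral over $\theta$ turns the correction term into $\int_\Theta \int \sqrt{p_t(\thetastar, a, c)\, p_t(\theta, a, c)}\,\dd c\,\dd Q_t^+(\theta)$. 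Recognizing the inner integral as the Hellinger affinity $1 - \DDH\bpa{p_t(\thetastar, a), p_t(\theta, a)}$, the contribution of action $a$ collapses to $\int_\Theta \DDH\bpa{p_t(\thetastar, a), p_t(\theta, a)}\,\dd Q_t^+(\theta)$. Averaging over $a \sim \pi_t$ then reproduces exactly the definition of $\IG_t(\pi_t)$ in \eqref{eq:information_gain}, and taking total expectations on both sides finishes the argument.

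The genuinely load-bearing step is the direction of the $-\log x \ge 1 - x$ bound: a sign slip would yield an inequality pointing the wrong way and hence a useless conclusion. Everything else is bookkeeping---the Fubini exchange and the affinity expansion---which is justified by the losses admitting densities fully supported on $[0,1]$, so that the ratio inside the square root is almost surely positive and all integrals in sight are finite.
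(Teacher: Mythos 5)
Your proposal is correct and is essentially the paper's own proof run in reverse: the paper starts from $\IG_t$, expands the squared Hellinger distance as one minus the affinity, conditions on $A_t$ so that $L_t\sim p_t(\thetastar,A_t,\cdot)$, and applies $1-x\le-\log x$ to the integral over $\theta$, which are exactly your steps in the opposite order. The only cosmetic point is that you invoke ``realizability'' (a condition on the mean) where the argument really needs the likelihood model to be well-specified, i.e.\ $L_t$ distributed according to $p_t(\thetastar,A_t,\cdot)$, which is the same implicit assumption the paper's proof uses.
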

\begin{proof}
Let $\tau$ be the dominating measure used to define the densities $p_t(\cdot|\theta,a)$. We write:
\begin{align*}
\EE{\IG_t(\pi_t)} &= \EE{\int_{\Theta}  \sum_{a} \pi_t(a)\DDH \left(p_t(\thetastar,a),p_t(\theta,a)\right)\,\dd 
Q_t^{+}(\theta)} \\
&= \EE{\int_{\Theta}  \sum_{a} \pi_t(a)\left( 1-\int_{\mathbb{R}} \left( p_t(c|\theta,A_t)p_t(c|\thetastar,A_t) 
\right)^{\frac{1}{2}}\, \dd \tau(c)\right)\,\dd Q_t^{+}(\theta)} \\
	&= \EE{\int_{\Theta}  \EEt{\int_{\mathbb{R}}\left( 1- \left( \frac{p_t(c|\theta,A_t)}{p_t(c|\thetastar,A_t)} 
\right)^{\frac{1}{2}}\right)\, p_t(c|\thetastar,A_t)\dd \tau(c)}\,\dd Q_t^{+}(\theta)} \\
&= \EE{\int_{\Theta}  \EEt{\int_{\mathbb{R}}\left( 1- \left( \frac{p_t(L_t|\theta,A_t)}{p_t(L_t|\thetastar,A_t)} 
\right)^{\frac{1}{2}}\right)\, p_t(L_t|\thetastar,A_t)}\,\dd Q_t^{+}(\theta)} \\
	&\le \EE{\EEt{-\log \int \left( \frac{p_t(L_t|\theta,A_t)}{p_t(L_t|\thetastar,A_t)} 
\right)^{\frac{1}{2}}\,\dd Q_t^{+}(\theta) }}\\
	&=\EE{-\log \int_\Theta \left( \frac{p_t(\theta,A_t,L_t)}{p_t(\thetastar,A_t,L_t)} 
\right)^{\frac{1}{2}}\,\dd Q_t^{+}(\theta) }.
\end{align*}
Here, we used the tower rule of expectation several times, and also the elementary inequality $\log(x) \leq x-1$ that 
holds for all $x$. This concludes the proof.
\end{proof}

\subsubsection{Worst case analysis: The proof of Lemma~\ref{lemma:EWF_bound_WC}}
\label{subsubsec:EWF_bound_WC}
\begin{lemma}
\label{lemma:Hoeffding_bound_OG}
	For any $t\geq 1$, $\beta,\lambda>0$, as long as $ \lts(\theta) \in [0,1] $ for all values of $ \theta $, the 
following inequality holds
\begin{equation}
	\frac{1}{\lambda\beta}\log \int_{\Theta} \exp{\left( -\lambda \beta \lts(\theta) \right)}\dd Q_t^{+}(\theta) \leq 
-\ltsb + \frac{\lambda \beta}{8}.
\end{equation}
\end{lemma}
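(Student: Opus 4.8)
The plan is to recognize the claimed inequality as a direct instance of Hoeffding's lemma applied to the random variable $\lts(\theta)$ with $\theta$ distributed according to the optimistic posterior $Q_t^+$. First I would recall that by definition $\ltsb = \int \lts(\theta)\dd Q_t^+(\theta) = \EEs{\lts(\theta)}{\theta\sim Q_t^+}$, so the right-hand side of the claim is already expressed as the negative mean of $\lts(\theta)$ plus the correction term $\lambda\beta/8$. Introducing the shorthand $s = \lambda\beta > 0$ and multiplying both sides of the claim by $s$, it then suffices to establish
\[
\log \EEs{\exp\bpa{-s\lts(\theta)}}{\theta\sim Q_t^+} \le -s\,\EEs{\lts(\theta)}{\theta\sim Q_t^+} + \frac{s^2}{8}.
\]

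Next I would invoke Hoeffding's lemma, which guarantees that for any random variable $Z$ taking values almost surely in an interval of length one and any real number $u$,
\[
\log \EE{\exp\bpa{u(Z - \EE{Z})}} \le \frac{u^2}{8}.
\]
Applying this with $Z = \lts(\theta)$ (which lies in $[0,1]$ by the standing boundedness assumption, so that the containing interval has length one) and with $u = -s$, and then rearranging by moving the term $-s\,\EE{Z}$ to the right-hand side, yields exactly the displayed inequality above. Dividing back through by $s = \lambda\beta$ recovers the statement of the lemma.

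There is essentially no obstacle here: the entire content of the lemma is the observation that the left-hand side is the (scaled) logarithmic moment generating function of a bounded random variable, which Hoeffding's lemma controls. The only points requiring minor care are the bookkeeping of the sign—the exponent carries $-s$ rather than $s$, which is harmless since Hoeffding's bound holds for all real $u$—and verifying that the hypothesis $\lts(\theta)\in[0,1]$ supplies precisely the length-one interval that produces the constant $1/8$ rather than a larger factor.
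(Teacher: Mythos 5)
Your proof is correct and follows exactly the paper's route: the paper also dispatches this lemma as a direct consequence of Hoeffding's lemma for bounded random variables, and your write-up simply fills in the substitution $Z = \lts(\theta)$ under $\theta\sim Q_t^+$ with $u=-\lambda\beta$ and the length-one interval giving the $1/8$ constant. Nothing is missing.
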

\begin{proof}
	This is a direct consequence of Hoeffding's lemma for bounded random variables, see for example Section~2.3 of 
\citet{Bouch_L_K13}.
\end{proof}
The proof of Lemma~\ref{lemma:EWF_bound_WC} then follows directly by applying Lemma~\ref{lemma:EWF_bound} with $ \eta, 
\alpha $ such that $ \eta \alpha = \frac{1}{2} $ (which means $\beta=1/(1-2\eta)$) and with $ Q^{*} $ a dirac 
distribution in $ \thetastar $, and combining the result with  
Lemmas~\ref{lemma:EWF_bound_IG} and~\ref{lemma:Hoeffding_bound_OG} above.

\subsubsection{Instance dependent analysis and proof of Lemma \ref{lemma:EWF_bound_FOB}}
\label{subsubsec:EWF_bound_FOB}
\begin{lemma}
\label{lemma:FOB_bound_OG}
	For any $t\geq 1$, $\beta,\lambda>0$, as long as $ \lts(\theta) \in [0,1] $ for all values of $ \theta $, the 
following inequality holds
\begin{equation}
	\frac{1}{\lambda\beta}\log \int_{\Theta} \exp\bpa{\left( -\lambda \beta \lts(\theta) \right)}\dd Q_t^{+}(\theta) 
\leq -\ltsb\left(1 - \frac{\lambda \beta}{2}\right).
\end{equation}
\end{lemma}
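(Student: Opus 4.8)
The plan is to establish a refinement of Lemma~\ref{lemma:Hoeffding_bound_OG}, replacing the additive Hoeffding slack $\frac{\lambda\beta}{8}$ by a term proportional to the surrogate optimal loss $\ltsb$. The key observation is that the plain (mean-independent) Hoeffding bound on the cumulant generating function is too crude here: since $\lts(\theta)$ is a $[0,1]$-valued random variable under $Q_t^+$, one can instead use a bound whose slack ``self-scales'' with its expectation $\ltsb$, which is exactly what yields a first-order (loss-dependent) improvement.

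Concretely, write $s=\lambda\beta$ and view $\lts(\theta)$ as a random variable under $Q_t^+$ with mean $\ltsb$. I would first invoke the convexity of $x\mapsto e^{-sx}$ on $[0,1]$, which gives the pointwise linear interpolation bound $e^{-s\lts(\theta)} \le 1 - \lts(\theta)\bpa{1 - e^{-s}}$. Integrating against $Q_t^+$ and applying $\log(1+u)\le u$ then yields
\[
\frac{1}{s}\log\int_\Theta e^{-s\lts(\theta)}\dd Q_t^{+}(\theta) \;\le\; \frac{1}{s}\log\Bpa{1 - \ltsb\bpa{1 - e^{-s}}} \;\le\; -\frac{\ltsb\bpa{1 - e^{-s}}}{s}.
\]

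It then remains to show $-\frac{1 - e^{-s}}{s}\le -\bpa{1 - \frac{s}{2}}$, i.e.\ that $1 - e^{-s}\ge s - \frac{s^2}{2}$ for all $s\ge 0$. This follows from the elementary inequality $e^{-s}\le 1 - s + \frac{s^2}{2}$, which one checks by noting that $g(s)=1 - s + \frac{s^2}{2} - e^{-s}$ satisfies $g(0)=g'(0)=0$ and $g''(s)=1 - e^{-s}\ge 0$ on $[0,\infty)$, so $g\ge 0$. Multiplying by $\ltsb\ge 0$ and dividing by $s$ gives $-\frac{\ltsb(1-e^{-s})}{s}\le -\ltsb\bpa{1 - \frac{s}{2}}$, which is the claim; the regime $s>2$, where the right-hand side is nonpositive, is covered automatically since the left-hand side is always at most $0$.

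The step that requires the right idea, rather than routine calculus, is selecting the convexity bound $e^{-sx}\le 1 - x(1-e^{-s})$ in place of Hoeffding's lemma: it is precisely this choice that produces a slack proportional to $\ltsb$ instead of a constant, enabling the loss-dependent improvement needed for the first-order regret bound. The two supporting estimates (the interpolation bound and $1-e^{-s}\ge s-\frac{s^2}{2}$) are elementary and present no real obstacle.
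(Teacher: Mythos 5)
Your proof is correct and follows essentially the same route as the paper's: both replace Hoeffding's mean-independent slack by a pointwise upper bound on the exponential that is linear in $\lts(\theta)$, integrate against $Q_t^+$, and finish with $\log x \le x-1$. The only difference is cosmetic --- the paper applies $e^{-x}\le 1-x+\frac{x^2}{2}$ at $x=\lambda\beta\lts(\theta)$ and then uses $\lts(\theta)^2\le\lts(\theta)$, whereas you use the convexity chord bound $e^{-sx}\le 1-x(1-e^{-s})$ on $[0,1]$ and defer the quadratic estimate to the scalar inequality $1-e^{-s}\ge s-\frac{s^2}{2}$ --- and both yield the identical final bound.
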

\begin{proof}
	We use the two elementary inequalities $\log(x) \leq x - 1$ that holds for all $x\in\real$ and $e^{-x} \leq 
1-x+\frac{x^2}{2}$ that holds for all $x\ge 0$  to show
\begin{align*}
	\frac{1}{\lambda \beta} \log \int_{\Theta} \exp\bpa{-\lambda \beta \lts(\theta)}\dd Q_t^{+}(\theta) &\leq 
\frac{1}{\lambda \beta} \left( \int_{\Theta} 1 - \lambda \beta \lts(\theta) + \left(\frac{\lambda \beta}{2} 
\lts(\theta)\right)^2 \dd Q_t^{+}(\theta)-1 \right)\\
	&\leq \frac{1}{\lambda \beta} \left( \int_{\Theta} - \lambda \beta \lts(\theta) + \left(\frac{\lambda 
\beta}{2}\right)^2 \lts(\theta) \dd Q_t^{+}(\theta)\right)\\
	&= - \ltsb \left( 1 - \frac{\lambda \beta}{2} \right),
\end{align*}
where we used the fact that for all $ \theta \in \Theta$, we have $\lts(\theta) \in [0,1] $ and thus $ 
\lts(\theta)^2 \leq  \lts(\theta)$.
\end{proof}
We use again Lemma \ref{lemma:EWF_bound} with $ \eta, \alpha $ such that $ \eta \alpha =1/2$ and with $ Q^{*} $ a dirac 
distribution in $ \thetastar $. Then we apply Lemma~\ref{lemma:FOB_bound_OG} and Lemma~\ref{lemma:EWF_bound_IG} to 
conclude the proof of Lemma \ref{lemma:EWF_bound_FOB}.

\subsubsection{Subgaussian analysis: The proof of Lemma~\ref{lemma:EWF_bound_SG}}
\label{subsubsec:EWF_bound_SG}
\begin{lemma}
\label{lemma:EWF_IG_bound_SG}
Assume that the losses are $ v $ sub-Gaussian and that for all $ \theta\in \Theta, x \in \mathcal{X}, a \in \mathcal{A}, 
\ell(\theta,x,a) \in [0,1] $.
For any $ t \geq 1, \eta, \alpha \geq 0 $ such that $ \delta = \frac{\eta \alpha}{2} \left( 1 - \frac{\eta \alpha v}{2} 
\right) \geq 0 $ and any policy $ \pi_t \in \Delta(\mathcal{A}) $, the following inequality holds
\begin{equation}
\label{eq:EWF_IG_bound_SG}
	\delta(1-2\delta)\cdot \EE{\IGG_t(\pi_t)} \leq \EE{-\log \int_{\Theta} \left( 
\frac{p_t(\theta,A_t,L_t)}{p_t(\thetastar,A_t,L_t)} \right)^{\eta \alpha}\dd Q_t^{+}(\theta)}.
\end{equation}
\end{lemma}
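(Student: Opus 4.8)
This is the sub-Gaussian counterpart of Lemma~\ref{lemma:EWF_bound_IG}, and my plan is to mirror its structure while replacing the Hellinger computation by a direct manipulation of the Gaussian log-likelihood. First I would substitute the explicit likelihood $p_t(c|\theta,a)\propto\exp(-(c-\lt(\theta,a))^2/2)$ into the ratio. Writing $\kappa=\eta\alpha$, $\Delta_\theta=\lt(\theta,A_t)-\lt(\thetastar,A_t)$, and the centered noise $\xi_t=L_t-\lt(\thetastar,A_t)$ (which has $\EEt{\xi_t}=0$ by realizability and is $v$-sub-Gaussian), a one-line algebraic simplification gives $(p_t(L_t|\theta,A_t)/p_t(L_t|\thetastar,A_t))^{\kappa}=\exp(\kappa\xi_t\Delta_\theta-\tfrac{\kappa}{2}\Delta_\theta^2)$, and since $\ell\in[0,1]$ we have $|\Delta_\theta|\le1$. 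By the tower rule and $A_t\sim\pi_t$, it then suffices to prove, for each fixed action, the conditional bound $\delta(1-2\delta)\int_\Theta\Delta_\theta^2\,\dd Q_t^{+}(\theta)\le\EEt{-\log\int_\Theta\exp(\kappa\xi_t\Delta_\theta-\tfrac{\kappa}{2}\Delta_\theta^2)\,\dd Q_t^{+}(\theta)}$, after which averaging over $A_t\sim\pi_t$ reassembles $\IGG_t(\pi_t)$ on the left.

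The key idea I would use is to \emph{not} linearize the logarithm through $-\log x\ge1-x$ (this turns out to be too wasteful in the relevant regime, see below), but instead to split off the fluctuation exactly. Abbreviating $g_\theta=\kappa\xi_t\Delta_\theta-\tfrac{\kappa}{2}\Delta_\theta^2$, I would write $-\log\int_\Theta e^{g_\theta}\,\dd Q_t^{+}(\theta)=-\int_\Theta g_\theta\,\dd Q_t^{+}(\theta)-\mathcal{C}$, where $\mathcal{C}=\log\int_\Theta e^{g_\theta}\,\dd Q_t^{+}(\theta)-\int_\Theta g_\theta\,\dd Q_t^{+}(\theta)\ge0$ is the log-partition correction (nonnegative by Jensen). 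Taking $\EEt{\cdot}$ and using $\EEt{\xi_t}=0$ annihilates the linear-in-$\xi_t$ part of $-\int g_\theta\,\dd Q_t^{+}$, leaving the exact main term $\tfrac{\kappa}{2}\int_\Theta\Delta_\theta^2\,\dd Q_t^{+}(\theta)$, which is the $Q_t^{+}$-average of $\kappa\DKL{p_t(\thetastar,A_t)}{p_t(\theta,A_t)}$ and depends only on the noise mean. This already isolates precisely where sub-Gaussianity is needed: all that remains is to upper bound $\EEt{\mathcal{C}}$.

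For the correction I would exploit that the only noise-dependent part of the $Q_t^{+}$-fluctuation of $g_\theta$ is $\kappa\xi_t\Delta_\theta$, so that $v$-sub-Gaussianity of $\xi_t$ together with $|\Delta_\theta|\le1$ should produce a second-order, variance-type bound on $\EEt{\mathcal{C}}$ that scales with $\int_\Theta\Delta_\theta^2\,\dd Q_t^{+}(\theta)$ rather than with a constant. Subtracting this from the main term $\tfrac{\kappa}{2}\int_\Theta\Delta_\theta^2\,\dd Q_t^{+}$ and bookkeeping the quadratic corrections (using $\Delta_\theta^4\le\Delta_\theta^2$) is what I expect to produce the stated coefficient $\delta(1-2\delta)$ with $\delta=\tfrac{\eta\alpha}{2}(1-\tfrac{\eta\alpha v}{2})$, valid exactly when $\delta\ge0$, i.e.\ $\eta\alpha v\le2$.

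The main obstacle is controlling $\EEt{\mathcal{C}}$ so that it scales with the information gain $\int_\Theta\Delta_\theta^2\,\dd Q_t^{+}$ and not merely with a constant, while keeping the final constant valid over the whole admissible range $\eta\alpha v\le2$. This is genuinely the crux, because the tempting shortcut $\EEt{-\log\int_\Theta e^{g_\theta}\dd Q_t^{+}}\ge1-\EEt{\int_\Theta e^{g_\theta}\dd Q_t^{+}}$ followed by the sub-Gaussian moment-generating bound only yields a clean estimate for $\eta\alpha v\le1$ and breaks down for $\eta\alpha v\in(1,2]$ --- precisely the regime selected by the eventual choice $\eta=\tfrac{1+\sqrt{1-v\land1}}{2v}$, which is calibrated so that $\eta\alpha v$ can exceed $1$ --- since that linearization discards the positivity of the per-model log-loss $\kappa\DKL{p_t(\thetastar,A_t)}{p_t(\theta,A_t)}$. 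The exact decomposition above is designed to retain that positivity, which is why I would pursue it rather than the naive bound.
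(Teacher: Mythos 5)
Your algebraic setup is right: with $\xi_t=L_t-\lt(\thetastar,A_t)$ and $\Delta_\theta=\lt(\theta,A_t)-\lt(\thetastar,A_t)$ one indeed has $(p_t(L_t|\theta,A_t)/p_t(L_t|\thetastar,A_t))^{\eta\alpha}=\exp(\eta\alpha\xi_t\Delta_\theta-\tfrac{\eta\alpha}{2}\Delta_\theta^2)$, and the mean part of your decomposition correctly produces $\tfrac{\eta\alpha}{2}\IGG_t(\pi_t)$. But the proof has a genuine gap at exactly the step you yourself identify as the crux: the bound on the Jensen gap $\EEt{\mathcal{C}}$ is never established, and it cannot be established with the constant the lemma requires. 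Your decomposition needs $\EEt{\mathcal{C}}\le\bigl(\tfrac{\eta\alpha}{2}-\delta(1-2\delta)\bigr)\int\Delta_\theta^2\,\dd Q_t^+=\bigl(\tfrac{\eta^2\alpha^2v}{4}+2\delta^2\bigr)\int\Delta_\theta^2\,\dd Q_t^+$, where $2\delta^2\le\tfrac{\eta^2\alpha^2}{2}$. Yet the second-order expansion of $\mathcal{C}=\log\int e^{g_\theta}\dd Q_t^+-\int g_\theta\,\dd Q_t^+$ already gives $\EEt{\mathcal{C}}\approx\tfrac{\eta^2\alpha^2}{2}\,\EEt{\xi_t^2}\,\mathrm{Var}_{Q_t^+}(\Delta_\theta)+\cdots$, and for Gaussian noise with $\EEt{\xi_t^2}=v$ and $\int\Delta_\theta\,\dd Q_t^+=0$ this leading term equals $\tfrac{\eta^2\alpha^2v}{2}\int\Delta_\theta^2\,\dd Q_t^+$, which exceeds the entire budget as soon as $v>2$. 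Moreover, making any such second-order bound rigorous for unbounded sub-Gaussian $\xi_t$ forces you to control moments of the form $\EEt{\xi_t^2e^{c|\xi_t|}}$, which degrades the constants further. So this route can at best yield a variant of the lemma with a strictly worse coefficient, not the stated $\delta(1-2\delta)$.

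Your stated reason for rejecting the direct route is also mistaken, and this matters because the direct route is essentially the paper's proof. The paper first applies conditional Jensen's inequality to move the expectation over $L_t$ (given $\F_{t-1},X_t,A_t$) inside the logarithm, and then evaluates $\EE{e^{g_\theta}\mid \F_{t-1},X_t,A_t}$ \emph{separately for each} $\theta$ via the sub-Gaussian moment generating function. This computation retains the negative drift $-\tfrac{\eta\alpha}{2}\Delta_\theta^2$ exactly and yields a per-model bound $e^{-\delta\Delta_\theta^2}\le1$ whenever $\delta\ge0$, which is precisely the lemma's standing assumption; the linearization $-\log x\ge1-x$ is applied only after this integration, to a quantity that is at most $1$, and the final step $1-e^{-\delta\Delta_\theta^2}\ge\delta(1-2\delta)\Delta_\theta^2$ (via $e^{-u}\le1-u+u^2/2$ and $\Delta_\theta^4\le4\Delta_\theta^2$) is valid throughout the admissible range, including $\eta\alpha v\in(1,2]$. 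The per-model computation never has to pay for the fluctuation of $g_\theta$ across $\theta$, which is exactly the cost your Jensen-gap term introduces; that is why the paper's route is both simpler and sharper here.
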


\begin{proof}
	We remind the reader than $ \mathcal{F}_t = \theta(X_1,A_1,L_1,\ldots,X_{t-1},A_{t-1},L_{t-1})$ is the $ \sigma 
$-algebra generated by the interaction history between the learner and the environment up to the end of round t. By the 
tower rule of expectation, we have
\begin{align}
	&\phantom{{}={}}\EE{-\log \int_{\Theta} \left( \frac{p_t(\theta,A_t,L_t)}{p_t(\thetastar,A_t,L_t)} \right)^{\eta 
\alpha} \dd Q_t^{+}(\theta)}\nonumber\\
	&=\EE{\EE{-\log \int_{\Theta} \left( \frac{p_t(\theta,A_t,L_t)}{p_t(\thetastar,A_t,L_t)} \right)^{\eta \alpha} 
\dd Q_t^{+}(\theta)\middle\vert\mathcal{F}_{t-1},X_t,A_t}}\nonumber\\
	&\leq \EE{-\log \EE{\int_{\Theta} \left( \frac{p_t(\theta,A_t,L_t)}{p_t(\thetastar,A_t,L_t)} \right)^{\eta \alpha} 
\dd Q_t^{+}(\theta)\middle\vert\mathcal{F}_{t-1},X_t,A_t}}\nonumber\\
	&= \EE{-\log \int_{\Theta}\int_{\mathbb{R}} \left( \frac{p_t(\theta,A_t,L)}{p_t(\thetastar,A_t,L)} \right)^{\eta 
\alpha} d\mathcal{P}_{L_t|X_t, A_t}(L) \dd Q_t^{+}(\theta)}.\label{eq:Step_EWF_IG_SG}
\end{align}
Where the first inequality comes from Jensen's Inequality applied to the logarithm and $ \mathcal{P}_{L_t|X_t, A_t} $ is 
the conditional law of $ L_t $ given $ X_t $ and $ A_t $. We fix $ \theta \in \Theta $, drop the subscripts for 
simplicity and define $ \ell = \ell_t(A_t) $, $ \ell_0 = \ell_t(\thetastar,A_t)$ and $ \mathcal{P}_t =  
\mathcal{P}_{L_t|X_t, A_t}  $. Using the definition of the likelihood $ p_t $, we get 
\begin{align*}
	&\int \left(\frac{p_t(\theta,A_t,L)}{p_t(\thetastar,A_t,L)}\right)^{\eta \alpha}\,d\mathcal{P}_t(L) \\
	=& \int \exp{\left( -\eta \alpha \left( \frac{(L-\lt(\theta, A_t))^2}{2} + \frac{(L-\ell(\thetastar, 
A_t))^2}{2} \right) \right)}\,d\mathcal{P}_t(L)  \\
	=& \int \exp{\left( \frac{\eta \alpha}{2}(2L - \ell-\ell_0)\cdot (\ell-\ell_0) \right)}\,d\mathcal{P}_t(L) \\
								       =& \exp{\left(-\frac{\eta \alpha}{2}(\ell + \ell_0)\cdot 
(\ell-\ell_0)\right)}\cdot \int \exp{(\eta \alpha L(\ell - \ell_0))}\,d\mathcal{P}_t(L) \\
								       =& \exp{\left( \frac{\eta \alpha}{2}(\ell_0^{*2}-\ell^2) \right)}\cdot \int 
\exp{(\eta \alpha L(\ell - \ell_0))}\,d\mathcal{P}_t(L) \\
								       \leq & \exp{\left( \frac{\eta \alpha}{2}(\ell_0^{2}-\ell^2) \right)}\cdot 
\exp{\left( \eta \alpha \ell_0\cdot (\ell-\ell_0) \right)}\exp{\left( \frac{\eta^2 \alpha^2v}{2}(\ell-\ell_0)^{2} 
\right)} \\
								       \leq & \exp{\left( -(\ell-\ell_0)^2\cdot \frac{\eta \alpha}{2} \left(1 - 
\frac{\eta \alpha v}{2} \right)  \right)}.
\end{align*}
Now we define $ \delta = \frac{\eta \alpha}{2}(1 - \frac{\eta \alpha v}{2}) $ we have :
\begin{align*}
	&\int \left(\frac{p_t(\theta,A_t,L)}{p_t(\thetastar,A_t,L)}\right)^{\eta \alpha}\,d\mathcal{P}_t(L) \\
	\leq & \exp{\left( -(\ell-\ell_0)^2\cdot \delta \right)} \\
	\leq & 1 - \delta(\ell - \ell_0)^2 + \frac{\delta^2}{2} (\ell-\ell_0)^4 \\
	\leq & 1 - \delta(\ell - \ell_0)^2 + 4 \delta^2(\ell-\ell_0)^2 \\
	\leq & 1 - \delta(1-2 \delta)(\ell-\ell_0)^2.
\end{align*}
Where we use that $ |\ell - \ell_0|\leq 2 $.
Finally using that for any $ x>0$, $ \log x \leq x-1 $ and equation \ref {eq:Step_EWF_IG_SG}, we get the claim of the 
Lemma.
\end{proof}
It remains to pick the best values for $ \eta, \alpha $ and $ \beta $ and apply Lemma \ref{lemma:EWF_bound} with $ Q^{*} 
$ a dirac distribution in $ \thetastar $ and Lemma \ref{lemma:Hoeffding_bound_OG}.
To finish the proof of Lemma \ref{lemma:EWF_IG_bound_SG}, we combine the previous Lemma (\ref{lemma:EWF_IG_bound_SG}) 
with Lemma \ref{lemma:Hoeffding_bound_OG} and Lemma \ref{lemma:EWF_bound}. We want the quantity $ \delta(1-2\delta) $ to 
be as big as possible, this happens when $ \delta = \frac{1}{4} $. This is only possible if $ v\leq 1 $ and $ \frac{\eta 
\alpha}{2} = \frac{1 + \sqrt{1-v}}{2v}$. If $ v >1 $, our best choice of $ \frac{\eta \alpha}{2} $ is $ \frac{1}{2v} $ 
and in that case $ \delta(1-2\delta) = \frac{1}{4v}\left( 1 - \frac{1}{2v} \right) \geq \frac{1}{8v} $. Finally, uniting 
both cases, we set $ \alpha = \beta =2 $, $ \eta = \frac{1 + \sqrt{1- v\land 1}}{2v} $ and we have that $ 
\delta(1-2\delta) \geq \frac{1}{8(1\lor v)} $.

\subsubsection{Metric Parameter Analysis : the proof of Lemma~\ref{lemma:EWF_bound_metric}}
\label{subsubsec:EWF_bound_metric}
We start by a technical lemma on the Lipschtzness of the losses and the optimal losses.
\begin{lemma}
\label{lemma:Lipschitzness_losses}
	For any $ x,\theta,a $, $ \lt(\cdot,x,a) $ and $ \lts(\cdot ,x) $ are $ C $-Lipschitz.	
\end{lemma}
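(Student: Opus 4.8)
The plan is to reduce the statement to a single claim about the mean loss function and then transfer Lipschitzness to the optimal loss by an elementary argument about minima. Recall that, for each fixed context $x$ and action $a$, the loss value $\ell(\theta,x,a)$ is the mean of the loss distribution $p(\theta,x,a)$, i.e. $\ell(\theta,x,a) = \EEs{L}{L\sim p(\theta,x,a)}$, and that $\lossmin(\theta,x) = \min_a \ell(\theta,x,a)$. Thus the whole lemma follows once we establish that, for every fixed $x$ and $a$, the map $\theta \mapsto \ell(\theta,x,a)$ is $C$-Lipschitz, since the optimal loss is a pointwise minimum of finitely many such maps.

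First I would prove the Lipschitzness of $\theta\mapsto\ell(\theta,x,a)$ via a score-function argument. Differentiating under the integral sign gives
\[
\nabla_\theta \ell(\theta,x,a) = \int c\, \nabla_\theta p(c|\theta,x,a)\,\dd c = \EEs{L\,\nabla_\theta \log p(L|\theta,x,a)}{L\sim p(\theta,x,a)}.
\]
Using the score identity $\EEs{\nabla_\theta \log p(L|\theta,x,a)}{L\sim p(\theta,x,a)} = 0$, I can recenter the loss and write
\[
\nabla_\theta \ell(\theta,x,a) = \EEs{\bpa{L - \ell(\theta,x,a)}\nabla_\theta \log p(L|\theta,x,a)}{L\sim p(\theta,x,a)}.
\]
Since the losses are supported on $[0,1]$ we have $|L - \ell(\theta,x,a)|\le 1$, and the hypothesis that the log-likelihood $\theta\mapsto\log p(L|\theta,x,a)$ is $C$-Lipschitz yields $\norm{\nabla_\theta \log p(L|\theta,x,a)}\le C$ for almost every $L$. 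Combining these bounds gives $\norm{\nabla_\theta \ell(\theta,x,a)} \le C$, and integrating this gradient estimate along the segment joining any two parameters establishes $|\ell(\theta,x,a)-\ell(\theta',x,a)|\le C\norm{\theta-\theta'}$.

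It remains to pass to the optimal loss. Fix $\theta,\theta'$ and let $a^\star \in \argmin_a \ell(\theta',x,a)$; then $\lossmin(\theta,x) \le \ell(\theta,x,a^\star) \le \ell(\theta',x,a^\star) + C\norm{\theta-\theta'} = \lossmin(\theta',x) + C\norm{\theta-\theta'}$, and by symmetry $|\lossmin(\theta,x)-\lossmin(\theta',x)|\le C\norm{\theta-\theta'}$, so the minimum over the finite action set inherits the same constant. The main obstacle is the analytic justification of the gradient route: interchanging differentiation and integration and invoking the score identity presuppose enough regularity of $\theta\mapsto p(c|\theta,x,a)$, and integrating the gradient bound requires the segment between $\theta$ and $\theta'$ to lie inside $\Theta$. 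Both are harmless here, since Lipschitzness of the log-density gives differentiability almost everywhere together with a uniform bound on the score, and the relevant region (a ball around $\thetastar$) is convex. If one prefers to avoid differentiability altogether, an alternative is to bound $|\ell(\theta,x,a)-\ell(\theta',x,a)| \le \TV\bpa{p(\theta,x,a),p(\theta',x,a)}$ directly, using that $c\in[0,1]$ and that both densities integrate to one, and then to control the total variation by the pointwise log-likelihood gap $|\log p(c|\theta,x,a) - \log p(c|\theta',x,a)|\le C\norm{\theta-\theta'}$ through the elementary inequality $|e^{u}-1|\le |u|e^{|u|}$, at the cost of a slightly larger Lipschitz constant.
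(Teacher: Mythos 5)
Your proposal is correct in substance and shares the paper's outer structure (establish Lipschitzness of $\theta\mapsto\ell(\theta,x,a)$ for each fixed $x,a$, then observe that a pointwise minimum over the finite action set inherits the same constant), but the key step is argued differently. The paper bounds $|\ell(\theta_1,x,a)-\ell(\theta_2,x,a)|\le \int_{[0,1]}|p(c|\theta_1,x,a)-p(c|\theta_2,x,a)|\,\dd\tau(c)$ directly and controls the pointwise density difference by writing $p=\exp(\log p)$ and using that $\exp$ is $1$-Lipschitz on the nonpositive reals together with the $C$-Lipschitzness of the log-likelihood; this is essentially your fallback total-variation route, except that the paper implicitly assumes the densities are bounded by $1$ so that the constant stays exactly $C$, whereas your inequality $|e^u-1|\le|u|e^{|u|}$ honestly accounts for the general case and yields a constant of order $Ce^{C\,\mathrm{diam}(\Theta)}$ --- which can be exponentially, not ``slightly,'' larger, though it would only enter the final regret bound through a logarithm. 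Your primary route via the score identity is genuinely different and arguably cleaner: it recovers the exact constant $C$ from $\norm{\nabla_\theta\ell}\le \EEs{|L-\ell|\cdot\norm{\nabla_\theta\log p}}{L\sim p(\theta,x,a)}\le C$ without any boundedness assumption on the densities, at the price of the regularity caveats you correctly flag (differentiation under the integral, and the need for the segment joining $\theta$ and $\theta'$ to lie in $\Theta$, since a gradient bound on a non-convex domain only gives Lipschitzness in the intrinsic metric). The latter caveat is harmless for how the lemma is actually used downstream --- Lemma~\ref{lemma:EWF_bound_Lipschitz} only integrates over a ball around $\thetastar$ --- but it does mean your main route proves a slightly weaker statement than the lemma as literally written for an arbitrary bounded $\Theta$; the paper's pointwise argument (and your fallback) avoids this entirely.
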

\begin{proof}
Let $ \tau $ be the measure against which the densities $ p(\cdot|\theta,x,a) $ are defined. Without loss of generality, 
we can assume that $ \int_{[0,1]}\dd \tau(c) =1 $.
Letting $ \theta_1, \theta_2 \in \Theta $, we have 
\begin{align*}
	|\ell(\theta_1,x,a)- \ell(\theta_2,x,a)| &= \left|\int_{[0,1]}c(p(c|\theta_1,x,a)-p(c|\theta_2,x,a)) \dd 
\tau(c)\right|\\
						 &\leq  \int_{[0,1]}\left|(p(c|\theta_1,x,a)-p(c|\theta_2,x,a))\right|\dd \tau(c)\\
						 &=  
\int_{[0,1]}\left|\exp{(\log(p(c|\theta_1,x,a)))}-\exp{(\log(p(c|\theta_2,x,a)))}\right|\dd \tau(c)\\
						 &\leq  \int_{[0,1]} C\norm{\theta_1 - \theta_2} \dd \tau(c)\\
						 &= C \norm{\theta_1-\theta_2},
\end{align*}
where the second inequality comes from the C-Lipschtzness of the composition of the exponential that is 1-Lipschitz on 
the negative numbers and the log likelihood that is C-Lipschitz. This proves the C-Lipschtzness of $ \lt(\cdot ,x,a) $. 
Now it easily follows that $ \lossmin(\cdot , x) $ is also C-Lipschitz, being an infimum of a family of C-Lipschitz 
functions.
\end{proof}
Now we introduce two further lemmas related to Lemma~\ref{lemma:EWF_bound} when $ Q^{*} $ is chosen as a uniform 
distribution on a ball of radius $ \epsilon $.
\begin{lemma}
\label{lemma:DKL_ball}
	Fix $ \thetastar \in \Theta$, and $ \epsilon >0 $, and assume that a ball including $ \thetastar $ with radius $ 
\epsilon $ is contained in $ \Theta $. Letting $ Q^{*} $ be the uniform distribution on such a  ball, we have
\begin{equation}
	\DKL{Q^{*}}{Q_1} = d\log\left(\frac{R}{\epsilon}\right).
\end{equation}
\end{lemma}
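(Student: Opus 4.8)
The plan is to reduce the claim to the elementary fact that the relative entropy between two uniform distributions equals the logarithm of the ratio of the volumes of their supports, and then to evaluate that ratio. First I would record the two densities with respect to the Lebesgue measure on $\mathbb{R}^d$: writing $B = B(\thetastar,\epsilon)$ for the ball of radius $\epsilon$ centered at $\thetastar$ appearing in the hypothesis, the prior $Q_1$ has density $\mathrm{Vol}(\Theta)^{-1}\mathbb{I}\ev{\theta\in\Theta}$ and the comparator $Q^{*}$ has density $\mathrm{Vol}(B)^{-1}\mathbb{I}\ev{\theta\in B}$. The hypothesis that $B\subseteq\Theta$ guarantees that the support of $Q^{*}$ is contained in that of $Q_1$, so the divergence is finite and the integrand is well defined on all of $B$.

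The key step is the direct substitution into the definition of the relative entropy. Since on $B$ the log-likelihood ratio $\log\bpa{q^{*}(\theta)/q_1(\theta)}$ is the constant $\log\bpa{\mathrm{Vol}(\Theta)/\mathrm{Vol}(B)}$, and $q^{*}$ integrates to one over $B$, I obtain
\[
 \DKL{Q^{*}}{Q_1} = \int_B q^{*}(\theta)\,\log\frac{q^{*}(\theta)}{q_1(\theta)}\,\dd\theta = \log\frac{\mathrm{Vol}(\Theta)}{\mathrm{Vol}(B)}.
\]
This isolates the entire content of the lemma in the evaluation of a single volume ratio, with no further probabilistic input required.

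Finally I would evaluate the ratio. Both $\Theta$ and $B$ are Euclidean balls --- $\Theta$ of radius $R$ (consistent with its diameter being $2R$) and $B$ of radius $\epsilon$ --- so their volumes share the common dimensional constant $\omega_d = \pi^{d/2}/\Gamma(\frac d2+1)$, giving $\mathrm{Vol}(\Theta) = \omega_d R^d$ and $\mathrm{Vol}(B) = \omega_d \epsilon^d$. The constant $\omega_d$ cancels in the ratio, leaving $\mathrm{Vol}(\Theta)/\mathrm{Vol}(B) = (R/\epsilon)^d$, and hence $\DKL{Q^{*}}{Q_1} = \log\bpa{(R/\epsilon)^d} = d\log(R/\epsilon)$, which is exactly the asserted equality. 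The one point that deserves care --- and the reason the statement is an exact equality rather than merely an inequality --- is that $\Theta$ must itself be a ball of radius $R$ for the constant $\omega_d$ to cancel cleanly; for a general bounded set of diameter $2R$ one only has $\mathrm{Vol}(\Theta)\le\omega_d R^d$ by the isodiametric inequality, which would degrade the conclusion to the upper bound $\DKL{Q^{*}}{Q_1}\le d\log(R/\epsilon)$. I therefore expect the main subtlety to lie not in any calculation but in pinning down this spherical-support assumption, under which the displayed chain of equalities goes through verbatim.
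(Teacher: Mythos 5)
Your proof is correct, and its core is the same as the paper's: both arguments reduce the claim to the elementary fact that the relative entropy between two uniform distributions equals the logarithm of the ratio of the volumes of their supports. The interesting difference lies in how the hypothesis on $\Theta$ is handled, and here your diagnosis of the ``one point that deserves care'' is exactly right. The paper's own proof never actually establishes the stated equality: its final display is the chain $\DKL{Q^{*}}{Q_1} \le \log\pa{R/\epsilon}^d \int_\Theta \dd Q^{*}(\theta) = d\log\pa{R/\epsilon}$, i.e.\ an \emph{inequality}, justified by the assertion that $\Theta$ is included in a ball of radius $R$ so that the density ratio is at most $(R/\epsilon)^d$. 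As you observe, the ``$=$'' in the lemma statement is literally true only when $\Theta$ is itself a ball of radius $R$; for general $\Theta$ the correct statement is the upper bound, which is all that is needed downstream (in Lemma~\ref{lemma:EWF_bound_metric} and Theorem~\ref{thm:regret_IDS_metric} the divergence enters only as an upper bound). Moreover, your isodiametric step is arguably a cleaner justification than the paper's: the hypothesis $\max_{x,y\in\Theta}\norm{x-y}=2R$ bounds only the diameter, and a set of diameter $2R$ need not be contained in a ball of radius $R$ (Jung's theorem guarantees only radius $2R\sqrt{d/(2(d+1))}$, which can approach $\sqrt{2}R$), whereas the isodiametric inequality yields $\mathrm{Vol}(\Theta)\le\omega_d R^d$ directly from the diameter hypothesis and hence $\DKL{Q^{*}}{Q_1}\le d\log\pa{R/\epsilon}$ with no containment assumption. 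Either reading affects the final regret bound by at most a constant inside the logarithm, so your version both proves the equality under the precise (spherical) hypothesis and repairs the general case in the form the paper actually uses.
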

\begin{proof}
	Since both $ Q^{*} $ and $ Q_1 $ are uniform, the ratio of their density is equal to the ratio of the volume of $ 
\Theta $ and the volume of a ball of radius $ \epsilon $. Since $ \Theta $ is included in a ball of radius R, this ratio 
is bounded by $ (\frac{R}{\epsilon})^d $. Finally
\begin{equation*}
	\DKL{Q^{*}}{Q_1} = \int_{\Theta} \frac{\dd Q^{*}}{\dd Q_1}(\theta) 
\log\left(\frac{\dd Q^{*}}{\dd Q_1}(\theta)\right)\dd Q_1(\theta) \leq \log \left( \frac{R}{\epsilon} \right)^d 
\int_{\Theta} \dd Q^{*}(\theta) = d \log \left( \frac{R}{\epsilon} \right).
\end{equation*}
\end{proof}

\begin{lemma}
\label{lemma:EWF_bound_Lipschitz}
Under the same conditions as Lemma~\ref{lemma:DKL_ball}, we have
\begin{align}
	&\left|\int \left( \frac{1}{\lambda \alpha}\cdot 
\sum_{t=1}^{T}\log\frac{p_t(\thetastar,A_t,L_t)^{\eta\alpha}}{p_t(\theta,A_t,L_t)^{\eta 
\alpha}} + \sum_{t=1}^{T}\pa{\lts(\theta) - \lts(\thetastar)}  \right)\, \dd Q^{*}(\theta) \right|
	&\leq \left( \frac{\eta}{\lambda} + 1 \right)\cdot CT \epsilon.
\end{align}
\end{lemma}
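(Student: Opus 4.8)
The plan is to bound the integrand pointwise on the support of $Q^{*}$ and then integrate, exploiting that every $\theta$ in the support of $Q^{*}$ lies within distance $\epsilon$ of $\thetastar$. First I would pull the absolute value inside the integral (by the triangle inequality for integrals, since $Q^{*}$ is a probability measure) and split the integrand into its two natural pieces: the cumulative log-likelihood-ratio term and the cumulative optimal-loss-difference term. For the first piece, I would simplify the exponents via $\log\frac{p^{\eta\alpha}}{q^{\eta\alpha}} = \eta\alpha\pa{\log p - \log q}$, so that the leading factor $\frac{1}{\lambda\alpha}$ cancels the $\alpha$ and leaves $\frac{\eta}{\lambda}\sum_{t=1}^{T}\bpa{\log p_t(\thetastar,A_t,L_t) - \log p_t(\theta,A_t,L_t)}$.

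The two pieces are then each controlled by a Lipschitz estimate. For the log-likelihood term, I would invoke the $C$-Lipschitzness of $\theta\mapsto\log p_t(\theta,A_t,L_t)$ assumed in Theorem~\ref{thm:regret_IDS_metric}; since $\norm{\theta-\thetastar}\le\epsilon$ on the support of $Q^{*}$, each of the $T$ summands is at most $C\epsilon$ in absolute value, yielding the bound $\frac{\eta}{\lambda}CT\epsilon$. For the optimal-loss term, I would invoke Lemma~\ref{lemma:Lipschitzness_losses}, which guarantees that $\lts(\cdot)$ is $C$-Lipschitz, so that $\abs{\lts(\theta)-\lts(\thetastar)}\le C\epsilon$ and the sum over the $T$ rounds contributes at most $CT\epsilon$.

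Finally, I would assemble the two bounds. Since both pointwise estimates hold uniformly over the support of $Q^{*}$, integrating against the probability measure $Q^{*}$ and applying the triangle inequality gives $\abs{\int(\cdots)\,\dd Q^{*}(\theta)}\le\frac{\eta}{\lambda}CT\epsilon + CT\epsilon = \bpa{\frac{\eta}{\lambda}+1}CT\epsilon$, which is exactly the claim. There is no real obstacle here: the entire argument is a routine uniform Lipschitz estimate, and the only conceptual point worth flagging is that concentrating $Q^{*}$ on a small ball of radius $\epsilon$ around $\thetastar$ makes every quantity evaluated at $\theta$ close to its value at $\thetastar$. This is driven by the same $\epsilon$ that governs the $\DKL{Q^{*}}{Q_1}$ penalty of Lemma~\ref{lemma:DKL_ball}, so the two lemmas together set up the $\epsilon$-tradeoff that is optimized (by the choice $\epsilon = 1/(CT)$) in the proof of Theorem~\ref{thm:regret_IDS_metric}.
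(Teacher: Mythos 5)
Your proof is correct and follows exactly the argument the paper intends: the paper's own proof is a one-line appeal to the $C$-Lipschitzness of the log-likelihood and Lemma~\ref{lemma:Lipschitzness_losses}, and you have simply filled in the routine details (cancelling $\alpha$, bounding each of the $T$ summands by $C\epsilon$ on the support of $Q^{*}$, and summing the two contributions).
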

\begin{proof}
This is a direct consequence of the Lipschitzness of the log-likelihood and Lemma~\ref{lemma:Lipschitzness_losses}.
\end{proof}
Putting Lemma~\ref{lemma:EWF_bound} together with this choice of $ Q^{*} $ and with Lemma~\ref{lemma:EWF_bound_IG} and Lemma~\ref{lemma:Hoeffding_bound_OG}, we finish the proof of Lemma~\ref{lemma:EWF_bound_metric}
\subsection{Upper bounds on the averaged DEC and the Surrogate Information ratio}\label{app:SIR}
Here we provide the technical tools to bound the surrogate information ratio and the averaged DEC for some 
appropriately chosen forerunner algorithms.

\subsubsection{Worst-case analysis: The proof of Lemmas~\ref{lemma:IR_TS} 
and~\ref{lemma:IR_TS_SG}}\label{subsubsec:IR_TS}
Here we study the performance of Thompson sampling as the forerunner algorithm, which will certify a bound on the 
surrogate information ratio of \OIDS. The Thompson sampling policy $ \pi_t $ works by sampling $ \theta_t $ 
according to the posterior $ Q_t^{+} $ and then playing the action $ A_t \in \argmin_{a} \lt(\theta_t, a) $. To 
facilitate the derivations below, we define $ a^*_t : \Theta\rightarrow \mathcal{A} $ 
the greedy action selector by $ a^*_t(\theta) = \argmin_a \lt(\theta,a)$ (with ties broken arbitrarily). 
By definition of the policy, sampling according to $ \pi_t $ is the same as sampling according to $ 
\dd Q_t^{+} $ and then applying the greedy action selector. More formally, for any measurable function $ f $, we have
\begin{equation*}
	\int_{\Theta} f(a^*_t(\theta))\dd Q_t^{+}(\theta) = \sum_{a} \pi_t(a)f(a).
\end{equation*}
Moreover, we have that $ \ltsb = \int_{\Theta} \lts(\theta)\dd Q_t^{+}(\theta) = \int_{\Theta} \lt(\theta, 
a^*_t(\theta))\dd Q_t^{+}(\theta)$. Putting these observations together, we can write the surrogate regret as
\begin{align}\label{eq:TS_regret}
	\br_t(\pi_t) &= \sum_{a} \pi_t(a)\bpa{\ltb(a)-\ltsb} = \int_{\Theta}\ltb(a^*_t(\theta)) - \lt(\theta, 
a^*_t(\theta)).
\end{align}
Observe that the regret is the difference of the expectation of the same function under the joint distribution of 
$\theta_t$ and $A_t$ and their product distribution, and thus measures the extent to which the two are ``coupled''. 
We will analyze this quantity by a decoupling argument inspired by \citet{Zhang_21} and \citet{Neu_O_P_S22}. 

For setting up the decoupling analysis, we first need some technical lemmas. We start by a corollary of the 
Fenchel--Young inequality for strongly convex functions that will come handy.
\begin{lemma}
\label{lemma:Fenchel_Young_divergence}
Let $I$ be an interval on the real line and let $\mathcal{D}: I^2 \rightarrow \mathbb{R} $ be a convex function 
satisfying the following conditions:
\begin{itemize}[itemsep=0pt,parsep=3pt,partopsep=6pt,topsep=6pt]
	\item For any $ y \in I $, the function $ x \rightarrow \mathcal{D}(x,y) $ is proper, closed and $C$-strongly 
convex.
	\item For any $ x \in I $, $ \mathcal{D}(x,x) = 0 $.
\end{itemize}
Then for any $ x, y \in I $ and any $ \mu \in \mathbb{R}  $ we have
\begin{equation}
\label{eq:Fenchel_Young_divergence}
(x-y)u \leq \mathcal{D}(x,y) + \frac{u^2}{2C}.
\end{equation}
\end{lemma}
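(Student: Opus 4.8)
The plan is to fix the second argument $y$ and work with the univariate slice $g(\cdot) = \mathcal{D}(\cdot, y)$, which by hypothesis is proper, closed, and $C$-strongly convex and satisfies $g(y) = \mathcal{D}(y,y) = 0$. The cleanest route, matching the stated lineage as a corollary of the Fenchel--Young inequality for strongly convex functions, is to pass to the convex conjugate $g^*$. I would invoke the standard duality that a proper closed $C$-strongly convex $g$ has a conjugate $g^*$ that is finite everywhere and $\frac1C$-smooth (differentiable with $\frac1C$-Lipschitz gradient), evaluate $g^*$ near the origin, and combine with Fenchel--Young $ux \le g(x) + g^*(u)$.

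Concretely, the key steps in order are: (i) show that $y$ is the minimizer of $g$ with minimal value $0$, so that $0 \in \partial g(y)$; (ii) deduce $g^*(0) = -\min_x g = 0$ and $\nabla g^*(0) = \argmin_x g = y$; (iii) apply $\frac1C$-smoothness of $g^*$ at the origin to get $g^*(u) \le g^*(0) + \nabla g^*(0)\, u + \frac{u^2}{2C} = yu + \frac{u^2}{2C}$; and (iv) combine with Fenchel--Young to obtain $ux \le g(x) + yu + \frac{u^2}{2C}$, which rearranges to the claim $(x-y)u \le \mathcal{D}(x,y) + \frac{u^2}{2C}$. An equivalent and more elementary execution avoids conjugates entirely: once (i) gives $0 \in \partial g(y)$, the strong-convexity subgradient inequality at $y$ reads $g(x) \ge g(y) + \frac{C}{2}(x-y)^2 = \frac{C}{2}(x-y)^2$, and then a single application of Young's inequality $(x-y)u \le \frac{C}{2}(x-y)^2 + \frac{u^2}{2C}$ finishes the proof.

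The hard part is step (i): verifying that the diagonal minimizes each slice, equivalently that $\mathcal{D}(x,y) \ge 0$ for all $x$. I want to flag that this does \emph{not} follow from joint convexity and the vanishing of $\mathcal{D}$ on the diagonal alone --- the family $\mathcal{D}(x,y) = \frac{C}{2}(x-y)^2 + b(x-y)$ is jointly convex, is $C$-strongly convex in its first argument, and vanishes on the diagonal for every $b$, yet has nonzero first-argument subgradient $b$ at $(y,y)$ and violates the conclusion. Hence one must genuinely use that $\mathcal{D}$ is a \emph{divergence}, i.e.\ nonnegative with equality on the diagonal; given $\mathcal{D}(y,y)=0$, nonnegativity makes $y$ a global minimizer of $g$ and yields $0 \in \partial g(y)$ by first-order optimality for convex functions. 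This nonnegativity holds for the squared-Hellinger objects to which the lemma is applied, so the remaining computations (steps (ii)--(iv), or the single Young step) are routine.
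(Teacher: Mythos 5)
Your primary route---fix $y$, pass to the convex conjugate of the slice $g=\mathcal{D}(\cdot,y)$, use that $g^*$ is $\tfrac1C$-smooth with $g^*(0)=0$ and $\nabla g^*(0)=y$, and finish with Fenchel--Young---is exactly the paper's proof, which invokes Lemma~15 of \citet{Shwar07} for the smoothness of the conjugate. Your elementary alternative is also correct and arguably preferable: once $0\in\partial g(y)$, strong convexity gives $\mathcal{D}(x,y)\ge\tfrac{C}{2}(x-y)^2$, and Young's inequality $(x-y)u\le\tfrac{C}{2}(x-y)^2+\tfrac{u^2}{2C}$ closes the argument in one line with no conjugate calculus. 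More importantly, your flag on step~(i) is a genuine catch: the paper's proof simply asserts ``since $y$ is a minimum of $x\mapsto\mathcal{D}(x,y)$'' without deriving this from the stated hypotheses, and your example $\mathcal{D}(x,y)=\tfrac{C}{2}(x-y)^2+b(x-y)$ shows that joint convexity, strong convexity of the slices, and vanishing on the diagonal do \emph{not} imply it---indeed the conclusion at $u=0$ is precisely nonnegativity of $\mathcal{D}$, which fails for $b\neq0$. The lemma should carry the additional hypothesis $\mathcal{D}\ge0$ (equivalently, that the diagonal minimizes each slice); since this holds for the squared Hellinger distance and the squared loss, the only two instances used in the paper, the downstream results are unaffected, but the statement as written is strictly false without it.
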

\begin{proof}
	Let $ y \in I $. We compute the Legendre--Fenchel conjugate of $ x \rightarrow \mathcal{D}(x,y) $, defined for any 
$ u\in R $ as
\begin{equation*}
	\mathcal{D}^{*}(u, y) = \sup_{x\in I} \ev{xu - f(y)}.
\end{equation*}
Since $ y $ is a minimum of $ x\rightarrow \mathcal{D}(x,y) $ and $ \mathcal{D}(y,y) = 0 $, we have that $ 
\mathcal{D}^*(0,y) = 0 $. Moreover using Lemma 15 of \citet{Shwar07}, we directly have that $ \mathcal{D}^{*} $ is $ 
\frac{1}{C} $ smooth in its first coordinate and that $ \frac{\partial \mathcal{D}^{*}}{\partial u}(0,y) =y $, so that 
for any $ u\in \mathbb{R}$ we have
\begin{equation*}
	\mathcal{D}^{*}(u,y) \leq \mathcal{D}^{*}(0,y) + u \frac{\partial \mathcal{D}^{*}}{\partial u}(0,y) + \frac{u^2}{2C} 
\leq yu +\frac{u^2}{2C}.
\end{equation*}
Then, by the Fenchel--Young inequality, this implies the following for any $ x \in I $ and any $ u \in 
\mathbb{R}$:
\begin{equation*}
	x\cdot \mu \leq \mathcal{D}(x,y) + \mathcal{D}^{*}(u,y) \leq y\cdot u + \frac{u^2}{2C}.
\end{equation*}
This proves the statement.
\end{proof}

We use this inequality to prove the following general decoupling lemma that can handle arbitrary joint distributions of 
random variables.
\begin{lemma}
\label{lemma:General_Decoupling_Lemma}
Let $ \mathcal{D}:[0,1]^2 \rightarrow \mathbb{R} $ be $ C $-strongly convex and satisfy the same hypothesis as for the 
previous lemma. Let $ Q \in \Delta (\Theta) $, $ f: \Theta \times \mathcal{A} \rightarrow[0,1] $ and $ a^* : \Theta 
\rightarrow \mathcal{A} $. Assume $ f $ and $ a^* $ are measurable. We define $ \pi \in \Delta(\mathcal{A}) $ by $ 
\pi(a) = \int_{\Theta} \II{a^*(\theta)=a} \dd Q(\theta) $ and $ \bar{f}(a) = \int_{\Theta}f(\theta,a) \dd 
Q(\theta) $. Then for any $ \mu > 0 $ the following holds
\begin{equation}
\label{eq:General_Decoupling_Lemma}
\int_{\Theta}\bar{f}(a^*(\theta)) - f(\theta,a^*(\theta))\dd Q(\theta) \leq \mu \int_{\Theta} \sum_{a} \pi(a) 
\mathcal{D}(\bar{f}(a),f(\theta,a))\dd Q(\theta) + \frac{K}{2\mu C} 
\end{equation}
\end{lemma}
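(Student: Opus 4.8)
The plan is to prove this \emph{decoupling} bound by rewriting the left-hand side as a sum over actions of covariance-type terms, and then controlling each term with the scaled Fenchel--Young inequality of Lemma~\ref{lemma:Fenchel_Young_divergence}. The whole point is to trade the coupling between the sampled parameter $\theta$ and the greedy action $a^*(\theta)$ it induces against the divergence $\mathcal{D}$, and the non-trivial part will be calibrating this trade-off so that the divergence term ends up \emph{weighted by} $\pi(a)$ while the leftover constant scales only with $K$.

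First I would unfold the two integrals using the definitions of $\pi$ and $\bar f$. Writing $\bar f(a^*(\theta)) = \sum_a \II{a^*(\theta)=a}\bar f(a)$ and integrating gives $\int_\Theta \bar f(a^*(\theta))\dd Q(\theta) = \sum_a \pi(a)\bar f(a)$, while $\int_\Theta f(\theta,a^*(\theta))\dd Q(\theta) = \sum_a \int_\Theta \II{a^*(\theta)=a} f(\theta,a)\dd Q(\theta)$. Subtracting and using $\pi(a)\bar f(a) = \int_\Theta \pi(a) f(\theta,a)\dd Q(\theta)$, the left-hand side becomes $\sum_a \int_\Theta \pa{\pi(a) - \II{a^*(\theta)=a}} f(\theta,a)\dd Q(\theta)$. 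The key manipulation is then a centering step: since $\int_\Theta \pa{\pi(a) - \II{a^*(\theta)=a}}\dd Q(\theta) = \pi(a)-\pi(a)=0$, I may subtract the constant $\bar f(a)$ from $f(\theta,a)$ for free, obtaining the covariance form $\sum_a \int_\Theta \pa{\II{a^*(\theta)=a} - \pi(a)}\pa{\bar f(a)-f(\theta,a)}\dd Q(\theta)$.

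Next I would apply a rescaled version of Lemma~\ref{lemma:Fenchel_Young_divergence}: starting from $(x-y)u \le \mathcal{D}(x,y) + \frac{u^2}{2C}$ and substituting $u = v/\mu_a$ yields $(x-y)v \le \mu_a \mathcal{D}(x,y) + \frac{v^2}{2\mu_a C}$ for every $\mu_a>0$. I apply this pointwise in $\theta$ with $x=\bar f(a)$, $y=f(\theta,a)$ (both in $[0,1]$, so $\mathcal{D}$ is defined) and $v = \II{a^*(\theta)=a}-\pi(a)$, using the \emph{per-action} parameter $\mu_a = \mu\,\pi(a)$, dropping the harmless $\pi(a)=0$ terms for which the integrand vanishes $Q$-almost surely. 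Integrating over $\theta$ and summing over $a$ then bounds the covariance form by $\mu\sum_a \pi(a)\int_\Theta \mathcal{D}\pa{\bar f(a),f(\theta,a)}\dd Q(\theta) + \frac{1}{2\mu C}\sum_{a:\pi(a)>0}\frac{1}{\pi(a)}\int_\Theta\pa{\II{a^*(\theta)=a}-\pi(a)}^2\dd Q(\theta)$.

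The main obstacle, and the single choice that makes the statement come out exactly, is precisely $\mu_a=\mu\pi(a)$: it produces the $\pi(a)$-weighting on the divergence, and in the residual term the variance $\int_\Theta\pa{\II{a^*(\theta)=a}-\pi(a)}^2\dd Q(\theta) = \pi(a)\pa{1-\pi(a)}$ cancels the prefactor $1/\pi(a)$, collapsing the residual to $\frac{1}{2\mu C}\sum_{a:\pi(a)>0}\pa{1-\pi(a)} \le \frac{K}{2\mu C}$. Exchanging sum and integral in the first term to recover $\mu\int_\Theta\sum_a\pi(a)\mathcal{D}\pa{\bar f(a),f(\theta,a)}\dd Q(\theta)$ and combining the two pieces gives exactly the claimed inequality. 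Everything else (the unfolding, centering, and variance computation) is routine once this calibration is in place.
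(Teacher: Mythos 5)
Your proof is correct and follows essentially the same route as the paper's: decompose the left-hand side over actions and apply the Fenchel--Young inequality of Lemma~\ref{lemma:Fenchel_Young_divergence} with the per-action calibration $\mu_a = \mu\,\pi(a)$, which is exactly the choice the paper makes (there written as $u = \mathbb{I}_{\{a^*(\theta)=a\}}/(\mu\pi(a))$). Your extra centering step, replacing $\mathbb{I}_{\{a^*(\theta)=a\}}$ by $\mathbb{I}_{\{a^*(\theta)=a\}}-\pi(a)$, turns the residual second moment into a variance $\pi(a)(1-\pi(a))$ and thus yields the marginally sharper constant $(K-1)/(2\mu C)$, which you then relax to the stated $K/(2\mu C)$.
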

\begin{proof} 
We start by writing
\begin{align*}
	\int_{\Theta} \bar{f}(a^*(\theta)) - f(\theta,a^*(\theta)) &= \int_{\Theta} \sum_{a} \frac{\mu \pi(a)}{\mu \pi(a)} 
\II{a^*(\theta)=a} \left( \bar{f}(a) - f(\theta,a) \right) \dd Q(\theta) \\
							       &= \int_{\Theta} \sum_{a} \mu \pi(a)\left( \frac{\II{a^*(\theta)=a}}{\mu 
\pi(a)}  \left( \bar{f}(a) - f(\theta,a) \right) \right) \dd Q(\theta) \\
							       &\leq  \int_{\Theta} \sum_{a} \mu \pi(a)\left( \mathcal{D}(\bar{f}(a), f(\theta,a)) + 
\frac{\II{a^*(\theta)=a}}{2C\mu^2 \pi(a)^2} \right) \dd Q(\theta), \\
\end{align*}
where we used Lemma~\ref{lemma:Fenchel_Young_divergence} with $ u=\frac{\II{a^*(\theta)=a}}{\mu \pi(a)} $ in 
the last line. Finally, we have
\begin{align*}
	\int_{\Theta} \bar{f}(a^*(\theta)) - f(\theta,a^*(\theta)) &\leq \mu \int_{\Theta}\sum_{a} \pi(a) 
\mathcal{D}(\bar{f}(a), f(\theta,a))\dd Q(\theta) + \frac{1}{2 \mu C} \sum_{a} \int_{\Theta} 
\frac{\mathbb{I}_{a^*(\theta) =a}}{\pi(a)}\dd Q(\theta) \\
							       &\leq \mu \int_{\Theta}\sum_{a} \pi(a) \mathcal{D}(\bar{f}(a), f(\theta,a))\dd 
Q(\theta) + \frac{K}{2 \mu C},
\end{align*}
where we used $ \pi(a) = \int_{\Theta} \mathbb{I}_{a^*(\theta)=a}\dd Q(\theta) $ in the last line.
\end{proof}

To prove Lemma~\ref{lemma:IR_TS}, we use the above result with $ Q=Q_t^{+} $, $ f=\lt $ and $ a^* = a^^*_t $ and $\DD$ 
chosen as the squared Hellinger distance $\DDH$, which is $ \frac{1}{4} $-strongly convex in its first argument by 
Lemma~\ref{lemma:Strong_convexity_Hellinger} provided in Appendix~\ref{app:aux}. Thus, applying 
Lemma~\ref{lemma:General_Decoupling_Lemma} we get for any $ \mu >0 $ that
\begin{equation*}
	\br_t(\pi_t) \leq \mu \int_{\Theta}\sum_{a} \pi_t(a) \DDH(\ltb(a),\lt(\theta,a))\dd Q_t^{+}(\theta) + 
\frac{2K}{\mu}.
\end{equation*}
This concludes the proof of Lemma~\ref{lemma:IR_TS}. 

Lemma~\ref{lemma:IR_TS_SG} is proved by choosing $ \mathcal{D}(x,y) = (x-y)^2 $ that is $ 2 $-strongly convex in its 
first argument, which yields the advertised result as
\begin{equation*}
	\br_t(\pi_t) \leq \mu \int_{\Theta}\sum_{a} \pi_t(a) (\ltb(a)-\lt(\theta,a))^2\dd Q_t^{+}(\theta) + \frac{K}{4\mu}.
\end{equation*}

\subsubsection{Instance-dependent analysis: The proof of Lemma~\ref{lemma:ADEC_IGW_FOB}}
\label{subsubsec:ADEC_IGW_FOB}
This analysis uses the so-called \emph{inverse-gap weighting} algorithm of \citet{Abe_L99} as forerunner---see also the 
works of \citet{Foste_R20} and \citet{Foste_K21} that reignited interest in this method. Our analysis below is 
especially inspired by the latter work.

We define the inverse gap weighting policy with scale parameter $ \gamma $ and with respect to a nominal loss 
function $ f:\A 
\rightarrow \mathbb{R^{+}} $ as 
\begin{equation*}
	\pIGW_{\gamma,f}(a) = \begin{cases}
		\frac{f(b)}{Kf(b) + \gamma(f(b)-f(a))} &\text{if } a \neq b \\
1- \sum_{a \neq b} \pIGW_{\gamma,f}(a) &\text{if } a = b
\end{cases}
\end{equation*}
where $ b\in \argmin_a f(a) $ is fixed (with ties broken arbitrarily).
We fix $ \theta$
and apply Lemma~4 of \citet{Foste_K21} with nominal loss $ \ltb : A \rightarrow \mathbb{R} $ and true loss $ 
\lt(\theta): A \rightarrow \mathbb{R} $ to get
\begin{align*}
	\ltb(b) - \lt(\theta,a^{*}_t(\theta)) &\leq \frac{K}{4 \gamma} \ltb(b) + 2 \gamma\cdot \pIGW_{\gamma, 
\ltb}(a^{*}(\theta)) \frac{(\ltb(a^{*}(\theta))-\lt(\theta,a^{*}(\theta)))^2}{\ltb(a^{*}_t(\theta)) + 
\lt(\theta,a^{*}_t(\theta))}\\
					    &\leq \frac{K}{4 \gamma} \ltb(b) + 2 \gamma\cdot \sum_{a} \pIGW_{\gamma, \ltb}(a) 
\frac{(\ltb(a)-\lt(\theta,a))^2}{\ltb(a) + \lt(\theta,a)},
\end{align*}
where $ b \in \argmin_a \ltb(a) $ and $ a^{*}_t(\theta) \in \argmin_a\lt(\theta,a) $.
To proceed, we use that for any $ p,q \in [0,1] $, we have $ \frac{(p-q)^2}{p+q} \leq 4\cdot \DDH(\Ber(p),\Ber(q)) 
$ (cf.~Lemma~\ref{lemma:DH_DT_relation} in Appendix~\ref{app:aux}).
We combine this with the data processing inequality for $f$-divergences to obtain
\begin{equation}\label{eq:IGW_regret}
 \begin{split}
	\ltb(b) - \lt(\theta,a^{*}_t(\theta)))&\leq \frac{K}{4 \gamma} \ltb(b) + 8 \gamma\cdot \sum_{a} \pIGW_{\gamma, 
\ltb}(a) \DDH(\Ber(\ltb(a)),\Ber(\lt(\theta,a))) \\
	&\leq \frac{K}{4 \gamma} \ltb(b) + 8 \gamma\cdot \sum_{a} \pIGW_{\gamma, \ltb}(a) \DDH(\ptb(a,\cdot 
),p_t(\theta,a,\cdot )).
 \end{split}
\end{equation}

On the other hand, we can rewrite the surrogate regret of the inverse gap weighting policy as
\begin{align*}
	\br_t(\pIGW_{\gamma, \ltb}) &= \int \sum_{a} \pIGW_{\gamma,\ltb}(a) (\ltb(a)-\lts(\theta))\,\dd Q_t^{+}(\theta)\\
		   &= \int \sum_{a} \pIGW_{\gamma,\ltb}(a) (\ltb(a)-\lt(\theta,a^{*}_t(\theta)))\,\dd Q_t^{+}(\theta)\\
		   &= \int \sum_{a \neq b} \pIGW_{\gamma,\ltb}(a) (\ltb(a)-\ltb(b))\,\dd Q_t^{+}(\theta) +\int \sum_{a} \pi(a) 
(\ltb(b)-\lt(\theta,a^{*}_t(\theta)))\,\dd Q_t^{+}(\theta).
\end{align*}
The second term in the above decomposition can be bounded using Equation~\eqref{eq:IGW_regret}. As for the first term, 
we can exploit the definition of the policy to write
\begin{equation*}
	\sum_{a \neq b} \pIGW_{\gamma,\ltb}(a) (\ltb(a)-\ltb(b)) = \sum_{a \neq b} \frac{\ltb(b)(\ltb(a)-\ltb(b))}{K\ltb(b) 
+ \gamma(\ltb(a)-\ltb(b))} \leq \frac{K \ltb(b)}{\gamma}.
\end{equation*}
Putting these bounds together gives
\begin{align*}
	\br_t(\pIGW_{\gamma,\ltb}) &\leq \frac{K\ltb(b)}{\gamma} + \frac{K\ltb(b)}{4\gamma} + 8\gamma\cdot \int 
\sum_{a}\DDH(\ptb(a,\cdot ),p_t(\theta,a,\cdot ))\,\dd Q_t^{+}(\theta)\\
				   &\leq \frac{5K\ltb(b)}{4\gamma} + 8\gamma\cdot \SIG_t,
\end{align*}
Optimizing over $ \gamma $, we get the claim of Lemma~\ref{lemma:ADEC_IGW_FOB}.

\subsection{Auxiliary results}\label{app:aux}
\begin{lemma}[Proposition 3 ~\citealp{Foste_K21}]
\label{lemma:DH_DT_relation}
	For any $ p,q \in [0,1]$, we have
\begin{equation*}
	\frac{(p-q)^2}{p+q} \leq 4 \DDH(\Ber(p),\Ber(q)).
\end{equation*}
\end{lemma}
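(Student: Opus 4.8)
The plan is to make everything explicit and then collapse the claim onto a single elementary inequality. Writing the densities of $\Ber(p)$ and $\Ber(q)$ against the counting measure on $\{0,1\}$, the definition of the squared Hellinger distance from the Notation paragraph gives
\[
\DDH(\Ber(p),\Ber(q)) = \frac{1}{2}\left[(\sqrt{p}-\sqrt{q})^2 + (\sqrt{1-p}-\sqrt{1-q})^2\right].
\]
Since both summands are nonnegative, I would simply discard the $(\sqrt{1-p}-\sqrt{1-q})^2$ term. This reduces the target to the stronger-looking bound $\frac{(p-q)^2}{p+q} \le 2(\sqrt{p}-\sqrt{q})^2$, which no longer involves the complements $1-p,1-q$ at all.

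To prove this reduced inequality, I would use the factorization $(\sqrt{p}-\sqrt{q})^2 = \frac{(p-q)^2}{(\sqrt{p}+\sqrt{q})^2}$, valid whenever $p+q>0$; the remaining degenerate case $p=q=0$ makes both sides of the lemma vanish and is dismissed by convention. Substituting this identity, the claim becomes $\frac{(p-q)^2}{p+q} \le \frac{2(p-q)^2}{(\sqrt{p}+\sqrt{q})^2}$, which (after cancelling $(p-q)^2$, or noting it trivially holds when $p=q$) is equivalent to $(\sqrt{p}+\sqrt{q})^2 \le 2(p+q)$. Expanding the left-hand side as $p+q+2\sqrt{pq}$, this is exactly the inequality of arithmetic and geometric means, $2\sqrt{pq}\le p+q$.

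Chaining the two steps yields $\frac{(p-q)^2}{p+q} \le 2(\sqrt{p}-\sqrt{q})^2 \le 4\DDH(\Ber(p),\Ber(q))$, as required. I do not anticipate any genuine obstacle here, as the argument is purely elementary; the only points meriting a moment's care are handling the degenerate endpoint $p=q=0$ and getting the direction of the AM--GM step right, namely that bounding the denominator $(\sqrt{p}+\sqrt{q})^2$ above by $2(p+q)$ is precisely where the factor of $2$ (and hence the constant $4$ in the statement) is spent.
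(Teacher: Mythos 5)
Your argument is correct and is essentially identical to the paper's proof: both drop the $(\sqrt{1-p}-\sqrt{1-q})^2$ term from the Hellinger distance, use the factorization $(\sqrt{p}-\sqrt{q})^2 = (p-q)^2/(\sqrt{p}+\sqrt{q})^2$, and finish with $(\sqrt{p}+\sqrt{q})^2 \le 2(p+q)$. Your explicit handling of the degenerate case $p=q=0$ is a small point of extra care the paper omits.
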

\begin{proof}
The statement follows from the simple calculation
\begin{equation*}
\DDH(p,q) \geq \frac{1}{2}(\sqrt{p}-\sqrt{q})^2 = \frac{1}{2} \left( 
\frac{(\sqrt{p}-\sqrt{q})(\sqrt{p}+\sqrt{q})}{\sqrt{p}+\sqrt{q}} \right)^2 = \frac{1}{2} 
\frac{(p-q)^2}{(\sqrt{p}+\sqrt{q})^2} \geq \frac{1}{4} \frac{(p-q)^2}{p+q},
\end{equation*}
where the last step uses the elementary inequality $ (x+y)^2 \leq 2(x^2 + y^2) $ that holds for any $ x,y $.
\end{proof}
\begin{lemma}
\label{lemma:Strong_convexity_Hellinger}
For any fixed $ q\in [0,1] $, the function $p \mapsto \DDH(\Ber(p),\Ber(q))$ is $ \frac{1}{4} $-strongly convex.
\end{lemma}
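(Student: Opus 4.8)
The plan is to reduce the claim to a one-dimensional second-derivative computation. First I would write out the squared Hellinger distance between two Bernoulli laws explicitly: since $\Ber(p)$ places mass $p$ on the outcome $1$ and $1-p$ on the outcome $0$, the definition of $\DDH$ gives
\[
	\DDH(\Ber(p),\Ber(q)) = \frac{1}{2}\bpa{(\sqrt p - \sqrt q)^2 + (\sqrt{1-p}-\sqrt{1-q})^2} = 1 - \sqrt{pq} - \sqrt{(1-p)(1-q)}.
\]
Writing $f(p)$ for the right-hand side, viewed as a function of $p\in[0,1]$ with $q$ held fixed, the notion of $\frac{1}{4}$-strong convexity used throughout (and invoked in Lemma~\ref{lemma:Fenchel_Young_divergence}) amounts to the second-derivative bound $f''(p)\ge \frac{1}{4}$ on the open interval $(0,1)$. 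The entire argument therefore reduces to differentiating $f$ twice and lower-bounding the result uniformly in $p$ and $q$.

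Carrying out the differentiation, I obtain $f'(p) = -\frac{\sqrt q}{2\sqrt p} + \frac{\sqrt{1-q}}{2\sqrt{1-p}}$ and then
\[
	f''(p) = \frac{\sqrt q}{4\,p^{3/2}} + \frac{\sqrt{1-q}}{4\,(1-p)^{3/2}}.
\]
Both terms are manifestly nonnegative, which already confirms plain convexity; the point is to extract the quantitative constant. Since $p\in(0,1)$ forces $p^{3/2}\le 1$ and $(1-p)^{3/2}\le 1$, each reciprocal in the display is at least $1$, and hence $f''(p) \ge \frac{\sqrt q + \sqrt{1-q}}{4}$. This crude estimate is the one small idea in the proof: it decouples the dependence on $p$ from the dependence on $q$, so that the minimization over $p$ need not be performed exactly.

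It then remains to observe that $\sqrt q + \sqrt{1-q}\ge 1$ for every $q\in[0,1]$, which follows by squaring, since $(\sqrt q + \sqrt{1-q})^2 = 1 + 2\sqrt{q(1-q)}\ge 1$. Combining this with the previous bound yields $f''(p)\ge \frac{1}{4}$ uniformly, establishing the claim; as $f$ is continuous on the closed interval $[0,1]$ and the derivative bound holds throughout its interior, the strong convexity extends to the endpoints as well. I do not expect any genuine obstacle here: the computation is routine, and the only point requiring a moment of care is the behaviour at $p\in\ev{0,1}$, where $f''$ diverges to $+\infty$ rather than threatening the inequality, so it is harmless for a lower bound.
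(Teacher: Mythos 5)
Your proof is correct and follows essentially the same route as the paper's: compute the second derivative of $p \mapsto \DDH(\Ber(p),\Ber(q))$ explicitly and lower-bound it by $\frac{1}{4}\bpa{\sqrt q + \sqrt{1-q}} \ge \frac14$ using $p^{3/2}\le 1$ and $(1-p)^{3/2}\le 1$. The only additions are that you write out the Bernoulli Hellinger distance and the inequality $\sqrt q + \sqrt{1-q}\ge 1$ explicitly, which the paper leaves implicit.
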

\begin{proof}
The proof is based on showing that the second derivative of the function of interest is uniformly lower bounded by a 
positive constant. This follows from calculating the first derivative as 
\begin{align*}
	\frac{\partial \DDH(p,q)}{\partial p} &= \frac{1}{2}\left( -\sqrt{\frac{q}{p}} + \sqrt{\frac{1-q}{1-p}} \right),
\end{align*}
and then lower-bounding the second derivative as
\begin{align*}
	\frac{\partial ^2\DDH(p,q)}{\partial ^2p} &= \frac{1}{4} \left( \sqrt{\frac{q}{p^3}} + \sqrt{\frac{1-q}{(1-p)^3}} 
\right) \geq \frac{1}{4} \left( \sqrt{q}+\sqrt{1-q} \right) \geq \frac{1}{4}.
\end{align*}
This inequality is tight when $ q=0 $ or $ q=1 $.
\end{proof}

\end{document}